\newcommand{\m}[1]{\ensuremath{#1}\xspace}
	\newcommand{\lequiv}{\Leftrightarrow}
	\newcommand{\ltrue}{\trval{t}}
	\newcommand{\lfalse}{\trval{f}}
	\newcommand{\ignore}[1]{}
	\newcommand{\namedcomment}[3]{%
		\ifthenelse{\boolean{nocomments}}%
		{}
		{
			\ifthenelse{\boolean{commentmargin}}%
				{ {\color{#3} \marginpar{\color{#3}\sc #2}
\sffamily
#1}  }
				{  {\color{#3} {\sc #2}:
\sffamily
 #1}  }
		}%
	}
\newcommand\setcitation[2]{%
  \csdef{mycommoncitation#1}{#2}}
\newcommand\getcitation[1]{%
  \csuse{mycommoncitation#1}}
\newcommand\mycite[1]{%
      \ifcsname mycommoncitation#1\endcsname%
   \cite{\getcitation{#1}}%
  \else%
    \cite{#1}%
  \fi%
}	
\newcommand\mycitet[1]{%
      \ifcsname mycommoncitation#1\endcsname%
   \citet{\getcitation{#1}}%
  \else%
    \citet{#1}
  \fi%
}
\not \isundefined{\disputationsdatum} 
\not \isundefined{\disputationslokal}}   
  \or \boolean{detectedSTOC} \or \boolean{detectedFOCS}
  \or \boolean{detectedSIAM} \or \boolean{detectedIEEE}
  \or \boolean{detectedPoster}}
\or \boolean{detectedSIAM} 
  \or \boolean{detectedLIPIcs}}
\DeclareMathAlphabet{\mathsfsl}{OT1}{cmss}{m}{sl}
\newcommand{\sumnodisplay}{{\textstyle \sum}}
\newcommand{\ceiling}[1]{\lceil #1 \rceil}
\newcommand{\MAXOFEXPR}[2][]{\max_{#1} \left\{ #2 \right\}}
\newcommand{\MINOFEXPR}[2][]{\min_{#1} \left\{ #2 \right\}}
\newcommand{\Maxofexpr}[2][]{\max_{#1} \bigl\{ #2 \bigr\}}
\newcommand{\Minofexpr}[2][]{\min_{#1} \bigl\{ #2 \bigr\}}
\newcommand{\MAXOFSET}[3][:]%
     {\ifthenelse{\equal{#1}{;}}%
     {\MAXOFEXPR{ #2 \,;\, #3 }}
     {\ifthenelse{\equal{#1}{:}}%
     {\MAXOFEXPR{ #2 \,:\, #3 }}
     {\max \twincommandJN{\left\{}{#2}{\left#1}{\right}{\,#3}{\right\}}}}}
\newcommand{\MINOFSET}[3][:]%
     {\ifthenelse{\equal{#1}{;}}%
     {\MINOFEXPR{ #2 \,;\, #3 }}
     {\ifthenelse{\equal{#1}{:}}%
     {\MINOFEXPR{ #2 \,:\, #3 }}
     {\min \twincommandJN{\left\{}{#2}{\left#1}{\right}{\,#3}{\right\}}}}}
\newcommand{\Maxofset}[3][:]%
     {\ifthenelse{\equal{#1}{;}}%
     {\Maxofexpr{ #2 \,;\, #3 }}
     {\ifthenelse{\equal{#1}{:}}%
     {\Maxofexpr{ #2 \,:\, #3 }}
     {\max \twincommandJN{\bigl\{}{#2}{\bigl#1}{\bigr}{\,#3}{\bigr\}}}}}
\newcommand{\Minofset}[3][:]%
     {\ifthenelse{\equal{#1}{;}}%
     {\Minofexpr{ #2 \,;\, #3 }}
     {\ifthenelse{\equal{#1}{:}}%
     {\Minofexpr{ #2 \,:\, #3 }}
     {\min \twincommandJN{\bigl\{}{#2}{\bigl#1}{\bigr}{\,#3}{\bigr\}}}}}
\DeclareMathOperator{\Expop}{E}
\newcommand{\twincommandJN}[6]%
    {#1#2#3\vphantom{#2#5}\mspace{-2.05mu}#4.#5#6}
\newcommand{\CondExp}[2]%
    {\Expop\twincommandJN{\bigl[}{#1}{\bigl|}{\bigr}{\,#2}{\bigr]}}
\newcommand{\CONDEXP}[2]%
     {\Expop\twincommandJN{\left[}{#1}{\left|}{\right}{\,#2}{\right]}}
\newcommand{\Condprob}[3][]%
    {\Pr_{#1}\twincommandJN{\bigl[}{#2}{\bigl|}{\bigr}{\,#3}{\bigr]}}
\newcommand{\CONDPROB}[3][]%
    {\Pr_{#1}\twincommandJN{\left[}{#2}{\left|}{\right}{\,#3}{\right]}}
\newcommand{\vneighbour}[2][]{N_{#1}({#2})}
\providecommand{\set}[1]{\{ #1 \}}
\providecommand{\Set}[1]{\bigl\{ #1 \bigr\}}
\renewcommand{\set}[1]{\{ #1 \}}
\renewcommand{\Set}[1]{\bigl\{ #1 \bigr\}}
\newcommand{\setdescr}[3][\mid]{\set{ #2 #1 #3 }}
\newcommand{\Setdescr}[3][|]%
     {\ifthenelse{\equal{#1}{;}}%
     {\Set{ #2 \,;\, #3 }}
     {\ifthenelse{\equal{#1}{:}}%
     {\Set{ #2 \,:\, #3 }}
     {\twincommandJN{\bigl\{}{#2\,}{\bigl#1}{\bigr}{\,#3}{\bigr\}}}}}
\newcommand{\SETDESCR}[3][|]%
     {\twincommandJN{\left\{}{#2\,}{\left#1}{\right}{\,#3}{\right\}}}
\newcommand{\Setdescrbrackets}[3][|]%
     {\twincommandJN{\bigl[}{#2}{\bigl#1}{\bigr}{\,#3}{\bigr]}}
\newcommand{\SETDESCRBRACKETS}[3][|]%
     {\twincommandJN{\left[}{#2}{\left#1}{\right}{\,#3}{\right]}}
\newcommand{\setsize}[1]{\lvert#1\rvert}
\newcommand{\intersection}{\cap}
\newcommand{\union}{\cup}
\newcommand{\Union}{\bigcup}
\newcommand{\Land}{\bigwedge}
\providecommand{\lequiv}{\leftrightarrow}
\renewcommand{\lequiv}{\leftrightarrow}
\newcommand{\olnot}[1]{\overline{#1}}
\newcommand{\nvar}{n}
\newcommand{\nclause}{m}
\newcommand{\clwidth}{k}
\newcommand{\randkcnfnclwrepl}[3][\clwidth]%
        {\ensuremath{\mathcal{F}^{#2, #3}_{#1}}}
\newcommand{\randkcnfnclwreplstd}%
        {\randkcnfnclwrepl{\clwidth}{\nvar}{\nclause}}
\newcommand{\complclassformat}[1]%
        {\textrm{\upshape{\textsf{#1}}}\xspace}
\newcommand{\cocomplclass}[1]%
        {\textrm{\upshape{\textsf{co#1}}}\xspace}
\newcommand{\DTIMEadviceclass}[2]%
    {\ensuremath{\complclassformat{DTIME}\bigl(#1\bigr)/{#2}}}
\newcommand{\PCPalph}[5]%
    {\ensuremath{\complclassformat{PCP}_{{#1},{#2}}[{#3}, {#4}, {#5}]}}
\newcommand{\PCP}[4]%
    {\ensuremath{\complclassformat{PCP}_{{#1},{#2}}[{#3}, {#4}]}}
\newcommand{\introduceterm}[1]{{\emph{#1}}}
\newcommand{\eqperiod}{\enspace .}
\newcommand{\eqcomma}{\enspace ,}
\newcommand{\wrt}{with respect to\xspace}
\newcommand{\ie}{i.e.,\ }
\renewcommand{\st}{\errmessage{Please do not use st}}}
\newcommand{\st}{such that\xspace}}
\newcommand{\refsec}[1]{Section~\ref{#1}}
\newcommand{\reftwoths}[2]{Theorems~\ref{#1} and~\ref{#2}}
\newcommand{\refeq}[1]{\eqref{#1}}}
\renewcommand{\refeq}[1]{\eqref{#1}}}
\newcommand\entails\models
\newcommand{\supplementarymaterial}%
{{\color{red}supplementary material 
  with the official   conference version}}
\newcommand{\supplementarymateriallater}%
{{\color{red}supplementary material}}
\newcommand{\vdomrel}{\succ_{G}}
\providecommand{\vneighbour}[1]{N(#1)}
\newcommand{\algnameformat}[1]{\mathsf{#1}}
\newcommand{\algsuffixformat}[1]{\mathrm{#1}}
\newcommand{\ltvert}[1]{\mathit{LT}(#1)}
\newcommand{\gtvert}[1]{\mathit{GT}(#1)}
\newcommand{\cliquesearch}{\algnameformat{MaxCliqueSearch}}
\newcommand{\cliquecurr}{C_{\algsuffixformat{curr}}}
\newcommand{\cliquebest}{C_{\algsuffixformat{best}}}
\newcommand{\cliquec}{C}
\newcommand{\vrem}{V_{\algsuffixformat{rem}}}
\newcommand{\vnew}{V'}
\newcommand{\erem}{E_{\algsuffixformat{rem}}}
\newcommand{\grem}{G_{\algsuffixformat{rem}}}
\newcommand{\colclass}{S}
\newcommand{\domrulelong}{dominance-based strength\-en\-ing\xspace}
\newcommand{\Domrulelong}{Dominance-based strength\-en\-ing\xspace}
\newcommand{\DOMRULELONG}{Dominance-Based Strengthening\xspace}
\newcommand{\domruleshort}{dominance\xspace}
\newcommand{\redrulelong}{redundance-based strength\-en\-ing\xspace}
\newcommand{\Redrulelong}{Redundance-based strength\-en\-ing\xspace}
\newcommand{\REDRULELONG}{Redundance-Based Strengthening\xspace}
\newcommand{\redruleshort}{redundance\xspace}
\newcommand{\uncheckeddeletion}{unchecked deletion\xspace}
\newcommand{\Uncheckeddeletion}{Unchecked deletion\xspace}
\newcommand{\checkeddeletion}{checked deletion\xspace}
\newcommand{\syntimplication}{literal-axiom implication\xspace}
\newcommand{\syntimplies}{literal-axiom-implies\xspace}
\newcommand{\syntimplied}{literal-axiom-implied\xspace}
\newcommand{\fminimal}{\mbox{$\objfunc$-minimal}\xspace}
\newcommand{\anfminimal}{an \fminimal}
\newcommand{\subst}[2]{{{#1}_{\upharpoonright#2}}}
\newcommand{\restrict}[2]{\subst{#1}{#2}}
\newcommand{\baseset}{\m{\mathcal{F}}}
\newcommand{\redundantset}{\m{\mathcal{R}}}
\newcommand{\orderSymbol}{\m{\preceq}}
\newcommand{\better}{\m{\orderSymbol_{\objfunc}}}
\newcommand{\strictlybetter}{\m{\prec_{\objfunc}}}
\newcommand{\orderConstraint}{\m{\mathcal{O}_{\orderSymbol{}}}}
\newcommand{\order}[2]{\m{\orderConstraint(#1,#2)}}
\newcommand{\verifierStateInitial}[1]{(\formf{}, \emptyset{}, \bot{}, \emptyset{})}
\newcommand{\vecu}{\vec{u}}
\newcommand{\vecv}{\vec{v}}
\newcommand{\vecw}{\vec{w}}
\newcommand{\vecx}{\vec{x}}
\newcommand{\vecy}{\vec{y}}
\newcommand{\vecz}{\vec{z}}
\newcommand{\vecend}{n}
\newcommand{\olnotwindex}[2]{\olnot{#1}_{#2}}
\newcommand{\domain}[0]{\textit{dom}}
\newcommand{\litell}{\ell}
\newcommand{\varu}{u}
\newcommand{\varv}{v}
\newcommand{\varw}{w}
\newcommand{\varx}{x}
\newcommand{\vary}{y}
\newcommand{\varz}{z}
\newcommand{\constrc}{C}
\newcommand{\constrd}{D}
\newcommand{\coeffa}{a}
\newcommand{\coeffb}{b}
\newcommand{\coeffw}{w}
\newcommand{\consta}{A}
\newcommand{\dega}{\consta}
\newcommand{\divd}{d}
\newcommand{\formf}{F}
\newcommand{\formg}{G}
\newcommand{\assmntrho}{\rho}
\newcommand{\assmntalpha}{\alpha}
\newcommand{\assmntbeta}{\beta}
\newcommand{\witness}{\omega}
\newcommand{\mapassmnt}{\witness}
\newcommand{\boolval}{b}
\newcommand{\negc}[1]{\neg#1}
\newcommand{\negcwp}[1]{\neg(#1)}
\newcommand{\proofsystemformat}[1]{\textit{#1}\@}
\newcommand{\toolformat}[1]{\textit{#1}}
\newcommand{\rat}{\proofsystemformat{RAT}\xspace}
\newcommand{\drat}{\proofsystemformat{DRAT}\xspace}
\newcommand{\lrat}{\proofsystemformat{LRAT}\xspace}
\newcommand{\grit}{\proofsystemformat{GRIT}\xspace}
\newcommand{\tracecheck}{\proofsystemformat{TraceCheck}\xspace}
\newcommand{\rupname}{\proofsystemformat{RUP}\xspace}
\newcommand{\dsrup}{\proofsystemformat{DSRUP}\xspace}
\newcommand{\veripb}{\toolformat{VeriPB}\xspace}
\newcommand{\kissat}{\toolformat{Kissat}\xspace}
\newcommand\lex{\m{\prec_{\mathit{lex}}}}
\newcommand{\objective}{\m{f_o}}
\newcommand{\implicationalderivation}{implicational derivation\xspace}
\newcommand{\Implicationalderivation}{Implicational derivation\xspace}
\newcommand{\IMPLICATIONALDERIVATION}{Implicational Derivation\xspace}
\providecommand{\synteq}{\,\doteq\,}
\renewcommand{\synteq}{\,\doteq\,}
\newcommand{\synteqnospace}{\doteq}
\newcommand{\cpderives}{\vdash}
\newcommand{\objfunc}{\m{f}}
\renewcommand{\objective}{\objfunc}
\newcommand{\objval}{\m{v}}
\newcommand{\objvalprime}{\m{v'}}
\newcommand{\bestobjval}{v^*}
\newcommand{\objsol}{\alpha}
\newcommand{\evalobj}[2]{\subst{#1}{#2}}
\newcommand{\objwitnesscondition}{\subst{\objfunc}{\witness}\, \leq \objfunc}
\newcommand{\objwitnessconditionset}{\set{\objwitnesscondition}}
\renewcommand{\subst}[2]{#1\!\!\upharpoonright_{#2}}
\renewcommand{\subst}[2]{{#1\!\!\upharpoonright}_{#2}}
\newcommand{\compassmt}[2]{{#1} \compassmtop {#2}}
\renewcommand{\compassmt}[2]{{#2} \circ {#1}}
\newcommand{\compassmtafterassmt}[2]{{#1} \circ {#2}}
\newcommand{\proofsetformat}[1]{\mathcal{#1}}
\renewcommand{\proofsetformat}[1]{\mathfrak{#1}}
\renewcommand{\proofsetformat}[1]{\mathbb{#1}}
\renewcommand{\proofsetformat}[1]{\mathscr{#1}}
\newcommand{\coreset}{\proofsetformat{C}}
\renewcommand{\baseset}{\coreset}
\newcommand{\coreconstraint}{core constraint\xspace}
\newcommand{\derivedset}{\proofsetformat{D}}
\renewcommand{\redundantset}{\derivedset}
\newcommand{\derivedconstraint}{derived constraint\xspace}
\newcommand{\assmntorder}{\preceq}
\newcommand{\lexorder}{\preceq_{\mathrm{lex}}}
\renewcommand{\lex}{\lexorder}
\newcommand{\plexorder}{\proofsetformat{O}_{\!\lexorder}}
\newcommand{\plexorderform}[2]{\plexorder({#1},{#2})}
\newcommand{\plexorderformstd}{\plexorderform{\vecu}{\vecv}}
\newcommand{\ovars}{\vecz}
\newcommand{\pordernot}{\proofsetformat{O}_{\!\assmntorder}}
\newcommand{\pordernotone}{\proofsetformat{O}_{\!\assmntorder^1}}
\newcommand{\pordernottwo}{\proofsetformat{O}_{\!\assmntorder^2}}
\newcommand{\pordernotprime}{\proofsetformat{O}_{\!\assmntorder}'}
\newcommand{\porderform}[2]{\pordernot({#1},{#2})}
\newcommand{\porderformsub}{\porderform{\subst{\ovars}{\mapassmnt}}{\ovars}}
\newcommand{\porderformdom}{\porderform{\subst{\ovars}{\mapassmnt}}{\ovars}}
\newcommand{\porderformdomneg}{\porderform{\ovars}{\subst{\ovars}{\mapassmnt}}}
\newcommand{\porderbot}{\proofsetformat{O}_{\!\bot}}
\newcommand{\pordertop}{\proofsetformat{O}_{\!\top}}
\newcommand{\betterobjval}{\objfunc \leq \objval - 1}
\newcommand{\betterobjvalprime}{\objfunc \leq \objval' - 1}
\newcommand{\betterobjvalset}{\set{\betterobjval}}
\newcommand{\coreunionderived}{\coreset \union \derivedset}
\newcommand{\coreunionobj}{\coreset \union \betterobjvalset}
\newcommand{\coreunionderivedunionobj}%
    {\coreset \union \derivedset \union \betterobjvalset}
\newcommand{\coreunionderivedunionobjnobreak}%
    {\coreset \union \derivedset \union \mbox{$\betterobjvalset$}}
\newcommand{\coreunionderivedunionobjprime}%
    {\coreset \union \derivedset \union \betterobjvalprime}
\newcommand{\coreunionderivedprimeunionobj}%
    {\coreset' \union \derivedset' \union \betterobjvalset}
\newcommand{\coreunionderivedprimeunionobjprime}%
    {\coreset' \union \derivedset' \union \betterobjvalprime}
\newcommand{\pconfstd}%
{\m{(\coreset, \derivedset, \pordernot, \ovars,  \objval)}}
\newcommand{\pconfpreorderchange}%
{(\coreset, \emptyset, \pordernotone, \ovars_1,  \objval)}
\newcommand{\pconfpreorderchangealt}%
{(\coreset, \derivedset, \pordernotone, \ovars_1,  \objval)}
\newcommand{\pconfpostorderchange}%
{(\coreset, \emptyset, \pordernottwo, \ovars_2,  \objval)}
\newcommand{\pconfcdprime}%
{(\coreset', \derivedset', \pordernot, \ovars,  \objval)}
\newcommand{\pconfcprime}%
{(\coreset', \derivedset, \pordernot, \ovars,  \objval)}
\newcommand{\pconfprime}%
{(\coreset', \derivedset', \pordernotprime, \ovars',  \objval')}
\newcommand{\pconforderbotstd}%
{(\coreset, \derivedset, \porderbot, \ovars,  \objval)}
\newcommand{\pconfordertopstd}%
{(\coreset, \derivedset, \pordertop, \ovars,  \objval)}
\newcommand{\pconfaddcore}[1]%
{(\coreset \union {#1}, \derivedset, \pordernot, \ovars,  \objval)}
\newcommand{\pconfaddderived}[1]%
{(\coreset, \derivedset \union {#1}, \pordernot, \ovars,  \objval)}
\newcommand{\pconfaddderivednobreak}[1]%
{(\coreset, \mbox{$\derivedset \union {#1}$}, \pordernot, \ovars,  \objval)}
\newcommand{\pconfneworder}[2]%
{(\coreset, \derivedset, {#1}, {#2},  \objval)}
\newcommand{\pconfinit}%
{(\formf, \emptyset, \pordertop, \emptyset,  \infty)}
\newcommand{\pconfdelete}%
{(\coreset', \emptyset, \pordernot, \ovars,  \objval)}
\newcommand{\pconffinal}%
{(\coreset, \derivedset, \pordernot, \ovars,  \bestobjval)}
\newcommand{\pconfboundupdate}%
{(\coreset, \derivedset, \pordernot, \ovars,  \objval')}
\newtheorem{theorem}{Theorem}
\newtheorem{definition}[theorem]{Definition}
\newtheorem{example}[theorem]{Example}
\newtheorem{proposition}[theorem]{Proposition}
\newtheorem{fact}[theorem]{Fact}
\newtheorem{remark}[theorem]{Remark}
\newcommand{\subderivpi}{\pi}
\newcommand{\symsigma}{\sigma}
\newcommand\perm{\m{\pi}}
\newcommand\assignment{\assmntalpha}
\newcommand\generatorset{\m{G}}
\newcommand\LLconstr{\m{\psi_{LL}}}
\newcommand\LLc{\m{\constrc_{LL}}}
\renewcommand\LLconstr{\m{\psi_{\mathit{LL}}}}
\renewcommand\LLc{\m{\constrc_{\mathit{LL}}}}
\newcommand\breakid{\textsc{BreakID}\xspace}
\renewcommand{\breakid}{\toolformat{BreakID}\xspace}
\newcommand\safe{$(\formf,\objfunc)$-safe\xspace}
\newcommand\safety{$(\formf,\objfunc)$-safety\xspace}
\renewcommand\safe{$(\formf,\objfunc)$-valid\xspace}
\newcommand\weaksafe{weakly $(\formf,\objfunc)$-valid\xspace}
\renewcommand\safety{$(\formf,\objfunc)$-validity\xspace}
\newcommand\weaksafety{weak $(\formf,\objfunc)$-validity\xspace}
\newcommand\shortpaper[1]{%
\ifthenelse{\boolean{briefconfversion}}%
{#1}%
{\ignorespaces}}
\newcommand\longpaper[1]{%
\ifthenelse{\boolean{briefconfversion}}%
{\ignorespaces}%
{{\color{NavyBlue} #1}}} 
\newcommand{\textrefzenodo}%
{All code for our implementations and experiments,
  as well as data and scripts for all plots,
  can be found at
  \url{https://doi.org/10.5281/zenodo.6373986}.\xspace}
\newcommand\amodelimproving{an objective-improving\xspace}
\newcommand\modelimproving{objective-improving\xspace}
\newcounter{authorcount}
\newcommand{\newauthor}[3]{%
\newcounter{#1comment}
\expandafter\newcommand\csname #1comment\endcsname[1]{%
\ifdefined\DONOTINSERTCOMMENTS\relax\else%
\par\noindent
{
\scshape \footnotesize #2's comment
\stepcounter{#1comment}\csname the#1comment\endcsname}:
{\sffamily \itshape
\textcolor{#3}{##1}\par}
\fi}
\stepcounter{authorcount}
\expandafter\edef\csname #1ordinal\endcsname{\theauthorcount}
\expandafter\newcommand\csname theauthor#1\endcsname{%
the \ordinaltoname{\csname #1ordinal\endcsname} author\xspace}
\expandafter\newcommand\csname Theauthor#1\endcsname{%
The \ordinaltoname{\csname #1ordinal\endcsname} author\xspace}
}
\definecolor{airforceblue}{rgb}{0.36, 0.54, 0.66}
\definecolor{amethyst}{rgb}{0.6, 0.4, 0.8}
\definecolor{asparagus}{rgb}{0.53, 0.66, 0.42}
\definecolor{brass}{rgb}{0.71, 0.65, 0.26}
\definecolor{brown}{rgb}{0.59, 0.29, 0.0}
\definecolor{darkolivegreen}{rgb}{0.33, 0.42, 0.18}
\definecolor{darkorange}{rgb}{1.0, 0.55, 0.0}
\definecolor{darkcyan}{rgb}{0, 0.5, 0.5}
\renewcommand\eqperiod{\,.}
\renewcommand\eqcomma{\,,}
\newcommand{\citet}[1]{\citeA{#1}\xspace}
\begin{document}

\title{Certified Dominance and Symmetry Breaking for Combinatorial Optimisation}

\author{\name  Bart Bogaerts \email bart.bogaerts@vub.be \\
       \addr Vrije Universiteit Brussel, Brussels, Belgium
       \AND
       \name Stephan Gocht \email stephan.gocht@cs.lth.se \\
       \addr Lund University, Lund, Sweden\\
       University of Copenhagen, Copenhagen, Denmark
       \AND
       \name Ciaran McCreesh \email Ciaran.McCreesh@glasgow.ac.uk \\
       \addr University of Glasgow, Glasgow, UK
     		\AND
     	\name Jakob Nordstr\"om \email jn@di.ku.dk \\ 
      \addr        University of Copenhagen, Copenhagen, Denmark\\
       Lund University, Lund, Sweden
    }


\maketitle

\begin{abstract}

Symmetry and dominance breaking can be crucial for solving hard
combinatorial search and optimisation problems,
but the correctness of these techniques sometimes relies on subtle
arguments.  For this reason, it is desirable to produce efficient,
machine-verifiable certificates that solutions have been computed
correctly.
Building on the cutting planes proof system, we develop a
certification method  for optimisation problems 
in which symmetry and dominance breaking 
is
easily expressible.
Our experimental evaluation demonstrates that we can efficiently verify 
fully general symmetry breaking in Boolean satisfiability (SAT) solving, 
thus providing, for the first time, a unified method to certify a
range of advanced SAT techniques that also includes 
cardinality and parity (XOR) reasoning.
In addition, we apply our method to maximum clique solving and
constraint programming as a proof of concept that the approach applies
to a wider range of combinatorial problems.

\end{abstract}


\providecommand{\statusjn}{\textbf{Status of next section (Jakob's assessment/guess):}\xspace}

%
%

\section{Introduction}
\label{sec:introduction}

Symmetries pose a challenge when solving hard combinatorial problems.
As an illustration of this, 
consider the Crystal Maze
puzzle\footnote{\url{https://theconversation.com/what-problems-will-ai-solve-in-future-an-old-british-gameshow-can-help-explain-49080}}
shown in \cref{crystalmaze}, which is often used in introductory constraint modelling courses. A
human modeller might notice that the puzzle is the same
after flipping vertically, 
and could introduce the constraint $A < G$ to eliminate
this symmetry.
Or, they may notice 
that flipping horizontally induces a symmetry, 
which could be broken with $A <
B$.
Alternatively, they might spot that the \emph{values} are symmetrical, and
that we can interchange $1$ and $8$, $2$ and $7$, and so on; this can be
eliminated by saying that $A \le 4$. In each case a constraint is being added that preserves
satisfiability overall, but that restricts a solver to finding (ideally) just one witness from each
equivalence class of solutions---the hope is that this will improve solver performance.
However, although we may be reasonably sure that any of these three constraints is correct
individually, are combinations of these constraints valid simultaneously? What if we had said $F
< C$ instead
\mbox{of $A < B$?}  
And what if we could use numbers more than once? 
Getting symmetry elimination constraints right can be error-prone even
for experienced modellers.  And when dealing with larger problems with
many constraints and interacting symmetries it can be hard to tell
whether a problem instance is genuinely unsatisfiable, or was made so by an
incorrectly added symmetry breaking constraint.

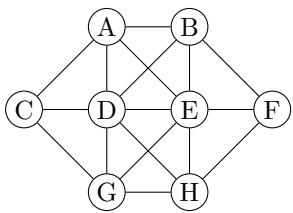
\begin{figure}[t] 
  \begin{center}\begin{tikzpicture}%
      [scale=1.1]
        \node [draw, circle, inner sep=1pt] (A) at (0, 0) { \phantom{X} }; \node at (A) { \small A };
        \node [draw, circle, inner sep=1pt] (B) at (1, 0) { \phantom{X} }; \node at (B) { \small B };
        \node [draw, circle, inner sep=1pt] (C) at (-1, -1) { \phantom{X} }; \node at (C) { \small C };
        \node [draw, circle, inner sep=1pt] (D) at (0, -1) { \phantom{X} }; \node at (D) { \small D };
        \node [draw, circle, inner sep=1pt] (E) at (1, -1) { \phantom{X} }; \node at (E) { \small E };
        \node [draw, circle, inner sep=1pt] (F) at (2, -1) { \phantom{X} }; \node at (F) { \small F };
        \node [draw, circle, inner sep=1pt] (G) at (0, -2) { \phantom{X} }; \node at (G) { \small G };
        \node [draw, circle, inner sep=1pt] (H) at (1, -2) { \phantom{X} }; \node at (H) { \small H };
        \draw (A) -- (B);
        \draw (A) -- (C);
        \draw (A) -- (D);
        \draw (A) -- (E);
        \draw (B) -- (D);
        \draw (B) -- (E);
        \draw (B) -- (F);
        \draw (C) -- (D);
        \draw (C) -- (G);
        \draw (D) -- (E);
        \draw (D) -- (G);
        \draw (D) -- (H);
        \draw (E) -- (F);
        \draw (E) -- (G);
        \draw (E) -- (H);
        \draw (F) -- (H);
        \draw (G) -- (H);
    \end{tikzpicture}\end{center}
    \caption{The Crystal Maze puzzle. Place numbers 1 to 8 in the
    circles, with every circle getting a different number, 
    so that adjacent circles do not
    have consecutive numbers.}\label{crystalmaze}
\end{figure}

Despite these difficulties, symmetry elimination using both manual and automatic techniques has been
key to many successes across modern combinatorial optimisation paradigms such as constraint
programming (CP) \cite{GSVW14Future}, Boolean satisfiability (SAT)~%
solving~\cite{Sakallah21SymmetrySatisfiabilityPlusCrossref},
and mixed-integer programming (MIP) \cite{AW13MIP}. 
As these
optimisation technologies are increasingly being used for high-value and life-affecting
decision-making processes, it becomes vital that we can trust their
outputs---and unfortunately, current solvers do not always produce
correct answers 
\cite{BLB10AutomatedTesting,CKSW13Hybrid,AGJMN18Metamorphic,GSD19SolverCheck,BMN22CPtutorial}.


The most promising way to address 
this problem 
of correctness 
appears to be to use
\emph{certification},
or \emph{proof logging},
where a solver
must produce an efficiently machine-verifiable certificate that the 
answer to the problem was computed correctly
\cite{ABMRS11IntroCertifyingAlgo,MMNS11CertifyingAlgorithms}.
This approach has been successfully used in the
SAT community, 
which has developed
numerous proof logging formats such as
\rupname~\cite{GN03Verification},
\tracecheck~\cite{Biere06TraceCheck},
\drat~\cite{HHW13Trimming,HHW13Verifying,WHH14DRAT},
\grit~\cite{CMS17EfficientCertified}, and
 \lrat~\cite{CHHKS17EfficientCertified}.
However, currently used methods 
work only for decision problems, and do not support
the full range of SAT solving techniques, let alone CP and MIP
solving.
As a case in point,
there is no efficient proof logging for symmetry breaking, except for
limited cases with small symmetries which can interact only in simple ways
\cite{HHW15SymmetryBreaking}. 
\citet{TD20CertifyingSymmetry} recently proposed a proof logging method
\dsrup for symmetric learning of variants of derived clauses,
but this format does not support symmetry breaking (in the sense just
discussed) and is also inherently unable to support pre- and
inprocessing techniques, which are crucial in
state-of-the-art SAT solvers.


\subsection{Our Contribution}

In this work, we develop a proof logging method 
for \emph{optimisation} problems---i.e., problems where we are given
both a formula~$\formf$ and an objective
function~$\objfunc$ to be optimised by some assignment
satisfying~$\formf$---that  can deal with 
\emph{dominance}, a generalization of symmetry.
Dominance breaking starts from the observation that we can strengthen
$\formf$ by imposing 
an additional constraint~$\constrc$
if every solution of $\formf$ that does not satisfy~$\constrc$ is dominated by another solution of $\formf$. 
This technique 
is used in many fields of combinatorial optimisation 
\cite{%
Walsh06GeneralSymmetry,Walsh12Symmetry,GPG06Symmetry,MP16Finding
,DBLP:journals/eor/JougletC11
,GKS11ComplexOptimization
,BSU18Branch-and-price
,DBLP:journals/transci/HoogeboomDLV20
,BP97ConstraintPropagation,DH02ProjectSceduling
}.
And even if we are given just a decision problem for a
formula~$\formf$, we can still use this framework 
by inventing an objective  function minimizing the lexicographic
order of assignments, and then do symmetry breaking by adding
dominance constraints with respect to this order.

The core idea 
to make our method produce efficiently verifiable proofs
is to have it present an 
\emph{explicit construction} of
a
dominating solution, so that a
verifier can check that this construction strictly improves the
objective value and preserves satisfaction of~$\formf$.
This constructed solution might itself be dominated, and hence not
satisfy~$\constrc$, but since the objective 
value   
decreases with every
application of the construction,
we can be sure, without performing the repeated process, 
that it would be guaranteed to eventually terminate.  
Importantly, 
verifying the correctness of adding such a constraint~$\constrc$
does not require the construction of an actual assignment satisfying~$\constrc$,
and can 
be performed efficiently even when
multiple constraints are to be added.
This resolves a practical issue with earlier approaches like that of
\citet{HHW15SymmetryBreaking}.
Such approaches
struggle with large or overlapping symmetries,
because adding a new symmetry breaking constraint might make it
necessary to re-break previously added constraints, and to understand
precisely how the different symmetries interact in order to be able to
do so. With our method, we never need to revisit previously broken
symmetries.


In addition,
although we do not develop this direction in the current paper,
it should   also
be noted that a quite intriguing feature of our method
is that we could
break symmetries of
a problem with respect to a
higher-order representation in pseudo-Boolean form
in a provably correct way,
even if the
encoding in the  conjunctive normal form (CNF) used by SAT solvers is
not symmetric.
%
If the solver is given a problem in pseudo-Boolean representation,
then it could add symmetry breaking constraints based on this
pseudo-Boolean representation, then translate the PB representation to
CNF in a certified way~\cite{GMNO22CertifiedCNFencodingPB}, and
finally produce a proof that the combination of all of these
constraints is a valid encoding of the original problem.
Following preliminaries in \cref{sec:prelims}, we describe 
in full detail   in \cref{sec:proof-system}
our dominance-based method of reasoning and prove that it is sound.

We have developed a proof format and  verifier
for this proof logging method    
by extending 
the tool  
\veripb~\cite{EGMN20Justifying,GN21CertifyingParity,GMN20SubgraphIso,GMMNPT20CertifyingSolvers}
with dominance reasoning.
The pseudo-Boolean  constraints and cutting planes proof
system \cite{CCT87ComplexityCP} used by \veripb 
turn out to be  quite 
convenient to
express and reason with dominance
constraints,
and moreover also make
it possible to certify 
cardinality and parity (XOR) reasoning
\cite{GN21CertifyingParity}, two other advanced 
SAT solving   
techniques which previous 
proof logging methods have not been able to support
efficiently.

After introducing the proof system
that we have developed as the foundation of our proof logging method,
we exhibit three applications that have
not previously admitted efficient
certification, and demonstrate that our  new method
can support simple, practical  proof logging in each case.%
\footnote{All code for our implementations and experiments,
as well as data and scripts for all plots,
can be found in the repository~\cite{BGMN22DominanceData}.}
First, in \cref{sec:symmetrybreaking}, we
demonstrate
that, by
enhancing the \breakid tool for SAT
solving~\cite{DBBD16ImprovedStatic} with \veripb proof logging, 
we can cover the entire solving toolchain when symmetries are
involved. 
We
show in full generality, and for the
first time, that proof logging is practical by running experiments on
SAT competition benchmarks.
Second, in \cref{sec:cpProblems}, we revisit the Crystal Maze example
and
describe
a tool that provides proof logging 
for the kind of symmetry breaking constraints discussed above.
Third, in \cref{sec:max-clique}, we discuss 
how adding a rule for deriving dominance breaking constraints
can be used  to support vertex domination reasoning in a 
maximum clique solver.
We conclude 
the paper
proper  
with a brief discussion of future research directions
in \cref{sec:conclusion}.
\cref{sec:sym-appendix}
contains a worked-out example of proof logging for symmetry breaking.

\subsection{Publication History}
An extended abstract of this paper was presented at the 
\emph{36th AAAI Conference on Artificial Intelligence}~\cite{BGMN22Dominance}.
The current manuscript extends the previous work with full proofs of
all formal claims, and includes a more detailed exposition of the
proof system and
proof logging  
applications.


\section{Preliminaries}
\label{sec:prelims}

Let us start with a brief review of 
some standard material, referring the reader  to, 
e.g.,~\citet{BN21ProofCplxSATplusCrossref}   for more details.
A \emph{literal}~$\litell$ over a Boolean variable~$\varx$
is $\varx$ itself 
or its negation
\mbox{$\olnot{\varx} = 1 - \varx$},
where variables take values~$0$ (false) or $1$~(true).
A \emph{pseudo-Boolean (PB) constraint}
is a \mbox{$0$--$1$} linear inequality
\begin{equation}
  \label{eq:normalized}
  \constrc
  \synteq
  \sumnodisplay_i \coeffa_i \litell_i \geq \dega
  \eqcomma
\end{equation}
where $\coeffa_i$ and~$\dega$ are integers
(and $\synteqnospace$ denotes syntactic equality). 
We can assume without
loss of generality that
pseudo-Boolean
constraints are 
\emph{normalized};
i.e., that all literals $\litell_i$ are over distinct variables and that the
\emph{coefficients}~$\coeffa_i$ and the \emph{degree (of falsity)}~$\dega$
are non-negative, 
but most of the time we will not need this. 
Instead, we will write PB constraints  in more relaxed form as
$
\sum_i \coeffa_i \litell_i 
\geq 
\dega + \sum_j \coeffb_j \litell_j 
$
or
$
\sum_i \coeffa_i \litell_i 
\leq 
\dega + \sum_j \coeffb_j \litell_j 
$
when convenient,
or even use equality
$
\sum_i \coeffa_i \litell_i = \dega
$
as syntactic sugar for the pair of inequalities
$
\sum_i \coeffa_i \litell_i \geq \dega 
$
and
$
\sum_i -\coeffa_i \litell_i \geq -\dega
$,
assuming that all constraints are implicitly 
normalized 
if needed.

The \emph{negation}
$\negc{\constrc}$ of the constraint~$\constrc$ in 
\refeq{eq:normalized} is 
(the normalized form of)   
\begin{equation}
  \label{eq:negation}
  \negc{\constrc}
  \synteq
  \sumnodisplay_i -\coeffa_i \litell_i \geq -\dega + 1
  \eqperiod
\end{equation}
A \emph{pseudo-Boolean formula} is a conjunction
$\formf \synteq \Land_j \constrc_j$
of PB constraints,
which we can also think of as the set
$\Union_j \set{\constrc_j}$ of constraints in the formula,
choosing whichever viewpoint seems most convenient.
Note that a \emph{(disjunctive) clause}
$\litell_1 \lor \cdots \lor \litell_k$
is equivalent to the 
pseudo-Boolean  
constraint
\mbox{$\litell_1 + \cdots + \litell_k \geq 1$}, so formulas in
\emph{conjunctive normal form (CNF)} are special cases of PB formulas.

A \emph{(partial) assignment} is a (partial) function from variables
to $\set{0,1}$. A partial assignment is also referred to as a
\emph{restriction}. A  \emph{substitution}  (or \emph{affine restriction})
can also map variables to literals.
We extend an assignment or substitution~$\assmntrho{}$ from variables
to literals in the natural way by respecting the meaning of negation,
and  for literals~$\litell$ over variables~$\varx$ not in the domain
of~$\assmntrho$,  
denoted $\varx{} \not \in \domain(\assmntrho{})$,
we use the convention $\assmntrho(\litell) = \litell$.
(That is, we can consider all assignments and substitution to be
total, but to be the identity outside of their specified domains.
Strictly speaking, we also require that all substitutions be defined
on the truth constants~$\set{0,1}$ and be the identity on these
constants.)
We sometimes write $\varx \mapsto \boolval$ when
$\assmntrho(\varx) = \boolval$, for $\boolval$ 
a literal or 
truth value,
to specify parts of~$\assmntrho$
or when~$\assmntrho$ is clear from context. 

%
%

%

We write $\compassmt{\witness}{\assmntrho}$
to denote the composed substitution resulting from applying
first~$\witness$ and then~$\assmntrho$, i.e.,
$\compassmt{\witness}{\assmntrho}(\varx) =
\assmntrho(\witness(\varx))$.
As an example, for
$
\witness =
\set{
\varx_1 \mapsto 0,
\varx_3 \mapsto \olnot{\varx}_4,
\varx_4 \mapsto \varx_3
}
$
and
$
\assmntrho = 
\set{
\varx_1 \mapsto 1,
\varx_2 \mapsto 1,
\varx_3 \mapsto 0,
\varx_4 \mapsto 0
}
$
we have
$
\compassmt{\witness}{\assmntrho}
=
\set{
\varx_1 \mapsto 0,
\varx_2 \mapsto 1,
\varx_3 \mapsto 1,
\varx_4 \mapsto 0
}
$.
Applying $\witness$ to a constraint~$\constrc$ as
in~\refeq{eq:normalized}
yields 
the \emph{restricted} constraint
\begin{equation}
  \label{eq:restricted-constraint}
  \restrict{\constrc}{\witness} \synteq
  \sumnodisplay_{i} \coeffa_i \witness(\litell_i) 
  \geq
  \dega
\eqcomma
\end{equation}
substituting literals or values as specified by~$\witness$.
For a formula~$\formf$ we define
$
\restrict{\formf}{\witness}
\synteq
\Land_j  \restrict{\constrc_j}{\witness}
$.

Since we will sometimes have to make fairly elaborate use of
substitutions, let us discuss some further notational conventions. 
If $\formf$ is a formula over variables
$\vecx = \set{\varx_1, \ldots, \varx_m}$,
we can write
$\formf(\vecx)$ when we want to stress the set of variables over
which~$\formf$ is defined.
For a substitution $\witness$ with domain (contained in)~$\vecx$,
the notation
$\formf\bigl( \restrict{\vecx}{\witness} \bigr)$
is understood to be
a synonym of
$\restrict{\formf}{\witness}$.
For the same formula~$\formf$ and
$\vecy = \set{\vary_1, \ldots, \vary_m}$,
the notation
$\formf(\vecy)$
is syntactic sugar for
$\restrict{\formf}{\witness}$
with $\witness$ denoting the substitution (implicitly) defined by
\mbox{$\witness(\varx_i) = \vary_i$} for 
$i = 1, \ldots, n$.    
Finally, for a formula
$\formg = \formg(\vecx,\vecy)$
over $\vecx \union \vecy$
and substitutions
$\assmntalpha$ and $\assmntbeta$  defined on
$\vecz = \set{\varz_1, \ldots, \varz_n}$
(either of which could be the identity), 
the notation
$\formg(\restrict{\vecz}{\assmntalpha}, \restrict{\vecz}{\assmntbeta})$
should be understood  as
$\restrict{\formg}{\witness}$ for $\witness$
defined by
$\witness(\varx_i) = \assmntalpha(\varz_i)$
and
$\witness(\vary_i) = \assmntbeta(\varz_i)$
for
$i = 1, \ldots, n$.    

The (normalized) constraint~$\constrc$
in~\refeq{eq:normalized}
is \emph{satisfied} by~$\assmntrho$
if
$\sum_{\assmntrho(\litell_i) = 1} \coeffa_i \geq \dega$.
%
%
A
pseudo-Boolean
formula~$\formf$ is satisfied by~$\assmntrho$ if all
constraints in it are, 
in which case it is \emph{satisfiable}.
If there is no satisfying assignment, 
$\formf$~is
\emph{unsatisfiable}. Two formulas are
\emph{equisatisfiable} if they are both
satisfiable or both unsatisfiable.
In this paper, we 
also consider optimisation problems, where in addition
to~$\formf$ we are given an integer linear objective function
\mbox{$
\objective
\synteq
\sum_i \coeffw_i \litell_i
$}
and
the task is to find an assignment that satisfies~$\formf$
and minimizes~$\objective$. (To deal with maximization problems we can
just negate the objective function.)

\emph{Cutting planes}~\cite{CCT87ComplexityCP}
is a method for iteratively deriving constraints~$\constrc$
from a pseudo-Boolean formula~$\formf$.
We write
$\formf \cpderives \constrc$
for any constraint~$\constrc$ derivable as follows.
Any \emph{axiom constraint}
$\constrc \in \formf$
is trivially derivable,
as is any \emph{literal axiom}
$\litell \geq 0$.
If
$\formf \cpderives \constrc$ and
$\formf \cpderives \constrd$,
then any positive integer
\emph{linear combination} of 
$\constrc$ and $\constrd$ 
is derivable.
Finally, from a constraint in normalized form
\mbox{$\sum_i \coeffa_i \litell_i \geq \dega$}
we can use
\emph{division} by a positive integer $\divd$ to derive
\mbox{$\sum_i \ceiling{\coeffa_i / \divd} \litell_i \geq \ceiling{\dega / \divd}$},
dividing and rounding up the degree and coefficients.
For a set of
pseudo-Boolean  
constraints~$\formf'$ we
write
$\formf \cpderives \formf'$
if
$\formf \cpderives \constrc$
for all
$\constrc \in \formf'$.


For 
pseudo-Boolean  
formulas $\formf$, $\formf'$ and constraints $\constrc$,
$\constrc'$, we say that
$\formf$ \emph{implies} or \emph{models}~$\constrc$,
denoted $\formf \models \constrc$, if
any  assignment satisfying~$\formf$ also satisfies~$\constrc$,
and write $\formf \models \formf'$ if
$\formf \models \constrc'$  for all $\constrc' \in \formf'$.
It is easy to see that if
$\formf \cpderives \formf'$
then
$\formf \models \formf'$,
and so
$\formf$
and
$\formf \land \formf'$
are equisatisfiable.
A piece of non-standard terminology that will be convenient for us is
that we will say that 
a constraint~$\constrc$ 
\emph{\syntimplies{}}
another constraint~$\constrc'$
if $\constrc'$ can be derived from~$\constrc$ 
using only
addition of literal axioms $\litell \geq 0$.

An assignment $\assmntrho$ 
\emph{falsifies} or    
\emph{violates} the constraint~$\constrc$ if
the restricted constraint 
$\restrict{\constrc}{\assmntrho}$ can not be satisfied. 
For the normalized constraint $C$ in \eqref{eq:normalized}, this is the case if 
$\sum_{\assmntrho(\litell_i) \neq  0} \coeffa_i < \dega$.
A constraint~$\constrc$ \emph{unit propagates}
the literal~$\litell$ under~$\assmntrho$ if 
if $\restrict{\constrc}{\assmntrho}$
cannot be satisfied unless
$\litell \mapsto 1$.
During \emph{unit propagation}
on $\formf$ under~$\assmntrho$,
the assignment   
$\assmntrho$ is extended 
iteratively by any propagated literals
until an assignment~$\assmntrho'$ is reached under
which no constraint in $\formf$ 
is propagating, or until $\assmntrho'$ violates some constraint~$\constrc\in\formf$. 
The latter scenario is referred to as a
\emph{conflict}.
Using the generalization
of \emph{reverse unit propagation clauses}~\cite{GN03Verification}
to pseudo-Boolean constraints by~\citet{EGMN20Justifying},
we
say that  $\formf$
\emph{implies~$\constrc$ by reverse unit propagation (RUP)},
and that
$\constrc$ is a 
\emph{RUP constraint} with respect to~$\formf$,
if
$\formf \land \negc{\constrc}$
unit propagates to conflict under the empty assignment.
If $\constrc$ is a RUP constraint with respect to~$\formf$,
then it can be proven that there is also a derivation
$\formf \cpderives \constrc$.
More generally, it can be shown that
$\formf \cpderives \constrc$
if and only if
$\formf \land \negc{\constrc} \cpderives \bot$,
where $\bot$ is a shorthand for the
\introduceterm{trivially false constraint}~$0 \geq 1$.
Therefore, we will extend  the    notation and write
$\formf \cpderives \constrc$
also when
$\constrc$ is derivable from~$\formf$
by RUP or by contradiction.
It is worth noting here again that, as shown in~\eqref{eq:negation},
the negation of any 
pseudo-Boolean  
constraint can also be expressed syntactically as a 
pseudo-Boolean  
constraint---this fact will be convenient in what follows.


\section{A Proof System for Dominance Breaking}
\label{sec:proof-system}

\providecommand\label[1]{{\color{red}TODO: \ifthenelse{\boolean{techappendixversion}}{\label{inappendix:#1}}{\label{#1}}}}
\providecommand\localref[1]{{\color{red}TODO:\ifthenelse{\boolean{techappendixversion}}{\ref{inappendix:#1}}{\ref{#1}}\xspace}}
\providecommand\localcref[1]{{\color{red}TODO:\ifthenelse{\boolean{techappendixversion}}{\cref{inappendix:#1}}{\cref{#1}}\xspace}}
\providecommand\localeqref[1]{{\color{red}TODO:\ifthenelse{\boolean{techappendixversion}}{\eqref{inappendix:#1}}{\eqref{#1}}\xspace}}

We proceed to develop our formal proof system for verifying
  dominance breaking, which we have implemented on  top of the tool
  \veripb as developed in 
%
the
sequence of papers
  \cite{%
  	EGMN20Justifying,%
  	GMN20SubgraphIso,%
  	GMMNPT20CertifyingSolvers,%
  	GN21CertifyingParity}.
We remark that for  
applications it is absolutely crucial not only
that the proof system  be sound, but that all proofs 
be efficiently machine-verifiable.  There are significant challenges involved in
making proof logging and verification efficient, but 
in this section we mostly ignore these 
more applied  
aspects of our
work and focus on the theoretical underpinnings.

Our foundation is the cutting planes proof system described in
\cref{sec:prelims}.
However, in a proof in our system for~$(\formf, \objfunc)$,
where $\objective$~is a linear objective function
to be minimized under the pseudo-Boolean formula~$\formf$
(or where $\objfunc \synteq 0$ for 
decision problems),
we also allow strengthening~$\formf$ by adding constraints~$\constrc$
that are not  implied by the formula.
Pragmatically,
adding such non-implied constraints~$\constrc$
should be in order 
as long as we keep some optimal solution,
\ie a satisfying assignment to~$\formf$ that minimizes~$\objfunc$,
which we will refer to as 
an \emph{\fminimal{} solution}
for~$\formf$.
We will formalize this idea by allowing the use 
of an additional pseudo-Boolean formula
$\pordernot(\vecu,\vecv)$ that, together with
an ordered set
of variables~$\vecz$, 
defines  a relation 
$\assmntalpha\preceq\assmntbeta$
to hold between assignments $\assmntalpha$ and~$\assmntbeta$ if
$
  \pordernot(\subst{\vecz}{\assmntalpha},\subst{\vecz}{\assmntbeta})
$
evaluates to
true.
We require (a cutting planes proof) 
that $\pordernot$ is such that 
this defines a preorder, \ie a reflexive and transitive relation.
Adding new constraints~$\constrc$ will be valid as long as we
guarantee to preserve some \fminimal solution that is also minimal
\wrt~$\preceq$.
In other words,
the preorder~$\preceq$ can be combined with the objective~function~$\objfunc$ 
to define a preorder~$\better$ on assignments by
\begin{equation}
  \label{eq:def-better-relation}
  \assmntalpha\better\assmntbeta 
  \quad
  \text{if}
  \quad
  \assmntalpha\preceq\assmntbeta \text{ and }
  \subst{f}{\assmntalpha}\leq \subst{f}{\assmntbeta}
  \eqcomma
\end{equation}
and we require that all derivation steps in the proof should
preserve some solution that is  minimal \wrt~$\better$.
The preorder defined by~$\pordernot(\vecu,\vecv)$
will only become important once we introduce our new
\emph{\domrulelong rule}
later in this section.
For simplicity,  up until that point the reader can assume 
that the pseudo-Boolean formula is
$\pordertop \synteq \emptyset$ 
inducing the trivial preorder relating all assignments, 
though all proofs 
presented below
work in full generality for the orders that will be
introduced later.

A proof  for~$(\formf, \objfunc)$ in our proof system 
consists of a sequence of
\introduceterm{proof configurations}
$\pconfstd$, where
\begin{itemize}
\item $\coreset$ is a set of 
pseudo-Boolean
 \introduceterm{\coreconstraint{}s};
\item
  $\derivedset$ is 
  another set of pseudo-Boolean
 \introduceterm{\derivedconstraint{}s};

\item
 $\pordernot$ is a 
 pseudo-Boolean
 formula 
 encoding a preorder
 and $\vecz$ a 
 set 
 of literals
 on which this preorder will be applied; and
\item
  $\objval$ is the  best value found so far for $\objfunc$. 

\end{itemize}
The initial configuration 
is $\pconfinit$.
%
The distinction between $\coreset$ and $\derivedset$ is only relevant
when a nontrivial 
  preorder
  is used; 
  we will elaborate on  this 
  when discussing
  the \domrulelong rule.
The intended 
relation between
$\objfunc$ and~$\objval$ is that if
$\objval < \infty$,
then there    exists  a solution~$\objsol$
satisfying~$\formf$ such that
$\evalobj{\objfunc}{\objsol} \leq \objval$,
and in this case the proof can make use of the constraint
$\objfunc \leq \objval - 1$
in the search for better solutions. 
As long as the optimal solution has not been found, 
it should hold that \fminimal  solutions
for~%
$\coreset\cup\derivedset$ 
have the same objective value as \fminimal solutions
for~$\formf$.  
The precise relation is formalized in the notion of
\emph{\safe{} configurations},
which we will define next.
%
In some cases, it will be convenient to also have a less stringent
notion of \emph{\weaksafety{}}, which holds even if constraints have
been removed from the original formula.
Jumping ahead a bit, what this means is that proofs preserving
\weaksafety can only be used to show that no solutions better than a
given value exist, while proofs preserving \safety can establish that
the optimal value \emph{equals} a certain value.

\begin{definition}
  \label{def:validity}
  A configuration $\pconfstd$ is \emph{\weaksafe}
  if the following conditions hold:
  \begin{enumerate}
  \item
    \label{def:weak_FC}
    For every
    $\objvalprime < \objval$,
    it holds that
    if
    $\formf \union \set{\objfunc\leq \objvalprime}$
    is satisfiable, 
    then
    \mbox{$\coreset \union \set{\objfunc\leq \objvalprime}$}
    is satisfiable.
		
  \item
    \label{def:weak_CD}\label{def:CD}
    For every 
    total assignment~$\assmntrho$
    satisfying
    the constraints
    $\coreunionobj$, there 
    exists
    %
    %
    a total assignment
    $\assmntrho'\better\assmntrho$
    satisfying \mbox{$\coreunionderivedunionobj$},
    where
    $\better$
    is the relation defined in~\eqref{eq:def-better-relation}.

  \end{enumerate}
  The configuration $\pconfstd$ 
  is \emph{\safe} if
  in addition
  the following conditions hold:
  \begin{enumerate}
    \setcounter{enumi}{2}
  \item  
    \label{def:v-value} 
    If $\objval < \infty$, 
    then 
    $\formf \cup \{\objfunc\leq \objval\}$ is satisfiable.
		
  \item  
    \label{def:FC} 
    %
    %
    For every
    $\objvalprime < \objval$,
    it holds that
    if
    $\coreset \union \set{\objfunc\leq \objvalprime}$
    is satisfiable, 
    then
    $\formf \union \set{\objfunc\leq \objvalprime}$
    is satisfiable.
    
    
  \end{enumerate}
\end{definition}

%
When we present the derivation rules in our proof system below,
we will show that \safety is an invariant of the proof system,  i.e.,
that it is preserved by all derivation rules.
For the \emph{deletion} rule, which can remove constraints in the
input formula, we will present a version of the rule that 
only preserves \weaksafety. This alternative rule is introduced for
pragmatic reasons to support proof logging for SAT
solvers. In such a setting, deletions can be treated in a more relaxed
fashion, since the generated proofs are only used to establish
unsatisfiability.

%
To see why 
\cref{def:validity}
is relevant, 
note that items \ref{def:weak_FC}, \ref{def:weak_CD}, and \ref{def:FC}
together imply that if the configuration $\pconfstd$ is such that
$\objval$ is not yet the value of an optimal solution, then \fminimal
solutions
for~$\formf$ and~$\coreset\cup\derivedset$
have the same
objective value, just as desired.
A proof 
in our proof system
ends when the configuration
$\pconffinal$
is such that
$\coreset \union \derivedset$ contains
contradiction
$\bot \synteq 0 \geq 1$.
If the resulting state is \safe, either $\bestobjval = \infty$ and $\formf$ is
unsatisfiable, 
or $\bestobjval$ is
the optimal value 
(or $\bestobjval = 0$ for a satisfiable decision problem).
If the resulting state is only \weaksafe, we get slightly weaker
conclusions.
We collect the precise statements in two formal theorems.

\begin{theorem}\label{thm:weak_inv_sound}
  Let $\formf$ be a 
  pseudo-Boolean  
  formula and 
  $\objfunc$~an objective function.
  If the configuration~$\pconffinal$
  is  \weaksafe  and is such that
  $\coreset \cup \derivedset$
  contains the contradictory constraint
  \mbox{$0 \geq 1$},
  then it holds that
  \begin{itemize}
  \item for any solution~$\assmntalpha$
    for~$\formf$
    we have
    $\evalobj{\objfunc}{\assmntalpha} \geq \bestobjval$;
  \item
    in particular,
    if $\bestobjval = \infty$, then $\formf$ is unsatisfiable.
  \end{itemize}
\end{theorem}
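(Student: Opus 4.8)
The plan is to unwind the definition of \weaksafety applied to the final configuration $\pconffinal$ together with the fact that $\coreset \cup \derivedset$ is contradictory, and to show that this forces a lower bound of $\bestobjval$ on the objective value of any solution to $\formf$. The key observation is that if $\coreset \cup \derivedset$ contains $0 \geq 1$, then in particular $\coreset \cup \derivedset$ is unsatisfiable, and hence $\coreunionderivedunionobj$ is unsatisfiable for \emph{every} value of the objective-bound constraint $\betterobjval$.

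First I would argue by contraposition: suppose $\assmntalpha$ is a solution for $\formf$ with $\evalobj{\objfunc}{\assmntalpha} < \bestobjval$, i.e.\ $\evalobj{\objfunc}{\assmntalpha} \leq \bestobjval - 1$, and set $\objvalprime \eqdef \evalobj{\objfunc}{\assmntalpha}$. Then $\objvalprime < \bestobjval$ and $\assmntalpha$ witnesses that $\formf \union \set{\objfunc \leq \objvalprime}$ is satisfiable. By condition~\ref{def:weak_FC} of \cref{def:validity} (with $\objval = \bestobjval$), it follows that $\coreset \union \set{\objfunc \leq \objvalprime}$ is satisfiable, say by some total assignment $\assmntrho$. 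Now I would invoke condition~\ref{def:weak_CD}: since $\assmntrho$ satisfies $\coreunionobj$ — here using $\objfunc \leq \objvalprime \models \objfunc \leq \bestobjval - 1$, or more carefully re-running the same argument with $\objvalprime$ playing the role of the bound — there exists a total assignment $\assmntrho' \better \assmntrho$ satisfying $\coreunionderivedunionobj$. But $\coreset \cup \derivedset$ is unsatisfiable (it contains $0 \geq 1$), so no such $\assmntrho'$ can exist, a contradiction. Hence no solution $\assmntalpha$ for $\formf$ has $\evalobj{\objfunc}{\assmntalpha} < \bestobjval$, which is the first bullet. The second bullet is then immediate: if $\bestobjval = \infty$, the first bullet says every solution $\assmntalpha$ for $\formf$ satisfies $\evalobj{\objfunc}{\assmntalpha} \geq \infty$, which is impossible, so $\formf$ has no solution.

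The one slightly delicate point — and the main thing to get right — is the bookkeeping around the objective bound when chaining conditions \ref{def:weak_FC} and \ref{def:weak_CD}. Condition~\ref{def:weak_CD} is stated for assignments satisfying $\coreunionobj = \coreset \union \betterobjvalset$, i.e.\ with the specific constraint $\objfunc \leq \bestobjval - 1$, whereas condition~\ref{def:weak_FC} hands us a satisfying assignment for $\coreset \union \set{\objfunc \leq \objvalprime}$ with a possibly smaller bound $\objvalprime \leq \bestobjval - 1$. Since $\objfunc \leq \objvalprime$ implies $\objfunc \leq \bestobjval - 1$, any such assignment also satisfies $\coreunionobj$, so condition~\ref{def:weak_CD} does apply; I would just spell this implication out explicitly so there is no gap. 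Everything else is routine: the argument is a short chain of implications ending in a contradiction with the unsatisfiability of $\coreset \cup \derivedset$, and it does not touch the preorder machinery (which only matters for the derivation rules, not for extracting conclusions from a \weaksafe final state). No appeal to condition~\ref{def:v-value} or~\ref{def:FC} is needed, which is exactly why the weaker hypothesis of \weaksafety suffices here.
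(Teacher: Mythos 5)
Your proposal is correct and follows essentially the same route as the paper's own proof: assume a solution $\assmntalpha$ of $\formf$ with $\evalobj{\objfunc}{\assmntalpha} < \bestobjval$, chain condition~\ref{def:weak_FC} and then condition~\ref{def:weak_CD} of \cref{def:validity} to produce a satisfying assignment of $\coreunionderivedunionobj$, and contradict the presence of $0 \geq 1$; the only cosmetic difference is that you instantiate the bound at $\objvalprime = \evalobj{\objfunc}{\assmntalpha}$ where the paper uses $\bestobjval - 1$ directly, and your explicit remark that $\objfunc \leq \objvalprime$ entails $\objfunc \leq \bestobjval - 1$ correctly closes that (trivial) gap.
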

\begin{proof}
  If $\formf$ is satisfiable, then let $\assmntalpha$ be a satisfying
  assignment
  for~$\formf$. 
  If
  $
  \evalobj{\objfunc}{\assmntalpha}
  <
  \bestobjval 
  $,
  then $\alpha$ satisfies $\formf\cup\{f\leq\bestobjval-1\}$.
  Hence, item~\ref{def:weak_FC}
  in  \cref{def:validity}  
  says that there exists an
  $\alpha'$ that satisfies
  $\coreset\cup\{f\leq\bestobjval-1\}$ and
  item~\ref{def:weak_CD}
  then implies the existence of   
  an $\alpha''$ that satisfies
  $\coreset\cup\derivedset\cup\{f\leq\bestobjval-1\}$.
  But this contradicts
  the assumption that
  the unsatisfiable constraint   
  $0 \geq 1$ is in $\coreset\cup\derivedset$.
  It follows that
  $
  \evalobj{\objfunc}{\assmntalpha}
  \geq
  \bestobjval 
  $,
  and that no satisfying assignments for~$\formf$ can exist
  \mbox{if
    $\bestobjval = \infty$.}
\end{proof}

\begin{theorem}\label{thm:inv_sound}
Let $\formf$ be a pseudo-Boolean   formula and 
$\objfunc$~an objective function.
If the configuration~$\pconffinal$
is \safe and is such that
$\coreset \cup \derivedset$
contains
$0 \geq 1$,
then 
\begin{itemize}
\item $\formf$ is unsatisfiable if and only if $\bestobjval =
  \infty$;
  and
\item if $\formf$ is satisfiable, then there is 
  \anfminimal solution~$\assmntalpha$
  for~$\formf$
  with objective value
  $\evalobj{\objfunc}{\assmntalpha} = \bestobjval$.
\end{itemize}
\end{theorem}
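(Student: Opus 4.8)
The plan is to reduce everything to \cref{thm:weak_inv_sound} together with the two extra requirements (items~\ref{def:v-value} and~\ref{def:FC}) that a \safe configuration satisfies on top of \weaksafety. First I would observe that \safety implies \weaksafety, since conditions~\ref{def:weak_FC} and~\ref{def:weak_CD} are literally a sub-list of the conditions defining a \safe configuration. Hence \cref{thm:weak_inv_sound} applies to~$\pconffinal$ and already delivers the ``lower bound'' half of everything: every solution~$\assmntalpha$ for~$\formf$ satisfies $\evalobj{\objfunc}{\assmntalpha} \geq \bestobjval$, and if $\bestobjval = \infty$ then~$\formf$ is unsatisfiable.

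For the equivalence in the first bullet, the only missing direction is that unsatisfiability of~$\formf$ forces $\bestobjval = \infty$. I would argue the contrapositive: if $\bestobjval < \infty$, then item~\ref{def:v-value} of \cref{def:validity} says that $\formf \cup \set{\objfunc \leq \bestobjval}$ is satisfiable, so in particular~$\formf$ is satisfiable. Together with the second bullet of \cref{thm:weak_inv_sound} this yields ``$\formf$ is unsatisfiable if and only if $\bestobjval = \infty$''.

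For the second bullet, assume~$\formf$ is satisfiable, so $\bestobjval < \infty$ by what was just shown. Applying item~\ref{def:v-value} once more produces an assignment~$\assmntalpha$ satisfying $\formf \cup \set{\objfunc \leq \bestobjval}$; this~$\assmntalpha$ is a solution for~$\formf$ with $\evalobj{\objfunc}{\assmntalpha} \leq \bestobjval$. The lower bound from \cref{thm:weak_inv_sound} gives the reverse inequality $\evalobj{\objfunc}{\assmntalpha} \geq \bestobjval$, so $\evalobj{\objfunc}{\assmntalpha} = \bestobjval$. Finally, applying that same lower bound to an \emph{arbitrary} solution~$\assmntbeta$ of~$\formf$ shows $\evalobj{\objfunc}{\assmntbeta} \geq \bestobjval = \evalobj{\objfunc}{\assmntalpha}$, which is exactly the assertion that~$\assmntalpha$ is \anfminimal solution for~$\formf$; in the decision-problem special case $\objfunc \synteq 0$ this moreover pins down $\bestobjval = 0$.

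The main obstacle, such as it is, is essentially bookkeeping: one must notice that \safety is strictly stronger than \weaksafety so that \cref{thm:weak_inv_sound} may be invoked, and then use the extra conditions of a \safe configuration at exactly the right place, namely item~\ref{def:v-value}, which simultaneously rules out $\bestobjval < \infty$ when~$\formf$ is unsatisfiable and manufactures the witnessing \fminimal solution when~$\formf$ is satisfiable. Note that item~\ref{def:FC} is not actually used in this theorem; its role lies upstream, in ensuring that \safety is an invariant maintained by each individual derivation rule of the proof system.
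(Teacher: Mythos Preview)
Your proof is correct and follows essentially the same approach as the paper: invoke \cref{thm:weak_inv_sound} for the lower bound and the $\bestobjval = \infty \Rightarrow$ unsatisfiable direction, then use item~\ref{def:v-value} of \cref{def:validity} both for the converse direction of the first bullet and to produce the witnessing solution for the second bullet. Your observation that item~\ref{def:FC} is unused here (its role being to propagate \safety through the derivation rules) is also made in the paper.
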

\begin{proof}
  By appealing to 
  \cref{thm:weak_inv_sound},
  we can conclude that  if
  $\bestobjval = \infty$, then $\formf$ is unsatisfiable.
  If $\formf$ is unsatisfiable, then 
    we must have $\bestobjval = \infty$ 
    due to
    item~\ref{def:v-value}
    in  \cref{def:validity}.
    This establishes the first part of
    \cref{thm:inv_sound}.

    For the second part,
    suppose that $\formf$ is satisfiable.
    Then $\bestobjval < \infty$ by the preceding paragraph, and so by
    item~\ref{def:v-value} of
    \cref{def:validity}
    there is a  solution~$\assmntalpha$
    for~$\formf$
    such that
    $
    \evalobj{\objfunc}{\assmntalpha}
    \leq
    \bestobjval
    $.
    But     
    \cref{thm:weak_inv_sound} says that
    for any solution~$\assmntalpha$
    for~$\formf$
    it holds that
    $\evalobj{\objfunc}{\assmntalpha} \geq \bestobjval$.
    Hence,
    $\assmntalpha$~is
    an \fminimal{} solution
    for~$\formf$
    with objective value
    $\evalobj{\objfunc}{\assmntalpha} = \bestobjval$
    as claimed.
\end{proof}

We are now ready to give a formal description of
the rules in our proof system and argue that these rules
preserve  \safety (or, in the case of the alternative deletion rule,
weak \safety).
The attentive reader might have noted that we did not use
item~\ref{def:FC}
in
\cref{def:validity}
in our proofs of
\reftwoths{thm:weak_inv_sound}{thm:inv_sound},
but this condition will be critical to argue that
\safety is an invariant of our proof system.

\subsection{\IMPLICATIONALDERIVATION Rule}

If we can exhibit a derivation of the pseudo-Boolean 
constraint~$\constrc$
from
$\coreunionderivedunionobj$ 
in our (slightly extended) version of 
the cutting planes proof system
as described in  \refsec{sec:prelims}
(\ie in formal notation, if
$\coreunionderivedunionobj \cpderives \constrc$),
%
then we should be allowed to add the implied constraint~$\constrc$
to our collection of derived constraints.
Let us write this down as our first formal derivation rule.

\begin{definition}[\Implicationalderivation rule]
  If $\coreunionderivedunionobj \cpderives \constrc$,
  then we can
  transition from the configuration~$\pconfstd$
  to the configuration $\pconfaddderivednobreak{\set{\constrc}}$
  by the
  \emph{\implicationalderivation{} rule}.
  In this case, we will also say that
  $\constrc$ is derivable from $\pconfstd$
  by the \implicationalderivation{} rule.
%
%
\end{definition}

%


The \implicationalderivation rule preserves \safety (and \weaksafety)
since $\coreunionderivedunionobj \models \constrc$ holds by soundness
of the cutting planes proof system, but, more importantly,
the cutting planes derivation provides a simple and efficient way for
an algorithm to \emph{verify} that this implication holds.  This is a
key feature of all rules in our proof system---not only are they
sound, but the soundness of every rule application can be efficiently
verified by checking a simple, syntactic object.

%
When 
doing proof logging, the solver would need to specify
by which sequence of cutting planes derivation rules $\constrc$ 
was
obtained. For practical purposes, though, 
it greatly simplifies matters that
in many cases the verifier can figure out the required proof details
automatically, meaning that the proof logger can just state the
desired constraint without any further information.  One important
example of this is when $\constrc$ is a reverse unit propagation
(RUP) constraint \wrt $\coreunionderivedunionobj$.  Another case is
when $\constrc$ is \syntimplied by some other constraint.
%
%

\subsection{Objective Bound Update Rule}

The  \emph{objective  bound  update rule}
allows improving 
the   
estimate of what
value can be achieved for the objective function~$\objfunc$.

\begin{definition}[Objective  bound  update rule]
  We say that we can transition from
  the configuration
  $\pconfstd$ to
  the configuration
  $\pconfboundupdate$ by the \emph{objective bound update rule} if
  there is an  assignment $\assmntalpha$ satisfying $\coreset$ such
  that 
  $\evalobj{\objfunc}{\assmntalpha} = \objval' < \objval$. 
\end{definition}

%

When actually doing proof logging, the solver would specify such an
assignment~$\assmntalpha$, which would then be checked by the proof
verifier (in our case \veripb).

To 
argue
that this rule preserves
\safety (and weak \safety),
note that 
items  \ref{def:weak_FC}, \ref{def:CD}, and  \ref{def:FC}
%
are trivially satisfied.
(For item~\ref{def:CD}, observe that whenever
$\assmntrho'\better\assmntrho$ and $\assmntrho$ satisfies
$\{f\leq \objval'-1\}$, so does $\assmntrho'$.)  
Item~\ref{def:v-value}
is satisfied after updating the objective bound since 
item~\ref{def:FC} 
guarantees the existence of an $\alpha'$ satisfying $\formf$ with an
objective value that is at least as good 
as~$\objval'$.


Note that we have no guarantee that~$\alpha$ itself will be
a solution
for~$\formf$.
However, although we will not emphasize this
point here,
it follows from our formal treatment below that the proof system
guarantees that 
such a solution~$\alpha'$ 
for
the original formula~$\formf$ can be efficiently reconstructed from the
proof (where efficiency is measured in the size of the proof).

\subsection{\REDRULELONG Rule}
\label{sec:red-rule}

The \redrulelong rule allows deriving a
constraint~$\constrc$ from $\coreset \cup \redundantset$
even if~$\constrc$ is
not implied, provided that it can be shown that any 
assignment $\assmntalpha$ that satisfies $\coreset \cup \redundantset$
can be transformed into another 
assignment~$\assmntalpha' \better \assmntalpha$ 
that satisfies both
$\coreset \cup \redundantset$
and~$\constrc$
(in case $\pordernot = \pordertop$, the 
condition $\assmntalpha' \better \assmntalpha$ 
just means that
$\evalobj{\objfunc}{\assmntalpha'}\leq\evalobj{\objfunc}{\assmntalpha}$). 
This rule is borrowed from~\citet{GN21CertifyingParity}, 
who in turn rely heavily 
on~\citet{HKB17ShortProofs} and \citet{BT19DRAT}. 
We extend this rule here 
from decision problems to 
optimization problems in the natural way.

\begin{definition}[\Redrulelong rule]
  If for a pseudo-Boolean constraint~$\constrc$
  there is a substitution~$\witness$  such that
  \begin{equation} 
    \label{eq:substredundant}
    \coreunionderivedunionobj \union \set{\negc{\constrc}}
    \cpderives
    \subst
    {(\coreset \union \derivedset \union \constrc)}
    {\witness}
    \union 
    \objwitnessconditionset
    \union
    \porderformsub
    \eqcomma
  \end{equation}
  then  we can transition from
  $\pconfstd$ to $\pconfaddderivednobreak{\set{\constrc}}$ by
  \emph{\redrulelong{}},
  or just \introduceterm{\redruleshort{}} for brevity.
  We refer to the substitution~$\witness$
  as the \introduceterm{witness}, and also say that
  $\constrc$
  can be derived 
  from $\pconfstd$  by \redruleshort.
\end{definition}

%

Intuitively, \eqref{eq:substredundant} says that if some assignment
$\assmntalpha$ satisfies $\coreset \union \derivedset$ but
falsifies~$\constrc$,  
then
the assignment  
$\assmntalpha' = \assmntalpha \circ \witness$
still satisfies $\coreset \union \derivedset$ and
also satisfies $\constrc$. In addition, the condition
$\objwitnesscondition$ ensures that $\assmntalpha \circ \witness$
achieves
an objective function value that is at least as good
as that for~$\assmntalpha$.
This together with the constraints~$\porderformsub$
guarantees 
that $\assmntalpha'\better\assmntalpha$.
For proof
logging purposes, the witness~$\witness$ as well as any non-immediate
cutting planes 
derivations of constraints on the right-hand side of
\eqref{eq:substredundant} would have to be specified,
but, e.g., 
{all RUP constraints or \syntimplied constraints}
can be left to the verifier to check.

\begin{proposition}
  If we can transition from
  $\pconfstd$ to $\pconfaddderivednobreak{\set{\constrc}}$ by
  the \redrulelong{} rule
  and
  $\pconfstd$ is [weakly] \safe, then
  \mbox{$\pconfaddderived{\set{\constrc}}$}
  is also [weakly] \safe.
%
\end{proposition}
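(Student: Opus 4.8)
The plan is to observe that the \redrulelong{} rule only enlarges the set $\derivedset$ of \derivedconstraint{}s, leaving $\coreset$, $\pordernot$, $\ovars$, and $\objval$ untouched. Conditions~\ref{def:weak_FC}, \ref{def:v-value}, and~\ref{def:FC} of \cref{def:validity} refer only to $\formf$, $\coreset$, and $\objval$, so they pass verbatim from $\pconfstd$ to $\pconfaddderived{\set{\constrc}}$; this already settles the \weaksafety{} claim apart from condition~\ref{def:weak_CD}, and the \safety{} claim apart from condition~\ref{def:CD}, which is the same condition. Hence the whole argument comes down to re-establishing condition~\ref{def:weak_CD} with $\derivedset$ replaced by $\derivedset \union \set{\constrc}$: every total assignment $\assmntrho$ satisfying $\coreunionobj$ must be improvable in the $\better$ order to a total assignment satisfying $\coreunionderivedunionobj \union \set{\constrc}$.

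To this end I would fix such an $\assmntrho$ and apply condition~\ref{def:weak_CD} for the hypothesis configuration $\pconfstd$ to get a total assignment $\assmntrho_1 \better \assmntrho$ satisfying $\coreunionderivedunionobj$. If $\assmntrho_1$ already satisfies $\constrc$ we are done. Otherwise $\assmntrho_1$ falsifies $\constrc$, hence (being total) satisfies $\negc{\constrc}$, so it satisfies the entire left-hand side $\coreunionderivedunionobj \union \set{\negc{\constrc}}$ of~\eqref{eq:substredundant}. By soundness of cutting planes ($\cpderives$ implies $\models$) it then satisfies every constraint on the right-hand side of~\eqref{eq:substredundant}, namely $\subst{(\coreset \union \derivedset \union \constrc)}{\witness}$, the constraint $\objwitnesscondition$, and the order formula $\porderformsub$. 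I would then take $\assmntrho' = \compassmt{\witness}{\assmntrho_1}$, i.e.\ the (still total) assignment $\varx \mapsto \assmntrho_1(\witness(\varx))$.

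Three routine facts then finish the argument. (i)~Using the standard equivalence ``$\assmntrho_1 \models \subst{D}{\witness}$ iff $\assmntrho' \models D$'' for every constraint $D$, the fact that $\assmntrho_1$ satisfies $\subst{(\coreset \union \derivedset \union \constrc)}{\witness}$ gives that $\assmntrho'$ satisfies $\coreset \union \derivedset \union \set{\constrc}$. (ii)~From $\assmntrho_1 \models \objwitnesscondition$ and composition of restrictions, $\subst{\objfunc}{\assmntrho'} \leq \subst{\objfunc}{\assmntrho_1} \leq \objval - 1$, so $\assmntrho'$ also satisfies $\betterobjval$; combined with~(i) this yields $\assmntrho' \models \coreunionderivedunionobj \union \set{\constrc}$. (iii)~Unfolding the definition of $\porderformsub$ and the semantics of composed substitutions, the assertion ``$\assmntrho_1 \models \porderformsub$'' is exactly the assertion that $\pordernot(\subst{\ovars}{\assmntrho'},\subst{\ovars}{\assmntrho_1})$ evaluates to true, i.e.\ $\assmntrho' \preceq \assmntrho_1$. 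Facts~(ii) and~(iii) give $\assmntrho' \better \assmntrho_1$ by the definition of $\better$ in~\eqref{eq:def-better-relation}; since $\pordernot$ is required to encode a preorder, $\better$ is transitive, so $\assmntrho' \better \assmntrho_1 \better \assmntrho$ yields $\assmntrho' \better \assmntrho$, which is precisely what condition~\ref{def:weak_CD} demands.

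I expect the only step requiring real care to be~(iii): lining up the semantics of the composed substitution $\compassmt{\witness}{\assmntrho_1}$ with the way the argument tuples of $\pordernot(\vecu,\vecv)$ are instantiated in $\porderformsub$, so that ``$\assmntrho_1$ satisfies the substituted order formula'' genuinely coincides with the defining condition $\assmntrho' \preceq \assmntrho_1$ of the preorder. The remaining points---that all assignments in play are total (substitutions being the identity outside their domain), that conditions~\ref{def:weak_FC}, \ref{def:v-value}, and~\ref{def:FC} are inherited, and that $\better$ is transitive---are immediate.
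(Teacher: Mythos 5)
Your proposal is correct and follows essentially the same route as the paper's proof: the paper absorbs your explicit intermediate assignment $\assmntrho_1$ into a ``without loss of generality'' step (replacing $\assmntrho$ by an assignment that also satisfies $\derivedset$ via condition~\ref{def:weak_CD} of the hypothesis configuration), and then applies the witness exactly as you do, with the same case split on whether $\constrc$ is already satisfied and the same three verifications via soundness of cutting planes and the composition identity $\restrict{(\restrict{\constrd}{\witness})}{\assmntrho} = \restrict{\constrd}{\assmntrho \circ \witness}$. Your explicit appeal to transitivity of $\better$ is just the unpacked justification of that WLOG.
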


\begin{proof}
First assume $\pconfstd$ is \weaksafe. 
Item~\ref{def:weak_FC}
in \cref{def:validity} 
remains satisfied
for $\pconfaddderived{\set{\constrc}}$  
since $\formf$,  $\objval$, and 
$\coreset$ are unchanged. 


%
For item~\ref{def:CD} we can recycle proofs of similar properties
for decision problems
\cite{HKB17ShortProofs,BT19DRAT,GN21CertifyingParity}.
Consider  
a total  assignment~$\assmntrho$
satisfying
$\coreset \cup \betterobjvalset$.
Without loss of generality we can assume that
$\assmntrho$ also satisfies~$\derivedset$
(since $\pconfstd$ satisfies item~\ref{def:weak_CD}
in \cref{def:validity}).
%
We wish to  construct an 
assignment~$\assmntrho'$  that satisfies the constraints
$\coreset\cup\derivedset\cup\set{\constrc}$
and also the condition
$\assmntrho'\better\assmntrho$,
which is to say that
$\evalobj{\objfunc}{\assmntrho'} \leq 
\evalobj{\objfunc}{\assmntrho}$ and 
$\pordernot(\subst{\vecz}{\assmntrho'},\subst{\vecz}{\assmntrho})$
should hold. 

If $\assmntrho$ satisfies
$\constrc$, then we use $\assmntrho' = \assmntrho$ and all conditions
are satisfied (recall that $\pordernot$ induces a preorder, and hence
a reflexive relation: for any $\rho$,
$\pordernot(\subst{\vecz}{\assmntrho},\subst{\vecz}{\assmntrho})$
holds).  
Otherwise, choose 
$\assmntrho' = \assmntrho \circ \witness$.
Since $\assmntrho$ is total and does not satisfy~$\constrc$,
it satisfies~$\neg \constrc$.
This means that $\assmntrho$ satisfies
$\coreset \cup \derivedset \cup \neg \constrc$,
and hence by \eqref{eq:substredundant}
also
\begin{equation}
  \restrict{(\coreset \cup \derivedset \cup \constrc)}{\witness} 
  \cup \objwitnessconditionset \cup \porderformsub
  \eqperiod  
\end{equation}
Clearly,
for any constraint $\constrd$, 
it holds that
$\restrict{(\restrict{\constrd}{\witness})}{\assmntrho} =
\restrict{\constrd}{\assmntrho \circ \witness}$ and thus if $\assmntrho$
satisfies $\restrict{\constrd}{\witness}$, then 
$\assmntrho ' = \assmntrho \circ \witness$ 
satisfies $\constrd$. 
Therefore, 
$\assmntrho'$~satisfies 
$\coreset \cup
\derivedset \cup \constrc$.
By the same reasoning, since $\assmntrho$ satisfies
$\objwitnesscondition$
we have that
$
\evalobj{\objfunc}{\assmntrho \circ \witness}
=
\evalobj{\objfunc}{\assmntrho'}
\leq
\evalobj{\objfunc}{\assmntrho}$
holds.  
Finally, the fact that
$\assmntrho$ satisfies $\porderformsub$  means that
$\pordernot(\subst{\vecz}{\assmntrho'},\subst{\vecz}{\assmntrho})$ 
holds. This shows that
item~\ref{def:weak_CD} holds 
for $\pconfaddderived{\set{\constrc}}$,
which concludes our proof that \weaksafety is preserved.

To see that also \safety is preserved, it suffices to note that
items~\ref{def:v-value} and~\ref{def:FC} do not depend on
$\derivedset$ and hence are satisfied in
$\pconfaddderived{\set{\constrc}}$ whenever they are satisfied in
$\pconfstd$.
\end{proof}

\subsection{Deletion Rules}
\label{sec:del}


An interesting property of the 
\redruleshort rule introduced in 
\cref{sec:red-rule}
is that when the set of constraints in
$\coreunionderived$
grows, it can get harder to derive new constraints,
since every constraint~$\constrd$ already in~$\coreunionderived$
adds a new constraint~$\restrict{\constrd}{\witness}$
needing to be derived on the right-hand side of~\eqref{eq:substredundant}.
For this reason,
and also due to efficiency concerns during verification
of proofs, we  need to be able to delete previously derived constraints.

\begin{definition}[Deletion rule]
  \label{def:deletion-rule}
  We can transition from   $\pconfstd$ 
  to   $\pconfcdprime$
  by  the \emph{deletion rule} 
  if
  \begin{enumerate}
  \item
    $\derivedset' \subseteq \derivedset$ and
  \item%
    \label{item:core-condition}%
    $\coreset' = \coreset$ or $\coreset' = \coreset \setminus \set{\constrc}$
    for some constraint $\constrc$
    derivable via 
    the \redruleshort rule
    from $\pconfdelete$.
  \end{enumerate}
\end{definition}

%
%

The
last condition  above perhaps seems slightly odd, 
but
it is there since   
deleting arbitrary constraints could
violate \safety in two different ways.  
Firstly, it
could   
allow finding better-than-optimal solutions. 
Secondly, and perhaps surprisingly, in combination with the 
\domrulelong rule, which we will discuss below,
arbitrary deletion is unsound, 
as
it can turn satisfiable instances into unsatisfiable ones.  
We will discuss this further in 
\cref{ex:unsound_deletion}
once we have introduced the \domruleshort rule.

\begin{proposition}
  If we can transition from
  $\pconfstd$ to $\pconfcdprime$
  by 
  the deletion rule
  and
  $\pconfstd$ is [weakly] \safe, then
  \mbox{$\pconfcdprime$}
  is also [weakly] \safe.
\end{proposition}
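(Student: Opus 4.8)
The plan is to separate the two effects of the deletion rule and verify the four conditions of \cref{def:validity} in turn. The first effect, shrinking the derived set to $\derivedset' \subseteq \derivedset$, is harmless: items~\ref{def:weak_FC}, \ref{def:v-value}, and~\ref{def:FC} never refer to the derived set at all, and for item~\ref{def:CD} any assignment $\assmntrho'$ that witnesses the condition for $\pconfstd$ still satisfies the smaller set $\coreset \union \derivedset' \union \betterobjvalset$. So I would reduce at once to the case $\derivedset' = \derivedset$ and either $\coreset' = \coreset$, where nothing further is needed, or $\coreset' = \coreset \setminus \set{\constrc}$ with $\constrc$ derivable by \redruleshort from $\pconfdelete$ via some witness~$\witness$; the remainder of the argument concerns this last case.

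The heart of the proof is a \emph{repair} statement extracted from the redundance derivation for $\constrc$. Specializing \eqref{eq:substredundant} to core $\coreset'$ and empty derived set, and using soundness of cutting planes together with the identity $\subst{(\subst{\constrd}{\witness})}{\assmntrho} = \subst{\constrd}{\assmntrho \circ \witness}$ already used in the proof of the \redruleshort rule, I would show: for every total assignment $\assmntrho$ satisfying $\coreset'$ and $\set{\objfunc \le \objvalprime}$ with $\objvalprime \le \objval - 1$, there is a repaired assignment $\widehat\assmntrho$ --- equal to $\assmntrho$ if $\assmntrho$ already satisfies $\constrc$, and equal to $\assmntrho \circ \witness$ otherwise (a total $\assmntrho$ failing $\constrc$ must satisfy $\negc\constrc$, hence the left-hand side of \eqref{eq:substredundant}) --- such that $\widehat\assmntrho$ satisfies $\coreset = \coreset' \union \set{\constrc}$, satisfies $\set{\objfunc \le \objvalprime}$ because $\evalobj{\objfunc}{\widehat\assmntrho} \le \evalobj{\objfunc}{\assmntrho}$, and satisfies $\widehat\assmntrho \better \assmntrho$ (by reflexivity of $\better$ in the first case, and from the order constraint $\porderformsub$ together with the objective inequality in the second).

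Granting this repair statement, I would check the conditions for $\pconfcdprime$ as follows. Item~\ref{def:weak_FC}: if $\formf \union \set{\objfunc \le \objvalprime}$ is satisfiable, then by \weaksafety of $\pconfstd$ so is $\coreset \union \set{\objfunc \le \objvalprime}$, hence so is its subset $\coreset' \union \set{\objfunc \le \objvalprime}$. Item~\ref{def:v-value}: inherited unchanged, since it mentions only $\formf$, $\objfunc$, and $\objval$. Item~\ref{def:FC}: if $\coreset' \union \set{\objfunc \le \objvalprime}$ is satisfiable for some $\objvalprime < \objval$, take a total satisfying assignment $\assmntrho$, repair it (here $\objvalprime \le \objval - 1$) to a total assignment satisfying $\coreset \union \set{\objfunc \le \objvalprime}$, and invoke item~\ref{def:FC} for $\pconfstd$ to conclude that $\formf \union \set{\objfunc \le \objvalprime}$ is satisfiable. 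Item~\ref{def:CD}: given a total $\assmntrho$ satisfying $\coreset' \union \betterobjvalset$, repair it to $\widehat\assmntrho \better \assmntrho$ satisfying $\coreset \union \betterobjvalset$, apply item~\ref{def:CD} for $\pconfstd$ to obtain $\assmntrho' \better \widehat\assmntrho$ satisfying $\coreunionderivedunionobj$, use transitivity of the preorder~$\better$ to get $\assmntrho' \better \assmntrho$, and note that $\assmntrho'$ then also satisfies the subset $\coreset' \union \derivedset' \union \betterobjvalset$. For the merely \weaksafe statement only items~\ref{def:weak_FC} and~\ref{def:CD} are required, and the arguments for those two never appeal to item~\ref{def:FC} of $\pconfstd$, so they carry over verbatim.

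The main obstacle --- and the reason for the seemingly peculiar side condition in item~\ref{item:core-condition} of the deletion rule --- is item~\ref{def:FC}, the implication from $\coreset$-satisfiability to $\formf$-satisfiability: deleting a core constraint naively could admit spurious better-than-optimal assignments, and \redruleshort-derivability of $\constrc$ is exactly what lets us repair any model of $\coreset'$ into a model of the full set $\coreset$ without worsening the objective. Everything else is bookkeeping with subset inclusions and with reflexivity and transitivity of $\better$ (a preorder, since $\preceq$ is one and integer $\le$ is transitive); I would also flag the tacit step of extending a partial satisfying assignment to a total one before invoking the repair statement, which is unproblematic since all formulas in play range over one common finite set of variables.
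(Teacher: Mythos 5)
Your proposal is correct and follows essentially the same route as the paper's own proof: reduce to the single-core-deletion case, use the \redruleshort{} witness to repair any model of $\coreset'$ into a model of $\coreset = \coreset' \cup \set{\constrc}$ without worsening the objective, and then invoke the corresponding item of \cref{def:validity} for the configuration before deletion. If anything, you are slightly more explicit than the paper on two points it leaves implicit --- the transitivity of $\better$ needed to conclude $\assmntrho' \better \assmntrho$ in item~\ref{def:CD}, and the observation that a model of $\coreset \cup \derivedset$ automatically models the subsets $\coreset' \cup \derivedset'$ --- so no changes are needed.
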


\begin{proof}
  First assume that $\pconfstd$ is \weaksafe. 
  Item~\ref{def:weak_FC}
  in \cref{def:validity}   
  clearly remains satisfied
  for $\pconfcdprime$    
  since $\coreset'\subseteq\coreset$. 
  To prove that  item~\ref{def:CD}  is satisfied after applying the
  deletion   
  rule, let 
  $\assmntalpha$ be any assignment that satisfies
  $\coreset'\cup\{\objfunc\leq \objval-1\}$. 
  If $\coreset' = \coreset$, we find
  an~$\assmntalpha'\better\assmntalpha$
  that satisfies
  $\coreset'\cup\derivedset'\cup\{\objfunc\leq \objval-1\}$ using
  \weaksafety of the
  configuration~
  $\pconfstd$.  
  Hence, we can assume that 
  $\coreset' = \coreset
  \setminus \set{\constrc}$. If $\assmntalpha$ satisfies $\constrc$,
  this assignment satisfies all of~$\coreset$,
  and again the claim follows from \weaksafety
  of
  the configuration~$\pconfstd$ before deletion. 
  Assume therefore that
  $\assmntalpha$ does not
  satisfy~$\constrc$. 
  Since $\constrc$ is derivable via \redruleshort{} from
  $\pconfdelete$, it holds that 
  \begin{equation} 
    \coreset'  \cup\{\objfunc\leq \objval-1\} \cup \set{\negc{\constrc}}
    \cpderives
    \subst
    {(\coreset' \cup \constrc)}
    {\witness}
    \union 
    \objwitnessconditionset
    \union
    \porderformsub
  \end{equation}
  for some
  witness $\witness$. 
  We can use this witness to obtain an assignment  
  $\alpha'' = \alpha\circ \omega$ 
  satisfying $\coreset=\coreset' \cup \set{\constrc}$ 
  such that
  $
  \subst{\objfunc}{\alpha''}
  \leq\subst{\objfunc}{\alpha}\leq
  \objval-1$,
  which shows that
  $\coreset\cup\{\objfunc\leq \objval-1\}$ 
  is satisfiable. 
  Appealing to the \weaksafety   of the
  configuration~$\pconfstd$ before deletion,
  we then find an $\alpha'$ with
  $\subst{\objfunc}{\alpha'}\leq\objval-1$
  that satisfies $\coreset\cup\derivedset\cup\{\objfunc\leq
  \objval-1\}$.
  In this way, we establish that
  item~\ref{def:CD}  
  in \cref{def:validity}
  holds for the configuration~$\pconfcdprime$ after deletion.

  Now assume that $\pconfstd$ is \safe. 
  Item~\ref{def:v-value}
  clearly remains satisfied
  for $\pconfcdprime$  
  since $\objval$ is unchanged. 
  We show that item~\ref{def:FC} holds
  for~%
  $\pconfcdprime$; the proof is very similar to the proof of
  item~\ref{def:CD} above.
%
  Starting from an assignment~$\assmntalpha$ satisfying
  $\coreset'\cup\{\objfunc\leq \objval'\}$
  for some   $\objval'<\objval$, 
  we use~$\assmntalpha$ to construct a satisfying
  assignment~$\assmntalpha'$ for 
  $\formf\cup\{\objfunc\leq \objval'\}$.
  If $\coreset' = \coreset$,
  we get $\assmntalpha'$ from the
  \safety of~%
  $\pconfstd$,
  so assume
  $\coreset' = \coreset
  \setminus \set{\constrc}$. If $\assmntalpha$ satisfies $C$,
  the same assignment
  satisfies $\coreset$, and again the claim follows from \safety of
  the configuration before deletion. 
  Assume therefore that
  $\assmntalpha$ does not
  satisfy~$\constrc$. 
  Since $\constrc$ is derivable via \redruleshort{} from
  $\pconfdelete$, it holds that 
  \begin{equation} 
    \coreset' \cup\{\objfunc\leq \objval-1\} \cup \set{\negc{\constrc}}
    \cpderives
    \subst
    {(\coreset' \cup \constrc)}
    {\witness}
    \union 
    \objwitnessconditionset
    \union
    \porderformsub
    \eqperiod
  \end{equation}
  From this we can construct
  an assignment  
  $\alpha'' = \alpha\circ \omega$ 
  that satisfies 
  $\coreset=\coreset' \cup \set{\constrc}$
  and is   
  such that
  $
  \subst{\objfunc}{\alpha''}
  \leq\subst{\objfunc}{\alpha}\leq
  \objvalprime$,
  showing that 
  $\coreset\cup\{\objfunc\leq \objvalprime\}$ 
  is satisfiable. 
  Appealing to the \safety of the
  configuration~$\pconfstd$,
  we then find an $\alpha'$ with
  $\subst{\objfunc}{\alpha'}\leq\objvalprime$
  that satisfies~$\formf$.
  This proves that
  item~\ref{def:FC}
  holds
  for the configuration~$\pconfcdprime$ after deletion.
\end{proof}

%

The deletion rule in \cref{def:deletion-rule} is more cumbersome than
that in \drat-style proof systems, in that deletions of constraints in
the core set must be accompanied by proofs so that the validity of the
deletions can be checked.
When we want to highlight this property of the rule, we will refer to it as
\emph{\checkeddeletion}.
This
property 
can make it more difficult to
implement proof logging in a solver, and can also have negative
effects on the time required for proof generation and proof
verification.
If we are only interested in certifying unsatisfiability of decision
problem instances---which has traditionally been  the case in Boolean
satisfiability solving---then an alternative is to use a more liberal
deletion rule that allows unrestricted deletion of constraints from
the core set~$\coreset$ provided that the derived set~$\derivedset$ is
empty.
When such an \emph{\uncheckeddeletion rule} is used,
it is easy to see that unsatisfiable sets of constraints can turn
satisfiable, and for optimisation problems 
spurious solutions can appear that yield better objective function
values than are possible for the original input constraints.
But proofs of unsatisfiability are still valid, as are lower bounds on
the value of the objective function to be minimized. 
Phrased in the language of 
\cref{def:validity},
we are guaranteed to preserve
\weaksafety, but not necessarily \safety.

\begin{definition}[\Uncheckeddeletion rule]
  \label{def:unchecked_del}
  If  $\coreset'\subseteq \coreset$, then
  we can transition from  
  the configuration
  $\pconfstd$ to
  the configuration
  $\pconfdelete$
  using the \emph{\uncheckeddeletion rule}.
\end{definition}

%
\begin{proposition}
  If $\pconfstd$ is a \weaksafe configuration and we can transition
  from it to $\pconfdelete$ using \uncheckeddeletion, then
  $\pconfdelete$ is also \weaksafe.  
\end{proposition}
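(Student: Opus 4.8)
The plan is to verify directly the two conditions that constitute \weaksafety in \cref{def:validity}---items~\ref{def:weak_FC} and~\ref{def:CD}---for the target configuration $\pconfdelete$, using the assumption that these hold for $\pconfstd$ together with the facts that $\coreset' \subseteq \coreset$ and that the bound $\objval$, the order formula $\pordernot$, and the order variables $\ovars$ are all unchanged by the rule. Since \weaksafety (unlike full \safety) imposes no further requirements, this will suffice.

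For item~\ref{def:weak_FC}, I would fix $\objvalprime < \objval$ and assume $\formf \union \set{\objfunc \leq \objvalprime}$ is satisfiable. Weak \safety of $\pconfstd$ then yields an assignment satisfying $\coreset \union \set{\objfunc \leq \objvalprime}$, and since $\coreset' \subseteq \coreset$ this same assignment satisfies $\coreset' \union \set{\objfunc \leq \objvalprime}$, which is precisely item~\ref{def:weak_FC} for $\pconfdelete$.

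For item~\ref{def:CD}, the crucial observation is that the derived set of $\pconfdelete$ is empty. So, given a total assignment $\assmntrho$ satisfying $\coreset' \union \betterobjvalset$, I only need to exhibit some $\assmntrho' \better \assmntrho$ satisfying $\coreset' \union \emptyset \union \betterobjvalset$, i.e., again $\coreset' \union \betterobjvalset$. Taking $\assmntrho' = \assmntrho$ works, because $\better$ is reflexive: it is the conjunction of the preorder induced by $\pordernot$ on $\ovars$ (reflexive by the standing assumption that $\pordernot$ encodes a preorder) with the trivially true condition $\subst{\objfunc}{\assmntrho} \leq \subst{\objfunc}{\assmntrho}$. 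That closes the argument.

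I do not expect a real obstacle here; the one point that deserves care is recognising that it is exactly the requirement $\derivedset' = \emptyset$ built into the \uncheckeddeletion rule that makes item~\ref{def:CD} collapse to reflexivity. If a nonempty derived set were retained, deleting a core constraint not reconstructible from $\coreset'$ could break the "every model of the core can be pushed down via $\better$ to a model of core-plus-derived" property, so the emptiness of $\derivedset'$ is not a cosmetic side condition but the heart of why the rule preserves \weaksafety (while, as the surrounding text notes, not necessarily preserving full \safety).
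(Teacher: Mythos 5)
Your proof is correct and follows essentially the same route as the paper's: item~1 of weak validity survives because the core set only shrinks, and item~2 collapses to reflexivity of $\better$ precisely because the \uncheckeddeletion rule forces the derived set to be empty. The paper states both points in one line each; your version just spells out the details, and your closing remark about why $\derivedset' = \emptyset$ is essential matches the paper's own discussion surrounding the rule.
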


\begin{proof}
  Item~\ref{def:weak_FC} in \cref{def:validity} is clearly maintained
  when transitioning to~$\pconfdelete$ since the set of core
  constraints shrinks.  As to item~\ref{def:CD}, it is trivially
  satisfied when the set of derived constraints is empty.
%
\end{proof}

\subsection{Transfer Rule}
\label{sec:rule-transfer}

Constraints can always be moved from the derived
set~$\derivedset$ to the core set~$\coreset$
using the \introduceterm{transfer rule}.

\begin{definition}[Transfer rule]
  We can transition
  from~$\pconfstd$ to~$\pconfcprime$
  by the \emph{transfer rule}
  if
  $\baseset \subseteq \baseset' \subseteq \baseset \cup \derivedset$.
\end{definition}

This  transfer rule clearly preserves \safety (and weak \safety).

The transfer rule together with deletion allows replacing constraints
in the original formula with stronger constraints.
For example,
assume that $\varx + \vary \geq 1$ is in $\coreset$ and
that  
we derive $\varx \geq 1$.
Then we can
move $\varx \geq 1$ from
$\derivedset$ to~$\coreset$ and then delete $\varx + \vary \geq 1$.
The required \redruleshort{} check
\mbox{%
  $\set{\varx \geq 1,
    \neg(\varx + \vary \geq 1)} \cpderives \bot$}
is immediate.

The rules discussed so far 
in this section   
do not change~$\pordernot$,
and so any derivation using these rules only will operate with the
trivial preorder~$\pordertop$ imposing no conditions.
The proof system defined in terms of these rules is a straightforward
extension  of \veripb as developed
by \citet{%
  EGMN20Justifying,%
  GMMNPT20CertifyingSolvers,%
  GMN20SubgraphIso,%
  GN21CertifyingParity}
to an optimisation setting.
We next discuss the main contribution of this paper, namely the new
dominance rule making use of the
preorder encoding~$\pordernot$.

\subsection{\DOMRULELONG Rule}
\label{sec:dominiance-rule}

Any preorder~$\preceq$
induces a strict order~$\prec$
defined by $\alpha\prec \beta$ if
$\alpha\preceq\beta$ and $\beta\not\preceq\alpha$. (Note that if $\alpha\preceq\beta$ and $\beta\preceq\alpha$ both hold, this means that neither $\alpha\prec\beta$ nor $\beta\prec\alpha$ holds.)  
The relation~$\strictlybetter$ obtained in this way
from the preorder~\eqref{eq:def-better-relation} 
coincides with what 
\citet{CS15Dominance}
call
a \emph{dominance relation} in the context of constraint
optimisation. 
Our \domruleshort rule 
allows deriving a constraint~$\constrc$ from
$\coreset \cup \redundantset$
even if~$\constrc$ is not implied,
similar to the \redruleshort{} rule. 
%
However, for the \domruleshort rule an
assignment~$\assmntalpha$ satisfying $\coreset \cup \redundantset$ but 
falsifying~$\constrc$ 
only needs to be  
mapped to an 
assignment~$\assmntalpha'$ that satisfies
$\coreset$, but not necessarily~$\derivedset$ or~$\constrc$.
On the other hand, the new assignment~$\assmntalpha'$ should satisfy
the strict inequality~$\assmntalpha'\strictlybetter\assmntalpha$
and not just~$\assmntalpha'\better\assmntalpha$
as in the \redruleshort{} rule.
To show that this 
new \domruleshort   
rule preserves \safety, we will prove that it is
possible to construct an assignment that satisfies
$\coreset \cup \derivedset \cup \set{\constrc}$ by iteratively
applying the witness of the \domruleshort rule, in combination with
\safety of the configuration before application of the \domruleshort rule.  
As our base case, 
if $\assmntalpha'$ satisfies $\coreset \cup \derivedset \cup
\set{\constrc}$, 
we are done. Otherwise, since  
$\assmntalpha'$  satisfies~$\coreset$, 
by \safety we are guaranteed
the existence of an
assignment $\assmntalpha''$ 
satisfying  
$\coreset\cup\derivedset$ 
for which
$
\assmntalpha''
\strictlybetter   
\assmntalpha'
\strictlybetter
\assmntalpha
$
holds. 
If~$\assmntalpha''$ still does not satisfy~$\constrc$,
we can repeat the argument. 
In this way, we get a strictly decreasing sequence 
(\wrt~$\strictlybetter$) 
of assignments. Since the set of possible assignments is finite, 
this sequence will eventually terminate.

\providecommand{\union }{\!\cup\!}
\renewcommand{\union }{\cup}
 
Formally, we can derive~$\constrc$
by \domrulelong  
given  
a substitution~$\witness$ such that
\begin{equation} 
  \label{eq:dom_def_simple}
  \coreunionderivedunionobj \union 
  \set{\negc{\constrc}}
\cpderives
  \subst{\coreset}{\witness} \union 
  \porderform{\subst{\ovars}{\witness}}{\ovars} \cup
\neg \porderform{\ovars}{\subst{\ovars}{\witness}} \union 
  \objwitnessconditionset
  \eqcomma
\end{equation}
where $\porderform{\subst{\ovars}{\witness}}{\ovars} $ and $ \neg
\porderform{\ovars}{\subst{\ovars}{\witness}}$ 
together state that \mbox{$\alpha\circ\witness\prec \alpha$} for any
assignment~$\alpha$.
A minor technical problem is that the pseudo-Boolean formula  
$\porderform{\ovars}{\subst{\ovars}{\witness}}$ may contain multiple
constraints,
so that
the negation of it is no longer a PB formula. 
To get around this, we split~%
\eqref{eq:dom_def_simple} into two separate 
conditions
 and 
shift
$\neg\porderform{\ovars}{\subst{\ovars}{\witness}}$ 
to the premise of the implication, 
which eliminates the negation.

After this adjustment, the formal version of our
\introduceterm{\domrulelong rule}, or just
\introduceterm{\domruleshort rule} for brevity, can be stated as
follows.

\begin{definition}[\Domrulelong rule]
  \label{def:dominance-rule}
  If for a pseudo-Boolean constraint~$\constrc$ there is a
  \emph{witness} substitution~$\witness$ such that the conditions
  \begin{subequations}
    \begin{alignat}{1}
      & \coreunionderivedunionobj \union  
      \set{\negc{\constrc}} \cpderives \subst{\coreset}{\witness}
      \union 
      \porderformdom
      \union 
      \objwitnessconditionset\label{eq:dom:one}
      \\
      & \coreunionderivedunionobj
      \union   \set{\negc{\constrc}} \union   \porderformdomneg
      \cpderives
      \bot
      \label{eq:dom:two}
    \end{alignat}
  \end{subequations}
  are satisfied, then we can transition from~$\pconfstd$
  to~$\pconfaddderived{\set{\constrc}}$
  using \emph{\domrulelong}, or just
  \emph{\domruleshort} for brevity.
  In this case, we also say that $\constrc$ is derivable from
  $\pconfstd{}$ by \domruleshort.
%
%
\end{definition}

%

Just as for the \redruleshort{} rule, the witness~$\witness$
as well as any non-immediate derivations would have to be specified
in the proof log.

\begin{proposition}  
  If we can transition from
  $\pconfstd$ to $\pconfaddderivednobreak{\set{\constrc}}$ by
  the \domrulelong rule
  and
  $\pconfstd$ is [weakly] \safe, then
  \mbox{$\pconfaddderived{\set{\constrc}}$}
  is also [weakly] \safe.
%
\end{proposition}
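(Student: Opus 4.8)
The plan is to reuse the template of the proof already given for the \redruleshort{} rule. The \domrulelong{} rule changes only the derived set, passing from $\pconfstd$ to $\pconfaddderived{\set{\constrc}}$ while leaving $\formf$, $\coreset$, $\pordernot$, $\ovars$, and $\objval$ untouched; since items~\ref{def:weak_FC}, \ref{def:v-value}, and~\ref{def:FC} of \cref{def:validity} speak only about $\formf$, $\coreset$, $\objfunc$, and $\objval$, they are inherited verbatim from $\pconfstd$. So all the work lies in re-establishing item~\ref{def:CD} for $\pconfaddderived{\set{\constrc}}$, and this is precisely where the informal ``iterate the witness'' argument from the discussion preceding \cref{def:dominance-rule} must be made rigorous.

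For item~\ref{def:CD}, I would start from an arbitrary total assignment $\assmntrho$ satisfying $\coreunionobj$ and, invoking item~\ref{def:CD} for $\pconfstd$, assume without loss of generality that $\assmntrho$ also satisfies $\derivedset$. With $\assmntrho_0 = \assmntrho$, I would build a chain $\assmntrho_0, \assmntrho_1, \assmntrho_2, \dots$ of total assignments, maintaining the invariant that each $\assmntrho_i$ satisfies $\coreunionderivedunionobj$: if $\assmntrho_i$ satisfies $\constrc$, stop; otherwise $\assmntrho_i$ falsifies $\constrc$, hence satisfies $\coreunionderivedunionobj \cup \set{\negc{\constrc}}$, so by~\eqref{eq:dom:one} it satisfies $\subst{\coreset}{\witness} \cup \porderformdom \cup \objwitnessconditionset$, which --- by the same manipulation of substitution compositions used in the \redruleshort{} proof --- says that $\assmntrho_i \circ \witness$ satisfies $\coreset$, that $\subst{\objfunc}{\assmntrho_i \circ \witness} \leq \subst{\objfunc}{\assmntrho_i}$, and that $\assmntrho_i \circ \witness \assmntorder \assmntrho_i$; moreover by~\eqref{eq:dom:two} the assignment $\assmntrho_i$ cannot also satisfy $\porderformdomneg$, so $\assmntrho_i \not\assmntorder \assmntrho_i \circ \witness$ and therefore $\assmntrho_i \circ \witness \strictlybetter \assmntrho_i$. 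Since then $\subst{\objfunc}{\assmntrho_i \circ \witness} \leq \objval - 1$, the assignment $\assmntrho_i \circ \witness$ satisfies $\coreunionobj$, so item~\ref{def:CD} for $\pconfstd$ supplies a total $\assmntrho_{i+1} \better \assmntrho_i \circ \witness$ satisfying $\coreunionderivedunionobj$; combining transitivity of $\better$ with $\assmntrho_i \circ \witness \strictlybetter \assmntrho_i$ then gives $\assmntrho_{i+1} \strictlybetter \assmntrho_i$, restoring the invariant.

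The main obstacle --- indeed the only delicate point --- is the termination of this chain. I would argue it from the observation that $\better$ is a preorder (reflexive and transitive, as $\assmntorder$ is a preorder and integer $\leq$ is transitive), whence the strict relation it induces --- $\assmntalpha \strictlybetter \assmntbeta$ holds iff $\assmntalpha \better \assmntbeta$ holds while $\assmntbeta \better \assmntalpha$ does not --- is irreflexive and transitive; thus the $\assmntrho_i$, being strictly $\strictlybetter$-decreasing, are pairwise distinct. Since there are only finitely many total assignments over the variables, the chain must stop, and by construction it can only stop at some $\assmntrho_k$ that satisfies $\constrc$; that $\assmntrho_k$ satisfies $\coreset \cup \derivedset \cup \set{\constrc} \cup \betterobjvalset$ and $\assmntrho_k \better \assmntrho$, which is exactly what item~\ref{def:CD} demands for $\pconfaddderived{\set{\constrc}}$. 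This proves \weaksafety; \safety then follows at once, since items~\ref{def:v-value} and~\ref{def:FC} do not mention $\derivedset$ and so hold in $\pconfaddderived{\set{\constrc}}$ whenever they hold in $\pconfstd$.
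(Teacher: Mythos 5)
Your proof is correct and rests on exactly the same ideas as the paper's: one application of the witness sends a total assignment satisfying $\coreset \cup \derivedset \cup \set{\negc{\constrc}}$ to a strictly $\strictlybetter$-smaller assignment satisfying $\coreset$ (strictness coming from \eqref{eq:dom:two}), item~\ref{def:weak_CD} for the old configuration bounces it back into $\coreset \cup \derivedset$, and finiteness of the set of total assignments forces termination. The only difference is presentational: you build the strictly decreasing chain explicitly and argue it must halt, whereas the paper packages the identical well-foundedness argument as a minimal-counterexample contradiction, picking a $\strictlybetter$-minimal element of the set of assignments for which item~\ref{def:weak_CD} would fail.
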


\begin{proof}
  Since $\formf$, $\coreset$, and~$\objval$
  are not affected in a transition using the \domruleshort rule,
  items~\ref{def:weak_FC},~\ref{def:v-value}, and~\ref{def:FC}
  in  \cref{def:validity}   
  hold
  for~%
  $\pconfstd$ if and only if they hold
  for~%
  $\pconfaddderived{\set{\constrc}}$.
  Hence, we only need to prove that
  item~\ref{def:weak_CD} holds
  for~%
  $\pconfaddderived{\set{\constrc}}$.  
  Assume towards contradiction that it \emph{does not} hold.
  Let $S$ denote the set of
  total   
  assignments $\assmntalpha$  that 
  \begin{enumerate}[(1)]
  \item satisfy $\coreset\cup\{\objfunc\leq\objval-1\}$
    and 
  \item 
    admit no $\assmntalpha'\better\assmntalpha$ 
    satisfying $\coreset\cup\derivedset\cup\{\constrc\}$.
\end{enumerate}
By our assumption, $S$ is non-empty. 

Let $\assmntalpha$ be some $\strictlybetter$-minimal assignment in~$S$  (since $\strictlybetter$ is a strict order and $S$ is finite, minimal elements of $S$ exist).
Since $\pconfstd$ is \weaksafe, there exists 
some~$\assmntalpha_1 \better \assmntalpha$
that satisfies $\coreset\cup\derivedset$.
We know that $\assmntalpha_1$ cannot satisfy $\constrc$ since $\assmntalpha\in S$. 
Hence, $\assmntalpha_1$ satisfies
$\coreset\cup\derivedset\cup\{\neg \constrc\}$.
From \eqref{eq:dom:one} it 
follows that $\assmntalpha_1$ satisfies
$\porderformdom       \cup       \objwitnessconditionset$  
 and thus that
 $\pordernot(\subst{\vecz}{\assmntalpha_1\circ\witness},\subst{\vecz}{\assmntalpha_1})$
 and
 $\subst{\objfunc}{\assmntalpha_1\circ\witness}\leq\subst{\objfunc}{\assmntalpha_1}$ 
 hold.
 In other words, $\assmntalpha_1\circ\witness \better\assmntalpha_1$.  
 By~\eqref{eq:dom:two}, it follows that $\assmntalpha_1$ does not
 satisfy $\porderformdomneg$, i.e.,
 $\pordernot(\subst{\vecz}{\assmntalpha_1},\subst{\vecz}{\assmntalpha_1\circ\witness})$
 does not hold, 
 and thus
 $\assmntalpha_1\not\better\assmntalpha_1\circ\witness$.  
 Now let $\assmntalpha_2$ be $\assmntalpha_1\circ\witness$. 
 We showed that $\assmntalpha_2\strictlybetter\assmntalpha_1\better \assmntalpha$. 
 Furthermore, since $\assmntalpha_1$ satisfies
 $\coreset\cup\derivedset\cup\{\neg\constrc\}$, \eqref{eq:dom:one}
 yields that $\assmntalpha_2$ satisfies $\coreset$.  
 Thus $\assmntalpha_2$ satisfies $\coreset\cup\{\objfunc\leq\objval-1\}$. 
 Since $\assmntalpha_2\strictlybetter\assmntalpha$, and $\assmntalpha$ is a
 minimal element of $S$, it cannot be that $\assmntalpha_2\in S$.  
 Thus, there must exist a $\assmntalpha'\better \assmntalpha_2$ that
 satisfies  $\coreset\cup\derivedset\cup\{\constrc\}$.  
 However,
 it also holds that
 $\assmntalpha'\better\assmntalpha$, and since
 $\assmntalpha\in S$ this means that $\alpha'$ cannot satisfy
 $\coreset\cup\derivedset\cup\{\constrc\}$.
 This yields a
 contradiction, thereby finishing our proof.  
\end{proof}

%
%

When introducing the deletion rule, we already mentioned that deleting
arbitrary constraints can be unsound in combination with
\domrulelong. 
We now illustrate this phenomenon. 
\begin{example}\label{ex:unsound_deletion}
 Consider the formula 
 $F = \set{  p\geq 1}$
 with  
 objective $f\synteq 0$
 and the configuration 
 \begin{equation}
   (\coreset_1 = \{p\geq 1\},\derivedset_1 = \{p\geq 1\}, 
   \pordernot, \{p\}, \infty)
   \eqcomma   
 \end{equation}
 where $\pordernot(u,v)$ is defined as
 $\set{v+\olnot u\geq 1}$. 
 This configuration is \safe and  
  $\coreset\cup\derivedset$ is satisfiable. 
 \emph{If} we were allowed to delete constraints arbitrarily from
 $\coreset$, we could derive 
a configuration with
$\coreset_2 = \emptyset $ and $\derivedset_2= \{p\geq 1\}$. 
However, now
the \domruleshort rule can  derive
$C \synteq \olnot p \geq 1$,
using the 
witness
 $\witness = \set{p \mapsto 0}$. 
 To see that all conditions for applying \domrulelong are indeed
 satisfied, we notice that
\mbox{\eqref{eq:dom:one}--\eqref{eq:dom:two}}
simplify to
 \begin{subequations}
 \begin{alignat}{1}
      & \emptyset\union \{p\geq 1\} \cup \{p\geq 1\}\cpderives 
        \emptyset 
        \union \{p + 1 \geq 1\} \cup \emptyset 
\\
      & \emptyset\union \{p\geq 1\} \cup \{p\geq 1\} \union \{0 +\olnot p \geq 1\} \cpderives \bot
\end{alignat}
\end{subequations}
and both of these derivations are immediate
(e.g., by reverse unit propagation).
This means that we can derive a third configuration with
$\coreset_3 = \emptyset $ and
$\derivedset_3= \set{p\geq 1, \olnot{p} \geq 1}$,
from which we immediately get contradiction
$0 \geq 1$
by adding the two constraints in~$\derivedset_3$,
although the formula~$\formf$ that we started with is satisfiable.
\end{example}

\begin{remark}
  The \uncheckeddeletion rule introduced in \cref{def:unchecked_del}
  requires that the
  derived  
  set~$\derivedset$ be empty. 
  \cref{ex:unsound_deletion} gives the motivation for this,
  highlighting the complex interplay between
  \domrulelong and deletion  of constraints.
  However, in the special case where the pre-order is the trivial
  order $\pordertop$, the dominance rule can only be used to derive
  implied constraints. 
  In this case, we could in principle weaken the condition for
  \uncheckeddeletion to also allow transitioning  
  from   $(\coreset, \derivedset, \pordertop, \ovars,  \objval)$ to
  any configuration~$(\coreset', \derivedset, \pordertop, \ovars,
  \objval)$  whenever $\coreset'\subseteq \coreset$.  

  Allowing
  such a further relaxation of the deletion rule
  might be
  especially useful if one does not want to make a
  distinction between core and derived constraints in proof logging
  applications
  such as SAT solving. 
  However, if we would want to extend the \uncheckeddeletion rule to
  cover also this case, then such \uncheckeddeletion transitions
  would no longer
  preserve weak validity.
  In addition, our invariant would need to be extended with a special
  case stating that item~\ref{def:weak_CD}
  in  \cref{def:validity}  
  should hold only if   $\pordernot$ is nontrivial.
%
  For reasons of mathematical elegance, we  chose not to adopt such a
  specially tailored version of the \uncheckeddeletion rule in the
  current paper. We would like to point out, however, that such a
  version of \uncheckeddeletion has been implemented in the formally
  verified pseudo-Boolean proof checker
  used in 
  the SAT
  Competition 2023 \cite{BMMNOT23DocumentationVeriPB}.
\end{remark}

\subsection{Preorder Encodings}\label{sec:order_definition}
   
As mentioned before, $\pordernot$ is shorthand for a pseudo-Boolean
formula~$\porderform{\vecu}{\vecv}$
over two sets of 
placeholder variables
$\vecu = \set{\varu_1, \ldots, \varu_\vecend}$
and
$\vecv = \set{\varv_1, \ldots, \varv_\vecend}$
of equal size, which should also match the size of~$\vecz$
in the configuration.
To use 
$\pordernot$  in a proof,
it is required to 
show
that 
this formula encodes
a preorder.
This is done by providing 
(in a proof preamble) cutting planes derivations
establishing   
\begin{subequations}
 \begin{alignat}{1} 
\label{eq:order-reflex}
  &\emptyset \cpderives\porderform{\vecu}{\vecu}\\
%
  \label{eq:order-trans}
  &\porderform{\vecu}{\vecv}
  \union
  \porderform{\vecv}{\vecw}
  \cpderives
  \porderform{\vecu}{\vecw}
\end{alignat}
\end{subequations}
where \eqref{eq:order-reflex} formalizes reflexivity and
\eqref{eq:order-trans} transitivity
(and where  notation like 
$\porderform{\vecv}{\vecw}$
is shorthand for applying to 
$\porderform{\vecu}{\vecv}$
the substitution~$\witness$ that maps
$\varu_i$ to~$\varv_i$
and
$\varv_i$ to~$\varw_i$,
as discussed in \cref{sec:prelims}).
These two conditions guarantee that 
the relation~$\assmntorder$ defined by
$\assmntalpha \assmntorder \assmntbeta$
if 
$\porderform{\subst{\vecz}{\assmntalpha}}{\subst{\vecz}{\assmntbeta}}$
forms a preorder on the set of assignments. 

By way of example,
to encode the lexicographic order
$
\varu{}_1 \varu{}_2 \ldots \varu_\vecend 
\lexorder
\varv{}_1 \varv{}_2 \ldots \varv_\vecend,$
we can use a single constraint
\begin{equation}
  \label{eq:encode-LL}
  \plexorderformstd
  \synteq 
  \sumnodisplay_{i=1}^{\vecend} 2^{\vecend{} - i} \cdot (\varv_i - \varu_i) \geq 0
  \eqperiod
\end{equation}
Reflexivity is vacuously true since
$
\plexorderform{\vecu}{\vecu} \synteq 0 \geq 0
$, 
and transitivity also follows easily since adding
$\plexorderform{\vecu}{\vecv}$
and
$\plexorderform{\vecv}{\vecw}$
yields
\ifthenelse{\boolean{briefconfversion}}
{$\plexorderform{\vecu}{\vecw}$.}
{$\plexorderform{\vecu}{\vecw}$
  (where 
  we tacitly assume that the constraint resulting from this
  addition is implicitly simplified 
  by collecting like terms, performing any cancellations, and shifting
  any constants to the right-hand side of the inequality,
  as mentioned in
  {\cref{sec:prelims}}).}

A potential concern with encodings such as~\eqref{eq:encode-LL} is
that coefficients can become very 
large as the number of variables in the order grows.
It is perfectly possible to
address this by allowing order encodings 
using
auxiliary
variables in addition to~$\vecu$ and~$\vecv$. 
We have chosen not to develop the theory for this in the current
paper,
however,
since we feel that 
it would make the exposition significantly more complicated without
providing a commensurate gain in terms of scientific contributions.

\subsection{Order Change Rule}

The final proof rule that we need is a rule for introducing a nontrivial order, 
and it turns out that it can also be convenient to be able
to use different orders at different points in the proof.
Switching orders is possible, 
but to maintain soundness it is important to
first clear the set~$\derivedset$ 
(after transferring the constraints we want to keep to~$\baseset$
using the transfer rule in \cref{sec:rule-transfer}). 
The reason for this is simple: if we 
allow arbitrary order
changes,
then
item~\ref{def:CD} 
of \weaksafety would no longer hold,
and we could potentially derive contradiction even from satisfiable
formulas.
However,  when
$\derivedset=\emptyset$
the condition in  item~\ref{def:CD} 
is trivially true.

\begin{definition}[Order change rule]
  Provided that
  $\pordernottwo$ has been 
  established
  to be a
  preorder over $2\cdot n$ variables
  (by proofs of
  \eqref{eq:order-reflex} and~\eqref{eq:order-trans}
  with explicit cutting planes derivations)  
  and provided that $\ovars_2$ is a list of $n$ variables,
  we say that we can 
  transition from the
  configuration $\pconfpreorderchange$ to the configuration
  $\pconfpostorderchange$ by the \emph{order change rule}.
\end{definition}


As explained above, it is clear that this rule 
preserves  \safety (and weak \safety).

\subsection{Concluding Remarks
  on the Proof System for Dominance Breaking
}

This concludes the presentation of our proof system,
which we have defined in two slightly different flavours with
different conditions for the deletion rule.
In case we use the version with
\uncheckeddeletion, each rule in the proof system
has been shown to preserve
\weaksafety, where as in the version with
(standard) deletion each derivation rule has been shown to preserve
\safety.  
The initial
configuration is clearly \safe.   
Therefore, by \cref{thm:weak_inv_sound} 
our proof system is sound:
whenever we can derive a configuration $\pconfstd$ 
such that $\coreunionderived$ contains $0 \geq 1$, 
it holds that
$v$ is at least the value of~$\objfunc$ in any 
\fminimal solution
for~$\formf$
(or, for a decision problem, we have
$v = \infty$ only when $\formf$ is unsatisfiable).
If there are no applications of the \uncheckeddeletion rule, then
the final configuration is also \safe, and hence by
\cref{thm:inv_sound}
it holds that
$v$ is \emph{exactly} the value of~$\objfunc$ in any 
\fminimal solution
for~$\formf$
(or, for a decision problem, we have
$v = \infty$ if and only if $\formf$ is unsatisfiable).
As mentioned above,
in this latter case
the full sequence of
proof   
configurations~$\pconfstd$ together with annotations about the
derivation steps---including, in particular, any
witnesses~$\witness$---contains all information needed to efficiently
reconstruct such an \fminimal solution
for~$\formf$.
It is also straightforward to show that our proof system is complete:
after using the bound update rule to log an optimal
solution~$\bestobjval$,
it follows from the implicational completeness of 
cutting planes 
that contradiction can be derived from
$
\formf \union \set{\objfunc \leq \bestobjval - 1}
$.

Building on the ideas developed in this section, other variants of the proof system could be designed. 
For instance, we opted to define the relation $\better$ as 
\begin{equation}
  \assmntalpha\better\assmntbeta 
  \quad
  \text{if}
  \quad
  \assmntalpha\preceq\assmntbeta \text{ and }
  \subst{f}{\assmntalpha}\leq \subst{f}{\assmntbeta}
  \eqcomma
\end{equation}
but an alternative definition could be 
\begin{equation}
  \assmntalpha\better'\assmntbeta 
  \quad
  \text{if}
  \quad
  \subst{f}{\assmntalpha}\leq \subst{f}{\assmntbeta}
  \text{ and if }	\subst{f}{\assmntalpha}= \subst{f}{\assmntbeta} \text{ then also }	\assmntalpha\preceq\assmntbeta 
  \eqperiod
\end{equation}
What this 
says, essentially, 
is that we only compare assignments with respect to the
order~$\preceq$ in case they have the same objective value.  
Combining this with the intuition that 
a  
witness~$\witness$ for the \redruleshort rule should map each
assignment $\assmntalpha$ to an assignment that is at least as good
(in terms of $\better'$) as $\assmntalpha$  would here then yield
the conditions
\begin{subequations}
  \begin{alignat}{1}
    &\coreset \union \derivedset \union \set{\negc{\constrc}}
    \cpderives \subst
    {(\coreset \union \derivedset \union \constrc)}
    {\witness}
    \union 
    \objwitnessconditionset
    \text{ and}\\
    &\coreset \union \derivedset \union \set{\negc{\constrc}} \union \set{\subst{\objfunc}{\witness} = \objfunc }
    \cpderives 
    \porderformsub
  \end{alignat}
\end{subequations}
for the \redrulelong rule
instead of~\eqref{eq:substredundant}.  
It can be observed that this actually results in slightly weaker proof
obligations
on the right-hand side
and thus a more generally applicable rule.  
Similarly, for the dominance rule, the witness should map each
assignment $\assmntalpha$ to an assignment that is \emph{strictly
  better} than $\assmntalpha$ (again, in terms of $\better'$).  
This requirement would be encoded by the conditions
\begin{subequations}
  \begin{alignat}{1}
    & \coreset \union  
    \derivedset{} \union  
    \set{\negc{\constrc}} \cpderives \subst{\coreset}{\witness}
    \union 
    \objwitnessconditionset\eqcomma\label{eq:domalt:one}
    \\
    & \coreset \union  
    \derivedset{} \union  
    \set{\negc{\constrc}} \union \set{\subst{\objfunc}{\witness} = \objfunc }\cpderives 
    \porderformdom\eqcomma\text{ and}
    \label{eq:domalt:two}
    \\
    & \coreset
    \union  \derivedset
    \union   \set{\negc{\constrc}} \union \set{\subst{\objfunc}{\witness} = \objfunc } \union   \porderformdomneg
    \cpderives
    \bot
    \label{eq:domalt:three}
  \end{alignat}
\end{subequations}
instead of~\eqref{eq:dom:one} and~\eqref{eq:dom:two}. 
This again results in a slightly more generally applicable rule. 
All proofs of correctness go through with this slight modification of
the proof system, and in fact would go through for any alternative to
the order $\better$, as long as the \redruleshort and \domruleshort
rules are adapted accordingly.  
In this paper we opted for using $\better$, prioritizing simplicity of
exposition.

\renewcommand\lfalse{\m{0}}
\renewcommand\ltrue{\m{1}}
\renewcommand\perm{\m{\sigma}} 

\section{Symmetry Breaking in SAT Solvers}
\label{sec:symmetrybreaking} 

In this section we discuss the use of symmetry breaking in the context
of Boolean satisfiability (SAT) solving and how our proof system can
be used to provide efficiently verifiable proofs of correctness for
symmetry breaking constraints.
We also present results
from 
a thorough empirical evaluation.

\subsection{Certified Symmetry Breaking  with
  \DOMRULELONG
}
\label{sec:symbreaking-basic}

Symmetry handling has
a long and successful history in  SAT
solving, with a wide variety of techniques 
considered by, e.g.,
\citet{%
shatter,%
BS94Tractability,%
BNOS10Enhancing,%
DBDDM12Symmetry,%
DBB17SymmetricExplanationLearning,%
DBLP:conf/nfm/MetinBK19}, and by 
\citet{Sabharwal09SymChaff}. 
These techniques were used to great effect in, e.g., 
the 2013 and 2016 editions of the 
\emph{SAT competition},\footnote{\url{www.satcompetition.org}}
where the 
\emph{SAT+UNSAT hard  combinatorial track} 
and the 
\emph{no-limit} track, respectively,
were won by solvers employing symmetry breaking
techniques.  
However, it
later turned out that the victory in 2013 was partly
explained by a small parser bug in the symmetry breaking tool,
which could result in the solver taking a shortcut and declaring a
formula unsatisfiable before even starting to solve it.
For reasons such as this, proof logging is now obligatory in the main
track of the SAT competition. 
While it is hard to overemphasize the importance of this development,
and the value proof logging has brought to the SAT community,
it has unfortunately also meant that it has not been possible to use
symmetry breaking in the SAT competition, since there has been no way
of efficiently certifying the correctness of
such reasoning in \drat.
We will now explain how pseudo-Boolean reasoning with
the dominance rule  can provide proof logging for 
the \emph{static symmetry breaking}
techniques of~\citet{DBBD16ImprovedStatic}.

Let $\perm$ be a permutation of the set of literals in a given CNF
formula~$\formf$ (\ie a bijection on the set of literals), extended
to (sets of) clauses in the obvious way.
%
We say that~$\perm$
is a \emph{symmetry} of~$\formf$ if it 
commutes with negation,
\ie \mbox{$\perm (\olnot \litell)=\olnot {\perm (\litell)}$},
and preserves satisfaction of~$\formf$,
\ie 
$\compassmtafterassmt{\assignment}{\perm}$
satisfies $\formf$ if and only if $\assignment$~does.
A \emph{syntactic symmetry} 
in addition satisfies that
$
\perm(\formf) \synteq \restrict{\formf}{\perm} \synteq \formf$.
As is standard
in symmetry breaking,  
we
will   
only consider syntactic symmetries.

The most common way of breaking symmetries is by adding 
\emph{lex-leader constraints}~\cite{CGLR96Symmetry}.
We will write
$\lex$ to denote the lexicographic order on assignments induced by
the sequence of variables $x_1, \dots, x_m$, \ie  
$\assignment \lex \beta$ 
if $\assignment = \beta$ or 
if there is an~$i \leq m$ such that
$\assignment(x_j)=\beta(x_j)$
for all~$j < i$
and 
$\assignment(x_i) < \beta(x_i)$.
 Given a set~$\generatorset$ of symmetries of $\formf$,
a lex-leader constraint is a formula $\LLconstr$ such that
$\assignment$ satisfies $\LLconstr$ if and only if
$
\assignment \lex \assignment\circ\perm
$
for each $\perm \in G$.
The actual choice of a set $\generatorset$ of symmetries to
break,
as well as the
choice of variables on which to define the lexicographic order,
has 
a significant effect on the quality of
the breaking constraints 
\cite{DBBD16ImprovedStatic},
but this is an orthogonal concern to the goals of the current paper,
which is to show how to certify an encoding of lex-leader constraints
using the dominance rule.  


Let $\{x_{i_1},\ldots,x_{i_n}\}$ be
the \emph{support} of~%
$\perm $ 
(\ie
all variables $x$ such that $\perm (x)\neq
x$), ordered 
so that $i_j\leq i_k$ 
if and only if
$j\leq k$. 
Then the constraints 
\begin{subequations}
\begin{align}
 &y_0  \geq 1 & \label{eq:init}\\
 &\olnotwindex{ y}{j-1} + \olnotwindex{ x}{i_j} + \perm (x_{i_j})   \geq 1  
&&  1\leq j \leq n \label{eq:breaking}\\
 &\olnotwindex{ y}{j} +  y_{j-1}   \geq 1 
&&  1\leq j < n \label{eq:Ts1}\\
 &\olnotwindex{ y}{j} +  \olnot{ \perm (x_{i_j})} +  x_{i_j}  \geq 1 
&&  1\leq j <  n  \label{eq:opposite}\\
 &y_{j}  +  \olnotwindex{ y}{j-1}  +  \olnotwindex{ x}{i_j}  \geq 1  
&&  1\leq j < n\\ 
 &y_{j}  +  \olnotwindex{ y}{j-1}  +  \perm ( x_{i_j})  \geq 1  &&  1\leq j < n
\label{eq:TsLast} 
\end{align}
\end{subequations}
%
form
a lex-leader
constraint for $\perm $
(for fresh variables $y_j$).
The intuition 
behind this encoding 
is as follows:
the variable $y_j$ is true
precisely when
$\assignment$ and $\assignment\circ\perm$ 
are equal up to $x_{i_j}$
(so  $y_0$~is trivially true as claimed in~\eqref{eq:init}). 
The constraint~\eqref{eq:breaking} does the actual symmetry breaking: it
states that if
$\assignment$ and $\assignment\circ\perm$ 
are equal up to $x_{i_{j-1}}$, then 
$x_{i_j}\leq \perm (x_{i_j})$ must hold.
The constraints
\mbox{\refeq{eq:Ts1}--\refeq{eq:TsLast}}
encode the definition of 
$y_j$ as $y_{j-1} \land
(x_{i_j}\lequiv \perm (x_{i_j}))$.

To derive
the clausal constraints
\mbox{\eqref{eq:init}--\eqref{eq:TsLast}}
in our proof system,
assume that we have a configuration
$(\coreset, \derivedset, \pordernot, 
\vecx,
\objval)$
where assignments are compared lexicographically
on 
the subset of variables  
$\vecx = \set{\varx_1,\dots,\varx_m}$
according to~\orderConstraint as
defined   
in~\refeq{eq:encode-LL}.
%
%
Let $\perm$ be a syntactic symmetry of 
$\baseset$ (\ie such that 
\mbox{$\subst{\baseset}{\perm} \synteqnospace \baseset$})
with support contained in~$\vecx$.
As a first step, we use the \domrulelong rule  in
\cref{sec:dominiance-rule}
to derive
the pseudo-Boolean constraint
\begin{equation}
  \label{eq:CLL}
 \LLc 
 \synteq 
 \textstyle
 \sum_{i=1}^{m}
 2^{m- i} \cdot (
 \perm(\varx_i) 
 - \varx_i
 ) \geq 0
\end{equation}
expressing that $\perm(\vecx)$ is 
greater than or equal to~$\vecx$.
We emphasize that the constraint~$\LLc$ only exists in the proof and 
is nothing that the SAT solver will
see---since it only understands clauses---but this constraint will
help us to construct efficient derivations of the clausal constraints
that will be used by the solver to enforce symmetry breaking.

To see how the constraint~$\LLc$ in~\eqref{eq:CLL}
can be derived by the \domruleshort rule,
note first that in SAT solving we are dealing with
decision problems, and so there is no need to worry about the trivial
objective function
$\objfunc \synteq 0$.
Second, observe that we have
\mbox{$\order{\vecx}{\subst{\vecx}{\perm} }\synteq\{\LLc\}$},
According to \cref{def:dominance-rule},
we need to show that derivations as
in~\mbox{\refeq{eq:dom:one}--\refeq{eq:dom:two}}
exist.
To argue that \eqref{eq:dom:one} holds, 
we note that $\lnot \LLc$ expresses that $\vecx$ is 
strictly larger than $\perm(\vecx)$, and hence 
this 
implies $\pordernot(\subst{\vecx}{\perm},\vecx)$.
Since these are single pseudo-Boolean constraints,
this is easily verifiable by \syntimplication.
It is also easy to verify that \eqref{eq:dom:two}~is  true,
since we have both~$\LLc$ and its negation among the premises on
the left-hand side,
and for any pseudo-Boolean constraint~$\constrc$ it holds that adding
$\constrc$ and~$\neg \constrc$ together yields
$0 \geq 1$.

Once we have~$\LLc$ as in~\eqref{eq:CLL}, we can use this to obtain
the constraints
\mbox{\eqref{eq:init}--\eqref{eq:TsLast}}.
Since all \mbox{$y_j$-variables} are fresh, it is straightforward to
derive all constraints~\eqref{eq:init} and \eqref{eq:Ts1}--\eqref{eq:TsLast}
using \redrulelong as explained by~\citet{GN21CertifyingParity}.
It is important to note that in these derivations the witness
substitutions only operate on the new, fresh \mbox{$y_j$-variables}.
Hence, we have
$\subst{\vecx}{\witness} = \vecx$,
and so
$\pordernot(\subst{\vecx}{\witness},\vecx)$ holds by
the reflexivity of~$\pordernot$.
A more interesting challenge is to derive the 
constraints~\eqref{eq:breaking}, and this is where we need~\LLc.

Recall that our assumption is that
the support of $\perm$ is $
\{x_{i_1},\dots, x_{i_n}\}$ with $i_j\leq i_k$ 
if and only if $j\leq k$.  
Note first
that for all variables~$x_i$ that are not
in the support of \perm,
the difference
$ \perm(\varx_i) - \varx_i $
disappears since 
$\perm(x_i) = x_i$.
This means that the constraint~\LLc simplifies to
\begin{equation}
\textstyle
\sum_{j=1}^{n} 2^{m- i_j} \cdot (
 \perm(\varx_{i_j}) 
 - \varx_{i_j}
 ) \geq 0
\eqcomma
\end{equation}
and this inequality can only hold if the contributions of the
variables with the largest coefficient~$2^{m- i_1}$ is non-negative.
In other words, the constraint~$\LLc$
implies that
\mbox{$ \perm(\varx_{i_1})  - \varx_{i_1} \geq 0$},
and this implication can be verified by reverse unit propagation
(RUP).
From this, in turn, we can obtain the
constraint~\eqref{eq:breaking}
for $j=1$
by
\syntimplication.


\newcommand{\substineqref}[3]{(\ref{#1}\,[{#2}={#3}])}

To derive constraints~\eqref{eq:breaking}
for $j>1$, let us introduce the notation
\begin{subequations}
  \begin{align}
\LLc(0) 
  &\synteq   
    \LLc
  \\
  \LLc(k) 
  &\synteq   
    \LLc(k-1) +  2^{m- i_k} \cdot
 \substineqref{eq:opposite}{j}{k}
\end{align}
\end{subequations}
where 
$\substineqref{eq:opposite}{j}{k}$
denotes substitution of~$j$ by~$k$ in~\eqref{eq:opposite}.  
Simplifying $\LLc(k)$ yields
\begin{equation}
	\label{eq:simplified-C-LL-k}
	\textstyle
	\sum_{j=1}^{k} 2^{m- i_j}
        \cdot  
        \olnot{y}_j + \sum_{j=k+1}^{n} 2^{m- i_j} \cdot (
	\perm(\varx_{i_j}) 
	- \varx_{i_j}
	) \geq 0
	\eqcomma
\end{equation}
which,
when combined with
all constraints~\eqref{eq:Ts1},
directly entails
the constraint~%
\eqref{eq:breaking}
\mbox{with $j=k$.} 
To see this, note that if $y_k$ is false, then
\eqref{eq:breaking}~is trivially true for $j = k + 1$.
On the other hand, if $y_k$ is true, then so
are all the preceding 
\mbox{$y_j$-variables},
and the dominant
contribution in~$\LLc(k)$ is from
$\perm(\varx_{i_k})  - \varx_{i_k}$,
which implies~$\eqref{eq:breaking}$ for $j= k$
analogously to the case  for $j= 1$.


It is important to note here that the order used for the \domrulelong
is fixed  at the beginning and remains the
same for all symmetries $\perm\in G$ to be broken.  
Since constraints are added only to
the derived set~
$\derivedset$, 
dominance rule applications for different symmetries will not
interfere with each other.
Furthermore,
in contrast to the approach 
of  \citet{HHW15SymmetryBreaking},
handling a symmetry once is enough to
guarantee complete breaking.
In \cref{sec:sym-appendix} we include a complete worked-out example of
symmetry breaking in \veripb syntax together with explanations of how
the proof logging 
syntax matches rules in our proof system.

\subsection{Relation to \drat-Style Symmetry Breaking}


As discussed in the introduction, it is currently not known whether
general symmetry breaking can be certified efficiently using \drat
proof logging.
Previous work on symmetry breaking with \drat
\cite{HHW15SymmetryBreaking}
is limited to special cases of simple symmetries.
%
%
To compare and contrast this with our work, let us describe the 
method of
\citet{HHW15SymmetryBreaking}
in our language
(which is easily done, since the \rat rule is a special case of our
\redrulelong rule),
and explain the difficulties in extending this to a
general symmetry breaking method.

%
%

\Redrulelong can easily deal with a single simple
symmetry~$\sigma$ that
swaps two variables, say $x_i$ and $x_j$ (with $j>i$). In this case,
the constraint~$\LLc$ in~\eqref{eq:CLL} simplifies to 
\begin{equation}
  x_j - x_i \geq 0
  \eqcomma
\end{equation}
which can be derived with a single application of the
\redruleshort rule with the symmetry~$\sigma$ serving as the witness.
However, this approach no longer works when several symmetries need to
be broken at the same time, or when we need to deal with more complex
symmetries.

When multiple symmetries that swap two variables are
involved, the second symmetry cannot necessarily be broken in the way
described above, since the witness for the second symmetry might
invalidate the symmetry breaking constraint for the first symmetry
(which has been added to the set of \coreconstraint{}s, and so
will appear as one of the proof obligations on the right-hand side  
in~\eqref{eq:substredundant}). 
If there are two symmetries $\sigma_1$ and~$\sigma_2$ that share a
variable, then these symmetries might need to be applied three times
in order to  obtain a lexicographically minimal assignment.
While this makes the proof logging more complicated, it is still
possible to deal with this scenario in \drat proof logging
by making use of a  sorting network  encoding as explained by
\citet{HHW15SymmetryBreaking}.

A more serious problem is when the symmetries to be broken are more
complex than just swaps of a single pair of variables. 
For the case of \emph{involutions}~$\sigma$ (\ie
where $\sigma$ is its own inverse),
some steps towards a solution have been taken
\cite[Conjecture~1]{HHW15SymmetryBreaking}.
But already a symmetry~$\sigma$ that is a cyclic shift of three
variables (\ie such that
$\sigma(x)=y$, $\sigma(y)= z$, and $\sigma(z)=x$)
brings us beyond what \drat-based proof logging symmetry breaking is
currently able to handle. 
The obstacle that arises here is that we do not know in advance
whether the permutation~$\sigma$ should be applied once or twice to an
assignment~$\assmntalpha$ to get the lexicographically smallest
assignment in the orbit of~$\assignment$ under~$\sigma$.

The beauty of the \domrulelong rule is that it completely eliminates
these problems. There is no requirement that our witness substitutions
should
generate minimal assignments---all that is needed is that the
witnesses yield smaller assignments.
And since the symmetry breaking constraints are all added to the
derived set~$\derivedset$,
we do not need to worry about what previous symmetry breaking
constraints might have been added when we are breaking the next
symmetry. Instead, symmetry breaking constraints for different
symmetries can be added independently of one another (for as long as
the order remains unchanged).

\subsection{%
  Extensions of
  Basic Symmetry Breaking}

%
%

So far we have discussed the core ideas that underlie most modern
symmetry breaking tools for SAT solving.  The tool \breakid
\mycite{breakid} extends these ideas further in a couple of ways.  We
now briefly discuss these extensions and how they are dealt with in
our proof system.

The most 
important   
contribution of \mycitet{breakid} is detecting so-called \emph{row interchangeability}. 
The goal of this optimization is to not just take an arbitrary set of
generators of the symmetry group and an arbitrary lexicographic
order, but to choose ``the right'' set of generators and ``the
right'' variable order 
(with
respect to 
which to define the lexicographic order).
\mycitet{breakid}
showed that for groups that exhibit a certain structure, breaking
symmetries of a good set of
generators
using an appropriate  order
can guarantee that the entire symmetry
group
is broken completely.  
Since our
proof  
logging techniques simply use the same lexicographic order
as the
symmetry   
breaking tool, and work for an arbitrary generator set, this 
automatically
works with the techniques described above.

Another (optional) modification to the basic symmetry breaking
techniques implemented in  \breakid is the use of a more
compact  encoding,
where the clauses~\eqref{eq:Ts1} and~\eqref{eq:opposite}
are omitted. 
Since our definition of $\LLc(k)$ uses these clauses, we cannot simply
omit them in our proof.
However, there are no restrictions on deletions from the derived
set~$\derivedset$.
Therefore, we can first derive all the symmetry breaking constraints
as described above, and then remove the superfluous clauses
from~$\derivedset$ as soon as
they are no longer needed for the proof logging derivations.

Next, \breakid has an optimization based on \emph{stabilizer subgroups}
to detect a large number of binary clauses.  Since these binary clauses
are all clauses of the form \eqref{eq:breaking} with $j=1$,
the proof logging techniques described above could support also 
this optimization
provided
that we keep track of which symmetry is used for each such binary 
clause. However, \breakid
currently does no such bookkeeping.   
While it is in principle possible to add this feature, 
we have not done so in this work.

Finally, \breakid supports \emph{partial symmetry breaking}.  
That is, instead of adding the constraints
\eqref{eq:breaking}--\eqref{eq:TsLast} for 
every~$j$, 
this is
only done for $j<L$ with $L$ a limit that can be chosen by the user.  
The reasoning behind this is that the larger $j$ gets, the weaker the
added breaking constraint is. 
By setting $L = 100$ and adding symmetry breaking clauses only
for the  $L$ first variables,
we add significantly fewer constraints while retaining most of the
symmetry breaking power.

Since we only need to do proof logging for the clauses that are
actually added by \breakid, this optimization works out-of-the-box. 
However, there is an important caveat here: 
for benchmark formulas
where there 
are 
large symmetries, e.g., symmetries permuting all the variables in the
problem,
a naive implementation of our proof logging technique will suffer from
serious performance problems even when this optimisation is used.
The reason is that in principle the order~$\pordernot$ is defined on
all variables that are permuted by the symmetries. If there are many
such variables, this order in itself can get huge
(with the largest coefficient being of exponential magnitude measured in
terms of the number of variables).
Luckily, there is a simple solution to this problem: instead of
defining the order on all variables that are permuted, we can use   
the set of variables on which we will actually do
the symmetry   
breaking
(which for each symmetry are the first $L$ variables in its support).
This is the approach implemented in our experimental evaluation.

\subsection{Experimental Evaluation of SAT Symmetry Breaking}

\begin{figure*}[t]
  \centering
  \includegraphics[width=0.8\textwidth]{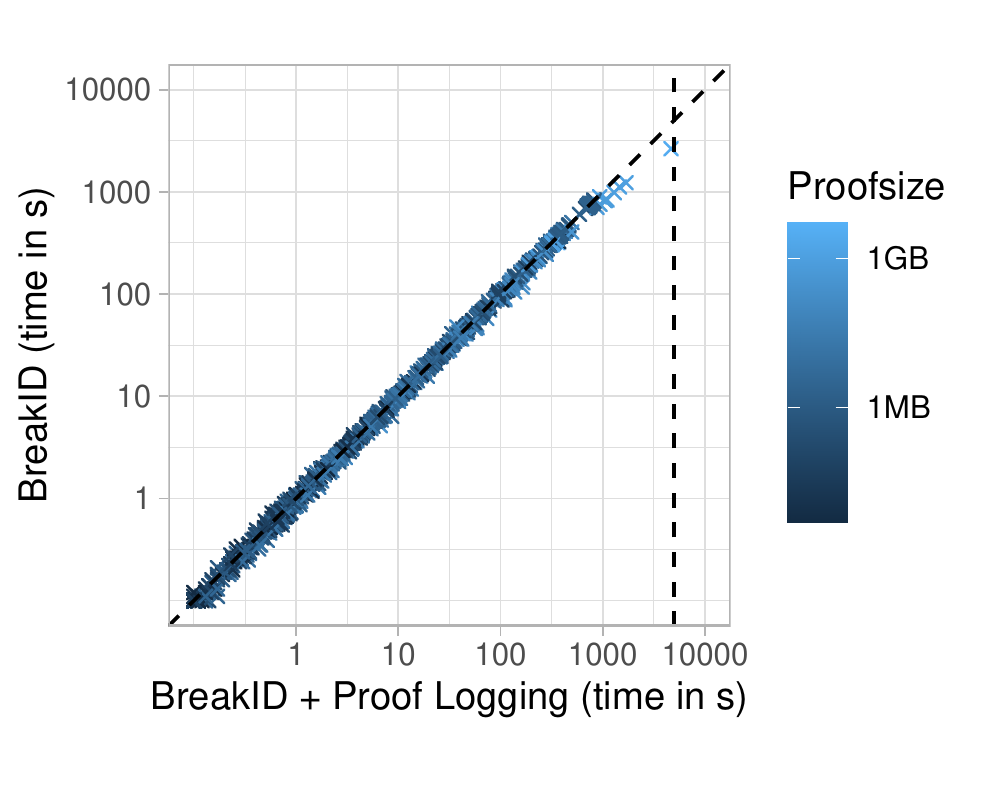}
  %
  \caption{
    Time required for symmetry breaking with and without proof logging.
}
  \label{fig:plot1}
\end{figure*}

\begin{figure*}[t]
  \centering
  \includegraphics[width=0.8\textwidth]{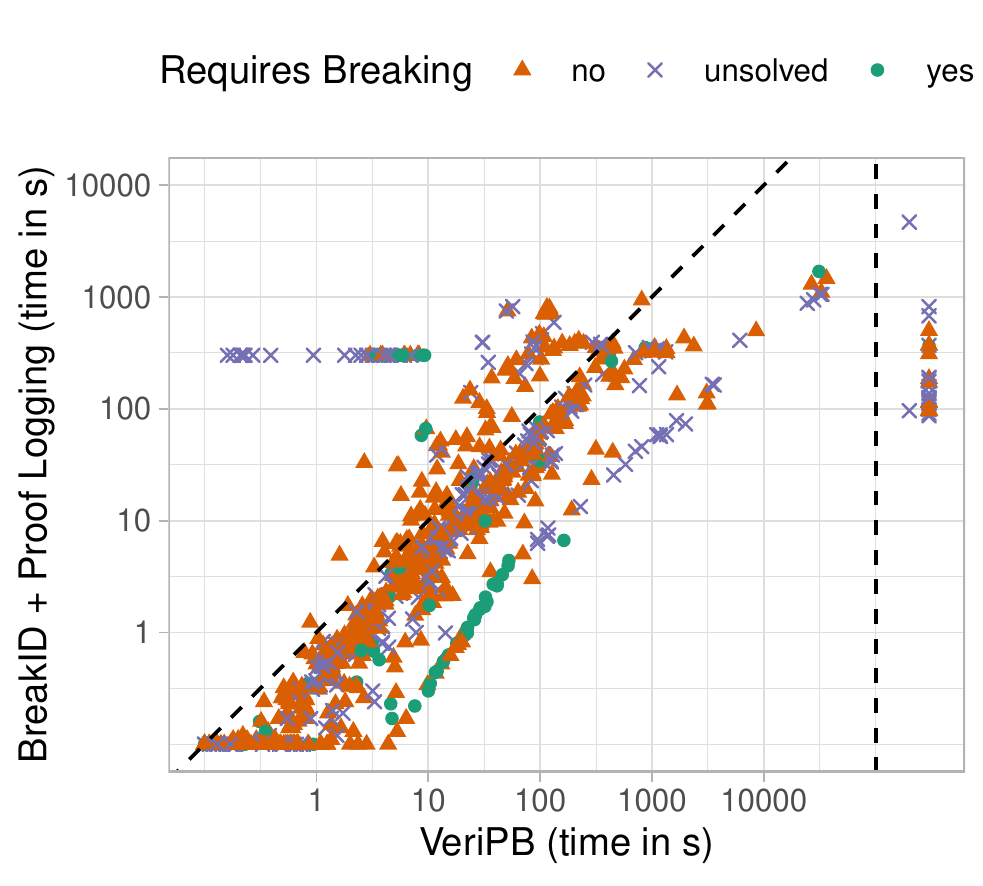}
  \caption{%
    Time for symmetry breaking with proof logging compared
    to verification time for the generated symmetry breaking constraints.
    Points
    to the right of
    the vertical dashed line indicate timeouts (left) and
    out of memory (right).}\label{fig:plot2}
\end{figure*}

\begin{figure*}[t]
  \centering
  \includegraphics[width=0.8\textwidth]{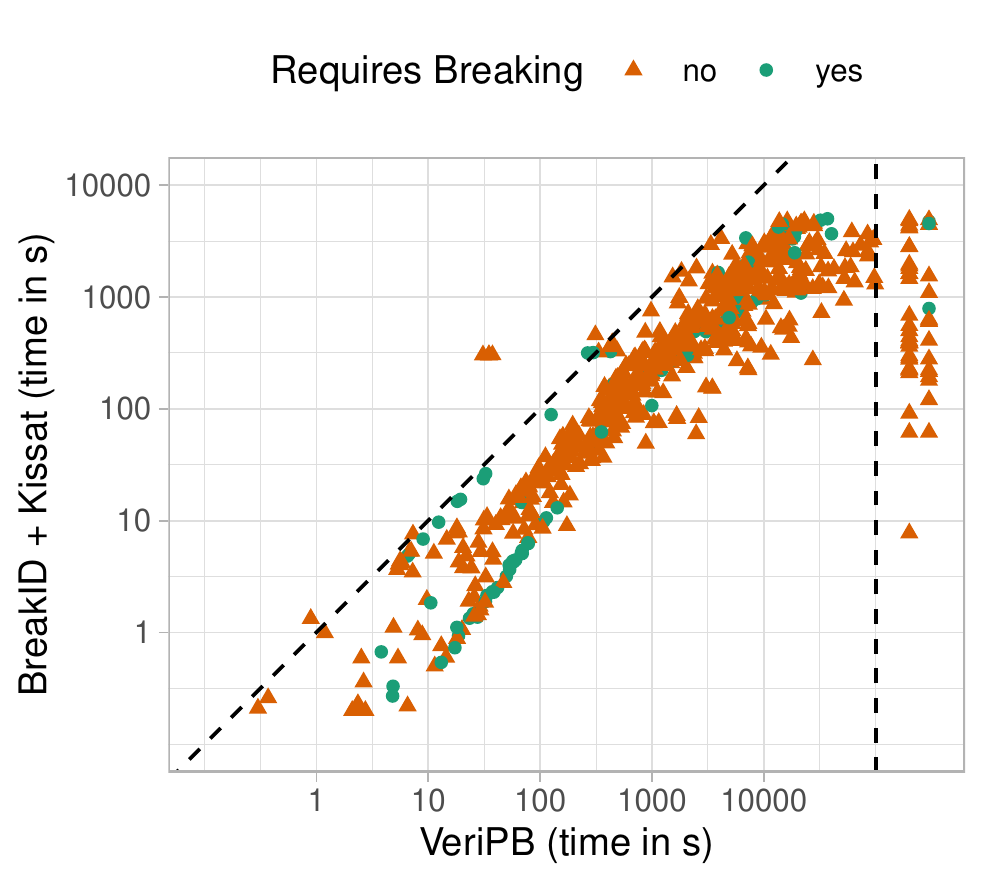}
  \caption{%
%
    Time for symmetry breaking plus SAT solving (with proof logging) compared
    to verification time for the whole solving process.
    Points
    to the right of the vertical dashed line indicate timeouts (left)
    and out of memory (right).}\label{fig:plot3}
\end{figure*}

%

To validate our approach,
we implemented pseudo-Boolean proof logging in the \veripb proof format for the
symmetry breaking method in \breakid, and modified
\kissat\footnote{\url{http://fmv.jku.at/kissat/}} to output
\veripb-proofs (since the \redruleshort rule is a generalization of
the RAT rule, this required only
purely syntactic
changes).
We
employed
the \uncheckeddeletion rule
in \cref{def:unchecked_del}   
since the generated proofs
are only used to 
certify  unsatisfiability of decision problems
and never to prove satisfiability.     

Among
the benchmark instances from all the SAT
competitions
between the years 2016 and 2020,
we selected all instances in which at least
one symmetry was detected; there were 1089 such instances in total. We
performed our
computational
experiments
on machines with dual Intel Xeon E5-2697A v4
processors with 512GBytes RAM and solid-state
drives   
(SSD)
running on the Ubuntu 20.04 operating system.
We ran twenty instances in parallel on
each machine,
where we limited each instance to 16GBytes RAM. We used a 
timeout of 5,000 seconds for solving and 100,000 seconds for
verification of the proofs produced during the solving process.


\cref{fig:plot1}
shows   
the performance overhead
for symmetry breaking, comparing for each instance the 
running time
with
and without proof logging. For most instances, the overhead is
negligible (99\% of instances are at most 32\% slower). 
Figures~\ref{fig:plot2} and~\ref{fig:plot3} display the relationship between the time
needed to generate a proof (both for SAT and UNSAT instances) and 
to verify the  correctness of  this   
proof. When
considering symmetry breaking in isolation (i.e., not solving the instance completely, but only breaking the symmetries), as plotted in \cref{fig:plot2}, 
 $1058$ instances out of $1089$ could be
verified, $2$ timed out, and $29$ terminated due to running out of memory.
$75\%$ of the instances could be verified within $3.2$ times the
time required for symmetry breaking 
and $95\%$ 
within a factor~$20$. 
The time needed for
verification is thus considerably longer than 
the time required to generate the proofs, 
but still
practical in the majority of cases. After symmetry breaking, 
$721$~instances could be solved with the SAT solver (\cref{fig:plot3}) and we
could verify $671$ instances, while for
$33$~instances
verification timed out and for 
$17$~instances
the verifier 
ran   
out of memory.
Notably, $84$~instances could only be solved by the SAT solver when
symmetry breaking clauses were added, and out of these instances we
could verify correctness for $81$~instances.

\providecommand{\filename}[1]{\path{#1}\xspace}

\providecommand{\alldiffname}{all-different\xspace}

\section{Symmetries in Constraint Programming}\label{sec:cpProblems}
In the general setting considered in
constraint programming, we must deal with variables with larger
(non-Boolean) domains
and 
with   
rich constraints supported by propagation
algorithms. One might think that a proof system based upon Boolean
variables and linear inequalities 
would
not be suitable for this
larger class of problem. However, \citet{EGMN20Justifying} showed how
to use \veripb for constraint satisfaction problems by first encoding
variables and constraints in pseudo-Boolean form, and then 
constructing
cutting planes 
proofs   
to certify the behaviour
of the \emph{\alldiffname{}} propagator,
and \citet{GMN22AuditableCP} later  extended this to a wider range
of CP constraints and propagators.
Similarly, the work we present here can also be
applied to constraint satisfaction and optimisation problems.

Recall the list of symmetry breaking constraints proposed for  the Crystal Maze puzzle 
in the introductory section.
Given the difficulties in knowing which combinations of
symmetry breaking   
constraints are valid, it would be desirable if these constraints could be
\emph{introduced as part of a proof}, rather than taken
as part of the encoding of the original problem.
This would
give a modeller immediate feedback as to whether 
the 
constraints have been
chosen correctly.
Our
cutting planes proof system enhanced with
\redrulelong and \domrulelong rules
is indeed powerful enough to express 
all three of the examples we
presented in the introduction,
and we 
have   
implemented a small tool which can write out the appropriate
proof fragments certifying that this is so;
this allows the entire Crystal Maze example to be
verified
with \veripb.

The source code to run
our tool for the Crystal Maze puzzle
is located in the
\filename{tools/crystal-maze-solver}
directory of the code and data repository~\cite{BGMN22DominanceData}
associated to this
paper.   
Full instructions for how to use the tool are given in the 
\filename{tools/crystal-maze-solver/README.md}
file.
Below we provide a summary of our work on this problem.

We modelled the Crystal Maze puzzle as a constraint satisfaction problem in the
natural way:
there is
a decision variable for each circle, whose values are the
possible numbers that can be taken, and an \alldiffname constraint over all
decision variables. We use a table constraint for each edge
for simplicity.
We also included symmetry elimination constraints.

We implemented this model 
inside a small
proof-of-concept CP solver
(which can be found in the file  \filename{src/crystal_maze.cc})
that we created for this paper. (Full proof logging
for CP is 
an entire research program in its own right,
which we do not at all claim to have carried out within the framework
of this project---what
we do claim, though, is that
our contribution
shows that symmetries do not stand in the way of this work.) When executed, the
solver compiles this high level CP model to a pseudo-Boolean model, which it
will output as 
\filename{crystal_maze.opb}. 
This is done following the framework
introduced by \citet{EGMN20Justifying}, but as well as using a one-hot (direct)
encoding of CP decision variables to
pseudo-Boolean   
variables, it additionally creates
channelled greater-or-equal PB variables for each CP variable-value.
We remark that 
the encoding of the table constraints also introduces additional auxiliary
variables.

As the solver works on the problem, it produces a file
\filename{crystal_maze.veripb} that
provides a proof that it has found all non-symmetric solutions.
The proof log will contain reverse unit
propagation (RUP) clauses that justify backtracking,
as well as cutting planes derivations that prove the correctness of propagations by
the  \alldiffname and table constraints.
Once the solver has finished, the correctness of the solver output can
be checked by feeding the two files
\filename{crystal_maze.opb}
and
\filename{crystal_maze.veripb} 
to \veripb.

To verify that the symmetry constraints introduced in the
high-level  
model are
actually valid, we can remove them from the pseudo-Boolean model
in \filename{crystal_maze.opb}
and introduce
them as part of the proof instead.  We describe how to do this editing in 
\mbox{\filename{README.md}}. 
We also include a 
script \filename{make-symmetries.py} that
will output the
necessary proof fragment to reintroduce the symmetry constraints. 
The output of this script
can be verified on top of the reduced pseudo-Boolean model using
our  modified version of \veripb supporting the \domrulelong rule,
and this verification can be performed
with or without the remainder of the proof---that is, we can
verify both
that the constraints introduced are valid (in that they do not
alter the satisfiability of the model), and that they line up with the 
actual execution of the solver.

Interestingly, although symmetries can be broken in different ways in
high-level
constraint programming   
models (including through lexicographic and value
precedence constraints), when we encode the problem in pseudo-Boolean
form these differences largely disappear, 
and after creating a suitable order we can 
easily  
re-use the SAT proof logging techniques  for symmetry breaking that we
discussed above.
So, although a full proof-logging constraint 
solver does
not yet exist, we can confidently claim that
symmetry breaking should not block the road towards this goal.



\section{Lazy Global Domination in
  Maximum Clique
  Solving   
}
\label{sec:max-clique}

\citet{GMMNPT20CertifyingSolvers} showed how \veripb can be used to
implement proof logging for a wide range of maximum clique algorithms,
observing that the cutting planes proof system is rich enough to
certify a
rich variety
of bound and inference functions used by various
solvers (despite cutting planes not knowing what a graph or clique is).
However, there is one clique-solving technique in the literature that
is \emph{not} amenable to cutting planes reasoning.
In order to solve problem instances that arise
from a distance-relaxed clique-finding problem,
\citet{MP16Finding} enhanced their maximum clique algorithm with a
\emph{lazy global domination} rule that works as follows.
Suppose that the solver has constructed a candidate clique~$\cliquec$ 
and is considering to extend~$\cliquec$ by two vertices
$v$ and~$w$, where the neighbourhood of
$v$ excluding $w$ is a (non-strict) superset of the neighbourhood of
$w$ excluding $v$. Then if the solver first tries~$v$ and rejects it, 
there is no need to branch on~$w$ as well
(since any clique including~$w$ could be exchanged for at least as
good a clique including~$v$).  

In principle, it should be possible to introduce additional
constraints certifying this kind of reasoning 
in advance using \redrulelong,
without the need for the full  dominance breaking framework in
\refsec{sec:proof-system} (with some technicalities involving
consistent orderings for tiebreaking).
However, due to the prohibitive cost of computing the full vertex
dominance relation in advance,
\citeauthor{MP16Finding} instead implement a form of \emph{lazy}
dominance detection, which only triggers following a backtrack.
To provide proof logging for this,
we cannot derive the constraints justifying global domination steps in the proof log in
advance, but must instead be able to
introduce vertex dominance constraints 
on the fly  
precisely when they
are used. It is hard to see how to achieve this with the
\redruleshort rule, but it is possible using \domrulelong. 
In what follows, we first present a pseudo-Boolean encoding of the maximum clique problem and then a high-level description of the max clique solver used.
Afterwards, we discuss how to add certification to it with the \redruleshort rule and the \domruleshort rule.


\providecommand{\vdomrel}{\succ_{G}}
\providecommand{\vneighbour}[1]{N(#1)}

\providecommand{\algnameformat}[1]{\mathsf{#1}}
\providecommand{\algsuffixformat}[1]{\mathrm{#1}}

\providecommand{\ltvert}[1]{\mathit{LT}(#1)}
\providecommand{\gtvert}[1]{\mathit{GT}(#1)}

\providecommand{\cliquesearch}{\algnameformat{MaxCliqueSearch}}
\providecommand{\cliquecurr}{C_{\algsuffixformat{curr}}}
\providecommand{\cliquebest}{C_{\algsuffixformat{best}}}
\providecommand{\vrem}{V_{\algsuffixformat{rem}}}
\providecommand{\vnew}{V'}
\providecommand{\erem}{E_{\algsuffixformat{rem}}}
\providecommand{\grem}{G_{\algsuffixformat{rem}}}
\providecommand{\colclass}{S}

\SetKw{KwDownTo}{downto}
\SetKw{KwAnd}{and}


\subsection{The Maximum Clique Problem}
Throughout this 
   section 
we let
$G = (V,E)$ 
denote  an undirected, unweighted graph 
without self-loops with vertices $V$ and edges $E$. 
We write
$\vneighbour{u}$
to denote the \emph{neighbours} of a vertex~$u \in V$,
\ie the set of vertices
$\vneighbour{u} = \mbox{$\setdescr{w}{(u,w) \in E}$}$
that are adjacent to~$u$ in the graph,
and define neighbours of sets of vertices in the natural way by
taking unions
$
\vneighbour{U} = \Union_{u \in U} \vneighbour{u}$.
We say that
$u$ \emph{dominates}~$v$ if
\begin{equation}
	\label{eq:vertex-dominance}
	\vneighbour{u} \setminus \set{v}
	\supseteq
	\vneighbour{v} \setminus \set{u}
\end{equation}
holds, which intuitively says that the neighbourhood of~$u$ is at
least as large as that of~$v$. It is straightforward to verify that
this domination relation is transitive.

When representing the maximum clique problem
in pseudo-Boolean form, 
we overload notation and identify every 
vertex~$v \in V$
with a Boolean variable, where $v=1$ means that 
the vertex~$v$ is in the clique.
The task is to maximize
$\sum_{v \in V} v$
under the constraints that all chosen vertices should be neighbours,
but since, 
syntactically
speaking, we require an objective function to be
minimized, we obtain
\begin{subequations}
	\begin{align}
		\label{eq:clique-obj}
		&\min \textstyle \sum_{v \in V} \olnot{v} 
		\\
		\label{eq:clique-edge-constraint}
		&
		\olnot{v} + \olnot{w} \geq 1
		&&
		[\text{for all $(v, w) \in V^2\setminus E$ with $v\neq w$}] 
	\end{align}
\end{subequations}
as the formal pseudo-Boolean specification of the problem.

\subsection{High-Level Description of the Max Clique Solver}


      \begin{algorithm*}[t] 
        \caption{Max clique algorithm without dominance.}
        \label{alg:max-clique}
        $\cliquesearch(G, \vrem, \cliquecurr, \cliquebest):$ 
        \\
        $\erem \gets E(G) \intersection (\vrem \times \vrem)$\;
        $\grem \gets (\vrem, \erem)$\;
        \If{$\setsize{\cliquecurr} > \setsize{\cliquebest}$}
        {
          $\cliquebest \gets \cliquecurr$\;  \label{line:best-clique-update}
        }
        $(\colclass_1, \ldots, \colclass_m) \gets$
        colour classes in colouring of $\grem$ \label{line:coloring}\; 
        $j \gets m$\;
        \While{$j \geq 1$ \KwAnd 
          $\setsize{\cliquecurr}+ j > \setsize{\cliquebest}$ \label{line:loop}}
          {
            \For{$v \in \colclass_j$ \label{line:innermost-loop}}
            {
              $\cliquebest \gets \cliquesearch(
              G, 
              \vrem \intersection \vneighbour{v},
              \cliquecurr \union \set{v}, \cliquebest)$\;
            }
            $\vrem \gets \vrem \setminus \colclass_j$\;
            $j \gets j - 1$\;
          }
          \mbox{ } 
          \Return $\cliquebest$\;
      \end{algorithm*}

%



At a high level, the maximum clique solver
of~\citet{MP16Finding}
\emph{before}
addition of vertex dominance breaking, works as described in 
Algorithm~\ref{alg:max-clique}. The first call to the 
$\cliquesearch$
algorithm is with parameters~$G$,
$\vrem = V$, 
\mbox{$\cliquecurr=\emptyset$}, 
and \mbox{$\cliquebest=\emptyset$}.

When $\cliquesearch$ is called with a candidate clique~$\cliquecurr$,
the best solution so far~$\cliquebest$, and a subset of
vertices~$\vrem$, it considers the residual graph
$\grem = (\vrem, \erem)$ assumed to be defined on all vertices
in \mbox{$V \setminus \cliquecurr$} that are neighbours of all
$c \in \cliquecurr$.
Thus,    
the set~$\vrem$  contains all vertices to
which~$\cliquecurr$ could possibly be extended.
The algorithm produces a colouring of~$\grem$
(where as usual adjacent vertices are assigned a different colour),
which we assume results in $m$~disjoint colour classes
$(\colclass_1, \ldots, \colclass_m)$
such that
$\vrem = \Union_{i=1}^{m} \colclass_i$. 
It is clear that any clique extending~$\cliquecurr$
can contain at most one vertex from
each   
colour class~$\colclass_i$,
since vertices of the same colour class are non-adjacent.
The clique search algorithm now iterates over all colour classes in
the order
$\colclass_{m}, \colclass_{m-1}, \ldots, \colclass_{1}$.
Whenever the clique is extended with a new vertex, a new recursive
call to $\cliquesearch$ is made.
Therefore, when we reach~$\colclass_j$ in the loop,
we are considering the case when all
vertices in
$\colclass_{m}, \colclass_{m-1}, \ldots, \colclass_{j+1}$
have been rejected.
For this reason, if the condition
\mbox{$\setsize{\cliquecurr} + j > \setsize{\cliquebest}$}
fails to hold, we know that the current clique
candidate cannot possibly be extended to a clique that is larger than
what we have already found in~$\cliquebest$.
At the end of the first call 
$\cliquesearch(G, V, \cliquecurr=\emptyset, \cliquebest=\emptyset)$,
after completion of all recursive subcalls,
the vertex set
$\cliquebest$
will be a clique of maximum size in~$G$.
A certifying version of essentially this algorithm with \veripb proof
logging was presented by
\citet{GMMNPT20CertifyingSolvers}.
It might be worth noting in this context that one quite
interesting challenge is to certify the backtracking
performed when the condition
$\setsize{\cliquecurr}+ j > \setsize{\cliquebest}$
fails, and this is one place where the strength of the 
pseudo-Boolean reasoning in the cutting planes
proof system is very helpful
(as opposed to the clausal reasoning in, e.g.,~\drat,
where certifying this type of counting arguments is quite
challenging).

The vertex dominance breaking of~\citet{MP16Finding}
is based on the following observation:
If the algorithm is about to consider
$v \in \colclass_j$ in the innermost for loop
on line~\ref{line:innermost-loop}
in Algorithm~\ref{alg:max-clique},
but has previously considered a  vertex
$u \in \Union_{i=j}^{m} \colclass_i$ 
that dominates~$v$
in the sense of~\refeq{eq:vertex-dominance},
then it is safe to ignore~$v$.
This is so since if
the algorithm would find a solution that includes~$v$ but not~$u$,
then we
could   
swap~$u$ for~$v$ and obtain a solution that is at least as good.

In pseudo-Boolean notation,
this  type of reasoning could be enforced by 
adding the constraint
$u + \olnot{v} \geq 1$
to the formula,
but there is no way this can be semantically derived from
the constraints~\refeq{eq:clique-obj}
or the requirement to minimize~\refeq{eq:clique-edge-constraint}.
Therefore, the proof logging method in
\cite{GMMNPT20CertifyingSolvers}
is inherently unable to deal with such constraints.

In general, the vertex dominance breaking as described above does not need to break ties consistently.
By this we mean that if $u$ and $v$ dominate each other, in principle it might happen that in a given branch of the search tree, $u$ is chosen to dominate $v$, while in another one, $v$ is chosen to dominate $u$, simply because of the order in which nodes are considered. 
While in principle, 
it should be possible to adapt our proof logging methods 
to work in this case, the argument is subtle. 
Luckily, it turns out that in practical implementations, tie breaking only happens in a consistent manner. 
\begin{fact}\label{fact:constbreak}
	In the  vertex dominance breaking of~\citet{MP16Finding}, there exists a total order $\vdomrel$ on the set $V$ of vertices such that whenever $v$ is ignored because $u$ has previously been considered, $u \vdomrel v$. 
\end{fact}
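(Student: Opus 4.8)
The plan is to make the total order $\vdomrel$ explicit and then verify the required property by inspecting how the solver of \citet{MP16Finding} sequences the vertices it branches on. First I would take $\vdomrel$ to be the fixed static vertex ordering that the solver installs once at the outset (refined by vertex index to resolve any remaining ties): this is the order in which the colouring routine processes vertices at every node, and the order in which the members of each colour class are visited in the innermost loop of Algorithm~\ref{alg:max-clique}. With this choice, the content of \cref{fact:constbreak} is exactly that whenever a vertex $v$ is skipped because a previously considered $u$ dominates it in the sense of~\refeq{eq:vertex-dominance}, we have $u \vdomrel v$.

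The purely combinatorial part is routine. Unfolding the domination hypothesis $\vneighbour{u}\setminus\set{v} \supseteq \vneighbour{v}\setminus\set{u}$ and splitting on whether $u$ and $v$ are adjacent yields $\setsize{\vneighbour{u}} \ge \setsize{\vneighbour{v}}$ in every case, with equality forcing the two neighbourhoods to coincide up to swapping $u$ and $v$, i.e.\ forcing $u$ and $v$ to dominate each other. When the inequality is strict (or when the static order already separates the two vertices) the relation $u \vdomrel v$ is immediate from the definition of $\vdomrel$. The only situation left to handle is a \emph{mutually}-dominating pair, where one must argue that the solver consistently keeps the $\vdomrel$-smaller of the two and ignores the other, at \emph{every} search node at which both are candidates.

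This last point is the crux, and it is where the statement stops being a theorem about the abstract procedure of Algorithm~\ref{alg:max-clique}---which, as written, permits $u$ to dominate $v$ in one branch and $v$ to dominate $u$ in another, depending on the per-node colouring---and becomes a claim about the concrete implementation. I would discharge it by inspecting the source of \citet{MP16Finding} and confirming that, for a mutually-dominating pair, the surviving vertex is always the one earlier in the solver's static order: the only mechanism that could flip the consideration order of two such vertices between nodes is the colouring placing them into colour classes whose loop indices differ (since the loop in Algorithm~\ref{alg:max-clique} runs from $\colclass_m$ down to $\colclass_1$), and one checks that for such a pair this never reverses their relative order. Everything else---transitivity of $\vdomrel$, the neighbourhood-counting case split, the bookkeeping of what ``previously considered'' means within a node---is straightforward, so the main obstacle is precisely this implementation-level verification that tie-breaking happens consistently.
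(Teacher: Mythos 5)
Your overall skeleton matches the paper's: fix an explicit total order, use the neighbourhood-counting argument to show that one-way domination forces a strict degree gap (so a degree-major order handles that case automatically), and reduce the entire burden to showing that for mutually-dominating pairs the implementation consistently branches on the $\vdomrel$-larger vertex first. The counting step is correct, and you identify the crux exactly where the paper does.

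The gap is in the concrete order you propose and in the verification you claim for it. You take $\vdomrel$ to be simultaneously the order in which the colouring routine processes vertices and the order in which vertices are visited for branching, and you assert that the colour-class mechanism ``never reverses their relative order.'' For an \emph{adjacent} mutually-dominating pair it systematically does: such $u$ and $v$ satisfy $\vneighbour{u}\setminus\set{v} = \vneighbour{v}\setminus\set{u}$, so whichever of the two the greedy colouring processes \emph{first} is placed in the smaller-indexed colour class, and since the loop at line~\ref{line:loop} of Algorithm~\ref{alg:max-clique} runs from $\colclass_m$ down to $\colclass_1$, that vertex is branched on \emph{last}. With your order, the vertex that survives would be the $\vdomrel$-smaller one, and the Fact would fail. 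The paper's resolution is precisely that $\vdomrel$ (larger degree first, ties broken by \emph{larger} internal identifier) is the \emph{reverse} of the colouring-processing order on same-degree vertices---this is the ``strange condition'' flagged in the footnote---and the mutual case is then split by adjacency: non-adjacent mutual dominators land in the \emph{same} colour class and are separated by the within-class iteration at line~\ref{line:innermost-loop}, while adjacent ones land in different classes with the $\vdomrel$-larger vertex in the larger-indexed class. A smaller point: your parenthetical easy case ``when the static order already separates the two vertices'' is empty, since by your own counting equal degrees together with domination forces mutual domination, so every same-degree instance is a crux instance.
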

Moreover, this order $\vdomrel$ is known before the algorithm starts: $u \vdomrel v$ holds if $u$ has a larger degree than $v$, or in case they have the same degree but the identifier used to represent $u$ internally is larger than that of $v$. 
To see that this order indeed guarantees consistent tie breaking, we
provide some properties of the actual implementation of the algorithm.%
\footnote{When inspecting the two conditions below, we can see that
  they rely on a very subtle argument, namely that when constructing a
  greedy colouring, we should iterate over all nodes with the same
  degree in the \emph{opposite} order of the way we iterate over nodes
  in line~\ref{line:loop}. This (strange)
  condition
  is satsisfied by many
  max clique algorithms due to a happy coincidence of an efficient
  data structure and colouring algorithm introduced by
  \citet{TomitaK07}.} 
\begin{enumerate}
\item
  If $u$ and $v$ dominate each other and are not adjacent,
  then   
  $u$ and $v$
  are guaranteed to be in the same colouring class.  
  If furthermore $u\vdomrel v$, $u$ is considered before $v$ in the
  loop in Line~\ref{line:loop} (due to the order in which this for
  loop iterates over the nodes).
  
\item
  If $u$ and $v$ dominate each other, \emph{are} adjacent, and
  satisfy  
  $u\vdomrel v$, then $u$ is assigned a \emph{larger} colouring class
  than $v$ (due to the order in which the
  greedy
  colouring algorithm
  in Line \ref{line:coloring} iterates over the nodes). Hence, also in
  this case $u$ will be considered before~$v$.      
\end{enumerate}
%
%
In what follows below, we will explain: 
\begin{itemize}
\item 
  first, how the redundance rule introduced to \veripb by
  \citet{GN21CertifyingParity} could in principle be used to provide
  proof logging for vertex dominance breaking, although with
  potentially impractical
  overhead;  
  and
\item 
  then, how the dominance rule introduced in this paper 
  can be used to resolve the practical problems in a very simple way.
\end{itemize}
An implementation for both techniques can be found in the
code and data  repository~\cite{BGMN22DominanceData}.

\subsection{Vertex Dominance with the
  \REDRULELONG
  Rule}

In order to provide proof logging for vertex dominance breaking using
the redundance rule, we could in theory proceed as
follows.
First, we let the solver check the vertex dominance 
condition~\refeq{eq:vertex-dominance} for all pairs of vertices~$u, v$
in~$V$. 
Before starting the solver, 
we add all pseudo-Boolean constraints for
vertex dominance breaking  using the redundance rule.
For all $u,v$ such that $u$ dominates~$v$ and $u \vdomrel v$, we 
derive the
\emph{vertex dominance breaking constraint}
\begin{equation}
	\label{eq:vdom-constraint}
	u +   \olnot{v} \geq 1
	\eqcomma
\end{equation}
doing so \emph{in decreasing order} 
for~$u$   
\wrt~$\vdomrel$.
Our witness for the redundance rule derivation
of~\refeq{eq:vdom-constraint} will be 
$
\witness = 
\set{
	u \mapsto v,
	v \mapsto u
}
$,
\ie $\witness$ will simply swap the dominating and dominated vertices.
Hence, the objective function~\refeq{eq:clique-obj}
is syntactically unchanged after substitution  by~$\witness$,
and so the condition in
\refeq{eq:substredundant}
that the objective should not increase is always vacuously satisfied.

We need to argue that deriving the
vertex dominance breaking constraints~\refeq{eq:vdom-constraint} 
is valid in our proof system.
Towards this end, suppose we are in the middle of the process of adding such
constraints and are currently considering $u +   \olnot{v} \geq 1$
for
$u$ dominating~$v$ and $u \vdomrel v$.
Let  $\coreunionderived$ be the set of constraints in the current
configuration. 
In order to add
$u +   \olnot{v} \geq 1$,
we need to show that
\begin{subequations}  
	\begin{equation}
		\label{eq:sr-startingpoint}
		\coreunionderived \union
		\set{\negcwp{ u +   \olnot{v} \geq 1}}  
	\end{equation}
	can be used to derive all constraints in
	\begin{equation}
		\label{eq:sr-target}
		\restrict{(\coreunionderived \union
			\set{ u +   \olnot{v} \geq 1})}
		{\witness}
	\end{equation}
\end{subequations}
by the cutting planes method (\ie without any extension rules).

Starting with the vertex dominance constraint being added, 
note that from the negated constraint 
$
\negcwp{u +   \olnot{v} \geq 1}
\synteq
\olnot{u} + v \geq 2
$
in~\refeq{eq:sr-startingpoint}
we immediately obtain
\begin{subequations}
	\begin{align}
		\label{eq:prop-u}
		\olnot{u} &\geq 1 \\
		\label{eq:prop-v}
		v &\geq 1  
	\end{align}
\end{subequations}
as reverse unit propagation (RUP) constraints, 
meaning that the weaker
pseudo-Boolean
constraint
%
$
\restrict{( u +   \olnot{v} \geq 1)}  
{\witness}
\synteq
v + \olnot{u} \geq 1
$
is also RUP \wrt the constraints in~\refeq{eq:sr-startingpoint}.

Consider next any non-edge constraints
$\olnot{x} + \olnot{y} \geq 1$
in~\refeq{eq:clique-edge-constraint}
in the original formula.    
Clearly, such constraints are only affected by~$\witness$ if
$\set{u,v} \intersection \set{x,y} \neq \emptyset$;
otherwise they are present in both~\refeq{eq:sr-startingpoint}
and~\refeq{eq:sr-target} and there is nothing to prove.
Any non-edge constraint 
$\olnot{v} + \olnot{y} \geq 1$
containing~$\olnot{v}$ will after application
of~$\witness$ contain~$\olnot{u}$,
and will hence be 
RUP \wrt~\refeq{eq:prop-u} and hence also \wrt~\refeq{eq:sr-startingpoint}.
For non-edge constraints
$\olnot{u} + \olnot{y} \geq 1$
with
$y \neq v$,
substitution by~$\witness$ yields
$
\restrict{(\olnot{u} + \olnot{y} \geq 1)}{\witness}
\synteq
\olnot{v} + \olnot{y} \geq 1
$.
Since by assumption
$u$ dominates~$v$ and $y\neq v$ is not a neighbour of~$u$,
it follows from~\refeq{eq:vertex-dominance}
that $y$ is not a neighbour of~$v$ either. Hence,
the input formula 
in~\refeq{eq:sr-startingpoint}
already contains the desired non-edge constraint
$\olnot{v} + \olnot{y} \geq 1$.

It remains to analyse what happens to vertex dominance breaking
constraints
\begin{equation}
	\label{eq:previous-vdom-constraint}
	x + \olnot{y} \geq 1
\end{equation}
that have already been added to~$\derivedset$
before the dominance breaking constraint
$u + \olnot{v} \geq 1$
that we are considering now.
Again, such a constraint
is only
affected by~$\witness$ if
$
\set{u,v} \intersection \set{x,y} \neq \emptyset
$;
otherwise it is present in both~\refeq{eq:sr-startingpoint}
and~\refeq{eq:sr-target}.
We obtain the following case analysis.
\begin{enumerate}
	\item 
	$x=u$:
	In this case,
	$
	\restrict{(x + \olnot{y} \geq 1)}{\witness}
	\synteq
	v + \olnot{\witness(y)} \geq 1
	$,
	which is 
	RUP \wrt
	$v \geq 1$
	in~\refeq{eq:prop-v}
	and hence also \wrt~\refeq{eq:sr-startingpoint}.
	
	\item 
	$x=v$:
	This is impossible, since
	$u \vdomrel v$
	and any dominance breaking constraints 
	with $v=x$ as the dominating vertex will be added only once we are
	done with~$u$ as per the description right below~\refeq{eq:vdom-constraint}.

	\item 
	$y=u$:
	In this case
	$x$ dominates $u$. Since $x \vdomrel u$,
	$u \vdomrel v$, 
	and $u$ dominates~$v$,
	by transitivity we have 
	$x \vdomrel v$ and also that $x$ dominates~$v$. Hence, the
	breaking constraint $x + \olnot{v} \geq 1$ has already been added
	to~$\derivedset$.
	But since $u \neq x \neq v$, we see that our desired constraint is
	$
	\restrict{(x + \olnot{u} \geq 1)}{\witness}
	\synteq
	x + \olnot{v} \geq 1
	$,
	which   is precisely this previously added constraint.
	
	\item 
	$y=v$:
	Here we see that the desired constraint
	$
	\restrict{(x + \olnot{y} \geq 1)}{\witness}
	\synteq
	\witness(x) + \olnot{u} \geq 1
	$
	is again RUP 
	\wrt~\refeq{eq:sr-startingpoint}.
\end{enumerate}
This concludes our proof that all vertex dominance breaking
constraints that are consistent with our constructed linear
order~$\vdomrel$ can be added and certified by the 
redundance rule before the solvers starts searching for cliques.

So all of this works perfectly fine in theory.  The problem that rules
out this approach in practice, however, is that the solver will not
have the time to compute 
the dominance relation between vertices 
in advance,
since this is far too costly and does not pay off in general.
Instead the solver designed by \citet{MP16Finding}
will detect and apply vertex dominance relations on the fly during search.
And from a proof logging perspective this is too late---during search, when
$\coreunionderived$ will also contain constraints certifying any
backtracking made, 
our proof logging approach above no longer works.
The constraints added to the proof log to certify backtracking are no
longer possible to derive when substituted by~$\witness$
as in~\refeq{eq:sr-target},
for the simple reason that they are not semantically implied
by~\refeq{eq:sr-startingpoint}.
One possible way around this would be to run the solver twice---the
first time to collect all information about what vertex dominance
breaking will be applied, and then the second time to do the actual
proof logging---but this seems like quite a cumbersome approach.

%
%
%
%
%
%
%

We deliberately discuss this problem in some detail here, because this is an
example of an important and nontrivial
challenge that shows up also in other settings when designing proof
logging for other algorithms. 
It is not sufficient to just come up with  a proof
logging system that is strong enough in principle to certify the solver reasoning
(which the redundance rule is for the clique
solver with vertex dominance breaking, as shown above).
It is also crucial that the solver 
have 
enough information 
available at the right time 
and can extract this information efficiently enough to actually be
able to emit the required proof logging commands with low enough overhead.
For constraint programming solvers, it is not seldom the case that the
solver knows for sure that some variable should propagate to a value, 
because the domain has shrunk to a singleton, 
or that the search should backtrack because some variable domain is
empty, but that the solver cannot reconstruct the detailed derivation steps
required to certify this without incurring a massive overhead in
running time (e.g., since the reasoning has been performed with
bit-parallel logical operations).
It is precisely for this reason that it is important that our proof
system allow 
adding
RUP
constraints.
This makes it possible for the solver to claim facts that it knows to
be true, and that it knows can be easily verified, while leaving the work
of actually producing a detailed proof to the proof checker.

%

\subsection{Vertex Dominance with the
  \DOMRULELONG
  Rule}

Let us now discuss how the \domruleshort rule can be used to
provide proof logging for lazy global domination.
As was the case for the redundance rule, we will make use of
\cref{fact:constbreak}.
Before starting the proof logging, we use the
order change rule to activate the lexicographic order on the the
assignments to the vertices/variables induced by $\vdomrel$.

Suppose now that the solver is running and that the current candidate
clique is~$\cliquecurr$.
The solver has an ordered list of unassigned candidate vertices that
it is iterating over when considering how to enlarge this clique,
and this list is defined by the colour classes
$
(\colclass_{m}, \colclass_{m-1}, \ldots, \colclass_{1})
$.
(We note that this ordered list depends on~$\cliquecurr$,
and would be different for a different clique $\cliquecurr'$.)
Suppose the next vertex in that list is~$v$. Then when it is time to
make the next decision 
on line~\ref{line:innermost-loop}
in Algorithm~\ref{alg:max-clique}
about enlarging the clique, 
the solver enhanced with proof logging does the following:
\begin{itemize}
\item 
  If there exists a vertex $u$ that has already been
  considered in the current iteration and that dominates $v$
  (and for which it hence holds that~$u\vdomrel v$), 
  then the solver discards~$v$ by vertex dominance and adds the constraint 
  $u + \olnot{v} \geq 1$
  by the dominance rule with witness
  $
  \witness = 
  \set{
    u \mapsto v,
    v \mapsto u
  }
  $.  We will
  explain in detail 
  below 
  why this is possible. 
  
\item 
  Otherwise, 
  the solver enlarges~$\cliquecurr$ with~$v$   and makes 
  a recursive call.
\end{itemize}

When the solver has explored all ways of enlarging~$\cliquecurr$ and
is about to backtrack, here is what will happen on the proof logging
side (where we refer to~\cite{GMMNPT20CertifyingSolvers} for a more
detailed description of how proof logging for backtracking CP solvers
works in general):
\begin{enumerate}
\item%
  \label{item:previous-backtracking-step}%
  For every~$u$ that was explored in an enlarged clique 
  $\cliquecurr \union \set{u}$,
  when backtracking 
  the solver will already have added
  $
  \olnot{u} + \sum_{w \in \cliquecurr} \olnot{w} \geq 1  
  $
  as a RUP constraint.
  
\item 
  The solver now inserts the explicit cutting planes derivation required
  to show that
  the inequality
  $\setsize{\cliquecurr}+ j > \setsize{\cliquebest}$
  must hold.
  
\item 
  After this, the solver adds the claim that
  $\sum_{w \in \cliquecurr} \olnot{w} \geq 1$
  is a RUP constraint.
\end{enumerate}
We need to argue why
$\sum_{w \in \cliquecurr} \olnot{w} \geq 1$
will be accepted as a RUP constraint, allowing the solver to
backtrack.
The RUP check for $\sum_{w \in \cliquecurr} \olnot{w} \geq 1$ 
propagates $w = 1$ for all $w \in \cliquecurr$.
This in turn propagates $u = 0$ for all explored vertices~$u$ by the 
backtracking constraints for $\cliquecurr \union \set{u}$
added in step~\ref{item:previous-backtracking-step}.
The vertex dominance breaking constraints then propagate $v = 0$ 
for all vertices~$v$ discarded because of vertex domination.
At this point, the proof checker has the same information that the
solver had when it detected that the colouring constraint forced
backtracking. This means that the proof checker will unit propagate to
contradiction, and so the backtracking constraint
$\sum_{w \in \cliquecurr} \olnot{w} \geq 1$
is accepted as a RUP constraint.

We still need to explain how and why the
pseudo-Boolean dominance rule applications allow deriving the constraint 
$u + \olnot{v} \geq 1$ in case $u$ dominates $v$ (and hence $u\vdomrel v$). 
Recall that the order used in our proof is the lexicographic order induced by $\vdomrel$.
This means that if vertices/variables $u$ and~$v$ are assigned 
by~$\assmntalpha$
in such a way as to violate a dominance breaking constraint
$u + \olnot{v} \geq 1$,
then
$\compassmtafterassmt{\assmntalpha}{\witness}$
will flip
$u$ to~$1$
and $v$ to~$0$ to produce a lexicographically smaller assignment
(since $v$ is considered before~$u$ in the lexicographic order).

The conditions for the dominance rule are that we have to exhibit
proofs of~\refeq{eq:dom:one}
and~\refeq{eq:dom:two}.
In this discussion, let us focus on~\refeq{eq:dom:one}
which says that starting with the constraints
\begin{subequations}
	\begin{equation}
		\label{eq:dr-start}
		\coreset \union 
		\derivedset{} \union 
		\set{\negcwp{ u +   \olnot{v} \geq 1}}  
	\end{equation}
	and using only cutting planes rules, we should be able to derive
	\begin{equation}
		\label{eq:dr-target}
		\subst{\coreset}{\witness}
		\union
		\porderformdom
		\union
		\objwitnessconditionset  
		\eqperiod
	\end{equation}
\end{subequations}

Note first that our lexicographic order in fact
does \emph{not} in itself respect the 
objective function~\refeq{eq:clique-edge-constraint}.
However, since~$\witness$ just swaps two variables it leaves the
objective syntactically unchanged, meaning that the inequality
$\objwitnesscondition$
in~\refeq{eq:dom:one}
is seen to be trivially true.

As in our analysis of the redundance rule,
from~\refeq{eq:dr-start}
we obtain
$  \olnot{u} \geq 1 $
and
$    v \geq 1  $
as in~\refeq{eq:prop-u}--\refeq{eq:prop-v},
and
$\porderformdom$
is easily verified to be RUP \wrt these constraints, 
since what the formula says after cancellation 
is precisely that $v \geq u$.

It remains to consider the pseudo-Boolean constraints in the solver
constraint database $\coreunionderived$.
The crucial difference from the redundance rule is that we
no longer have to worry about proving
$\subst{\derivedset}{\witness}$ 
in~\refeq{eq:dr-target}---we only need to show how to derive
$\subst{\coreset}{\witness}$.
But this means that all we need to consider are the
non-edge constraints
in~\refeq{eq:clique-edge-constraint}.
We
already explained in our analysis for the redundance rule derivation that the fact that $u$ dominates~$v$
means that for any non-edge constraints affected by~$\witness$
their substituted versions are already there as input constraints or
are easily seen to be RUP constraints.
In addition to these non-edge constraints there might also be all
kinds of interesting derived constraints in~$\derivedset$, but the
dominance rule says that we can ignore those constraints.

Finally, although we skip the details here, it is not hard to argue
analogously to what has been done above to show that 
$\negc{\constrc} \synteq
\negcwp{ u +   \olnot{v} \geq 1}$
and
$\porderformdomneg$
in
\refeq{eq:dom:two}
together unit propagate to contradiction. 
This concludes our discussion 
of how to certify vertex dominance breaking in the 
maximum clique solver by \citet{MP16Finding}
using the pseudo-Boolean dominance rule introduced in this paper.



\providecommand{\jnshortcite}[1]{\shortcite{#1}}
\providecommand{\jnciteNameRef}[2]{\citeA{#1}}
\providecommand{\jnciteA}[1]{\citeA{#1}}
\providecommand{\jnciteinorby}[1]{by~\citeA{#1}\xspace}

\section{Conclusion}\label{sec:conclusion}

In this paper, we introduce a method for showing the validity of
constraints obtained by symmetry or dominance breaking by adding
simple, machine-verifiable certificates of correctness.  Using as our
foundation the cutting planes method
\cite{CCT87ComplexityCP}
for reasoning about
pseudo-Boolean constraints 
(also known as \mbox{$0$--$1$} linear
inequalities), and building on and extending the version of the
\veripb tool developed \jnciteinorby{GN21CertifyingParity}, we present
a proof logging method in which symmetry and dominance breaking is
easily expressible.  Our method is a strict extension of
\drat~\cite{HHW13Trimming,HHW13Verifying,WHH14DRAT}
and other similar methods earlier used for
solver proof logging,
which means that we can produce efficient proofs of validity for SAT
solving with fully general symmetry breaking, and we provide a
thorough evaluation showing that this approach is feasible in practice.
Since \veripb can also certify cardinality and parity (XOR) reasoning,
we now have for the first time a unified proof logging
method for SAT solvers using all of these enhanced solving methods.
To demonstrate that our proof logging approach is not limited to
Boolean satisfiability,
in this work   
we also present applications to symmetry
breaking in constraint programming and vertex domination in 
maximum clique solving.

From a theoretical point of view, it would be interesting to
understand better the power of the \domrulelong rule.
\drat viewed as a proof system is closely related to
\emph{extended resolution}
\cite{Tseitin68ComplexityTranslated}
in that these two proof systems have the same proof power up to
polynomial factors~\cite{KRH16ExtendedResolution},
and extended resolution, in turn, is polynomially equivalent to the
\emph{extended Frege} proof system~\cite{CR79Relative},
which is one of the strongest
proof systems studied in proof complexity. However, there are
indications that the cutting planes proof system equipped with the
\domrulelong rule might be strictly stronger than extended Frege
\cite{KT23personalcommunication}.

Another question is whether such a strong derivation rule as
\domrulelong  is necessary to generate efficient proofs of validity
for general symmetry breaking constraints, or whether
\redrulelong is enough.
To phrase this cleanly as a proof complexity problem, we can restrict
our attention to decision problems and also fix the order to be
lexicographic order over some set of variables~$\vecx$.
Let us define cutting planes with symmetry breaking
to be the cutting planes proof system extended with a rule
that allows to derive the constraint
$\vecx \lex \restrict{\vecx}{\sigma}$
for any symmetry~$\sigma$ of the input formula~$\formf$.
Now we can ask whether cutting planes with
\redrulelong
efficiently simulates this cutting planes proof system
with symmetry breaking. To the best of our understanding, this question is
wide open.

From a proof logging perspective, a
natural next problem to investigate
is whether our certification method is strong
enough to capture other solving techniques such as those used 
for SAT-based optimisation in so-called MaxSAT solvers.
A crucial component of several MaxSAT solving techniques is
the translation of pseudo-Boolean constraints to CNF.  This is used in
linear SAT-UNSAT (LSU) solvers 
for adding the \modelimproving constraints
\cite<e.g.,>{qmaxsat,pacose}, in core-guided solvers for
reformulation of the objective function \cite<e.g.,>{RC2}, and in implicit
hitting set solvers that make use of abstract cores
\cite{BBP20AbstractCores}.  
While developing proof logging
methods
that can support the full
range of modern MaxSAT solving techniques remains a formidable
challenge, we want to point out that significant progress has
been made of late.
The proof system introduced
\jnciteinorby{GN21CertifyingParity}
has been used to certify correctness of the translations of
pseudo-Boolean constraints into CNF for a range of encodings
\cite{GMNO22CertifiedCNFencodingPB},
and quite similar ideas have been employed to design proof logging for 
\amodelimproving solver for unweighted MaxSAT
\cite{VWB22QMaxSATpb}.
Very recently, this has been extended also to state-of-the-art
core-guided MaxSAT solvers \cite{BBNOV23CertifiedCGSS}.

Another intriguing problem is how to design efficient proof logging for 
\emph{symmetric learning} as explored
\jnciteinorby{DBB17SymmetricExplanationLearning}.
In contrast to the symmetry breaking techniques considered in the
current paper, when using symmetric learning one can only derive
constraints that are semantically implied by the input
formula. However, if at some point the solver derives a
constraint~$\constrd$, and if $\symsigma$ is a symmetry of the formula,
then it is clear that the permuted constraint~$\symsigma(\constrd)$
is also implied by the formula and so should be
sound to derive in a single extra step. 

If we wanted to argue formally about such symmetric learning, we could
apply~$\symsigma$ 
to the whole derivation leading up to~$\constrd$ to 
get a proof that $\symsigma(\constrd)$ can be derived.
Taking this observation one step further, if from the solver execution we can
extract a subderivation~$\subderivpi$ 
showing that a constraint~$\constrd$ is implied by constraints 
$\constrc_1, \ldots, \constrc_m$,
and if
$\symsigma(\constrc_1), \ldots, \symsigma(\constrc_m)$
have also been derived, then it
should 
be valid to use~$\subderivpi$ as a
``lemma'' to conclude~$\symsigma(\constrd)$.
The usefulness of such lemmas in the context of proof logging has been
discussed by, e.g.,~\jnciteA{KM21SolvingGraphHomomorphism}.
The question,
however,   
is whether such reasoning can be
incorporated in the  \veripb framework with the current set of
derivation rules in the proof system.  
One way of attempting to do this could of course be to
add
special
rules for symmetric learning or lemmas, but this goes against the goal
of keeping the proof logging system as simple as possible, so that
derivations are obviously sound. As a case in point, it is worth
noting that the naive way of adding such dedicated rules for
symmetries or lemmas to our
\redrulelong
and
\domrulelong
rules would result in an unsound proof system.

The use of lemmas in proofs 
has been studied in the so-called 
\emph{substitution  Frege} proof system~\cite{CR79Relative}, 
and it has been shown that
%
%
extended Frege
has the same deductive strength as substitution Frege  except
possibly for a polynomial overhead in proof 
size~\cite{KP89PropositionalProofSystems}.
However, it is not clear whether such results can be scaled down 
from Frege systems to
cutting planes,
so that cutting planes with \redrulelong can be made to simulate
the usage of lemmas as described above,
and whether such reasoning could be implemented efficiently enough in \veripb
to allow a formalization of reasoning with lemmas
that would be feasible in practice.

As noted above, we have also shown in this work that our proof logging
techniques can be used beyond SAT solving and SAT-based optimization
for other combinatorial solving paradigms such as
graph solving algorithms
and
constraint programming (CP), 
and 
\citet{EGMN20Justifying} and \citet{GMN22AuditableCP}
have made important contributions towards providing
\veripb-style proof logging for a full-blown CP solver.
For mixed integer linear programming (MIP) solvers, 
\citet{CGS17Verifying} and \citet{EG21ComputationalStatusUpdate} 
have also developed limited proof logging support (using another proof
format), but
this method still seems
quite far from being able to support advanced
MIP techniques. It would be very exciting if all of these different
proof logging techniques could be strengthened to provide full proof
logging support for state-of-the-art CP and MIP solvers.

Finally, we want to point out that another important research
direction in proof logging is to develop
formally verified proof checkers,
so that we can be sure when proof verification passes that this is not
due to bugs in the proof checker but that the claimed result is
guaranteed to be valid.
Such verified checkers have been built for \drat and other clausal
proof logging systems 
\cite{CMS17EfficientCertified,CHHKS17EfficientCertified,Lammich20EfficientVerified,THM21Cake},
and it would be highly desirable to obtain such tools for
pseudo-Boolean proof logging with \veripb as well.
As a first step in this direction,
a formally verified pseudo-Boolean proof checker  for decision
problems
was recently submitted to and used in
the SAT Competition 2023~\cite{BMMNOT23DocumentationVeriPB}.

%
%
%

\section*{Acknowledgements}

The authors gratefully acknowledge fruitful and stimulating discussions on proof logging with
Jeremias Berg,
Armin Biere,
Jo Devriendt,
Jan Elffers,
Ambros Gleixner,
\mbox{Marijn Heule},
Daniela Kaufmann,
Daniel Le Berre,
Matthew McIlree,
Magnus Myreen,
Yong Kiam Tan,
James Trimble,
and many other colleagues whom we have probably forgotten and to whom
we apologize.
We are also grateful to the anonymous \emph{AAAI} and \emph{JAIR} reviewers,
whose comments helped us to improve the exposition considerably and also
to fix some mistakes in the definitions.

Part of this work was carried out while taking part in the semester program
\emph{Satisfiability: Theory, Practice, and Beyond}
in the spring of 2021 at the 
Simons Institute for the Theory of Computing at UC Berkeley,
and in the extended reunion of this semester program in the spring of 2023.
This work has also benefited greatly from discussions during the
Dagstuhl Seminars 22411
\emph{Theory and Practice of SAT and Combinatorial Solving}
and 23261
\emph{SAT Encodings and Beyond}.

Stephan Gocht and Jakob Nordström were supported by
the Swedish Research Council grant \mbox{2016-00782},
and Jakob Nordström also received funding from 
the Independent Research Fund Denmark grant \mbox{9040-00389B}.
Ciaran McCreesh was supported by a Royal Academy of Engineering research fellowship.
Bart Bogaerts was supported by  Fonds Wetenschappelijk Onderzoek -- Vlaanderen (project G0B2221N), by the Flemish Government (AI Research Program), and by TAILOR, a project funded by EU Horizon 2020 research and innovation programme under GA No 952215.



\appendix
\section{A 
Proof Logging Example for   
Symmetry Breaking}
\label{sec:sym-appendix}


\lstset{aboveskip=10pt,belowskip=10pt,numbers=left,xleftmargin=2.4em,numberstyle      = \scriptsize\color{black}L,breaklines=true,
  postbreak=\mbox{\textcolor{red}{$\hookrightarrow$}\space},,escapechar=|,literate={~} {$\sim$}{1}}
\def\ContinueLineNumber{\lstset{firstnumber=last}}
\def\StartLineAt#1{\lstset{firstnumber=#1}}
\let\numberLineAt\StartLineAt
\ContinueLineNumber

\makeatletter
\newcommand{\mathleft}{\@fleqntrue\@mathmargin0pt}
\newcommand{\mathcenter}{\@fleqnfalse}
\makeatother

\newcounter{proofFormula}
\newcounter{localFormula}
\newcommand{\formulaLine}[1]{\stepcounter{proofFormula}%
\begin{equation}\tag{C\arabic{proofFormula}}
 #1
\end{equation}%
}
\newcommand{\ALformulaLine}[1]{\stepcounter{proofFormula}%
	\tag{C\arabic{proofFormula}}
		#1
}
\newcommand{\MLformulaLine}[1]{\stepcounter{proofFormula}%
	\begin{equation}\tag{C\arabic{proofFormula}}\begin{array}{l}
		#1\end{array}
	\end{equation}%
}

\newcommand\localFormulaLine[1]{\stepcounter{localFormula}%
  \begin{equation}
    \tag{T\arabic{localFormula}}
    #1
  \end{equation}%
}
\newcommand\ALlocalFormulaLine[1]{\stepcounter{localFormula}%
	\tag{T\arabic{localFormula}}
		#1
}

\newcommand\skipformula{\stepcounter{proofFormula}}



\newcommand\pigeon[2]{p_{#1 #2}}
\newcommand\olnotpigeon[2]{\overline{p_{#1 #2}}}
\renewcommand\olnotpigeon[2]{\overline{p}_{#1 #2}}
\newcommand\code[1]{\texttt{#1}\xspace}
\newcommand\uvar[2]{u_{#1 #2}}
\newcommand\vvar[2]{v_{#1 #2}}
\newcommand\wvar[2]{w_{#1 #2}}
\newcommand\yvar[1]{y_{#1}}
\newcommand\olnotyvar[1]{\overline{y_{#1}}}
\renewcommand\olnotyvar[1]{\overline{y}_{#1}}

In this appendix, we present a 
fully worked-out 
example of
symmetry breaking 
 using the \domrulelong rule for the well-known
pigeonhole principle formulas claiming that
$n+1$~pigeons can be mapped to $n$~pigeonholes in a one-to-one fashion.
\citet{Haken85Intractability} showed that resolution proofs of
unsatisfiability for such formulas requires a number of clauses that
scales exponentially with~$n$, and since conflict-driven clause
learning SAT solvers can be seen to search for 
resolution proofs \cite{BKS04TowardsUnderstanding},
this means that solvers without enhanced reasoning methods will have
to run for exponential time.
However, since pigeonhole principle formulas are fully symmetric 
with respect to both pigeons and holes, it is possible to add symmetry
breaking constraints encoding that without loss of generality
pigeon~$1$ resides in hole~$1$,
pigeon~$2$ resides in hole~$2$,
et cetera,
and once such symmetry breaking constraints have been added the
problem becomes trivial.

What we show in this appendix is how the \breakid tool can break
pigeonhole principle symmetries in a fully automated fashion, and
produce proofs of correctness for the symmetry breaking clauses that the
proof checker will accept. The proofs for these clauses
can then be concatenated with the SAT solver
proof log
(rewritten from \drat to \veripb-format) and fed to \veripb
to provide end-to-end verification for the whole solving process.

We present the symmetry breaking proof logging in the syntactic format
used by \veripb, so that the reader will be able to see what 
pseudo-Boolean proof files actually look like. 
To keep the example manageable, we consider a very small instance of the
pigeonhole principle with only $4$~pigeons and $3$~pigeonholes.
We encode the pigeonhole principle formula using variables~$\pigeon ij$,
with the intended interpretation that
$\pigeon ij$ is true if pigeon~$i$ resides in hole~$j$. 
The input for the symmetry breaking preprocessor consists
of a CNF formula which can be written in pseudo-Boolean form as
\begin{align}
  \ALformulaLine{
    \label{eq:first-pigeon-ax}
\pigeon11 + \pigeon12 + \pigeon13 \geq 1 } \\\ALformulaLine{
\pigeon21 + \pigeon22 + \pigeon23 \geq 1 } \\\ALformulaLine{
  \pigeon31 + \pigeon32 + \pigeon33 \geq 1 } \\\ALformulaLine{
  \label{eq:last-pigeon-ax}
  \pigeon41 + \pigeon42 + \pigeon43 \geq 1 } \\\ALformulaLine{
  \label{eq:first-hole-ax}
\olnotpigeon11 + \olnotpigeon21 \geq 1 } \\\ALformulaLine{
\olnotpigeon11 + \olnotpigeon31 \geq 1 } \\\ALformulaLine{
\olnotpigeon11 + \olnotpigeon41 \geq 1 } \\\ALformulaLine{
\olnotpigeon21 + \olnotpigeon31 \geq 1 } \\\ALformulaLine{
\olnotpigeon21 + \olnotpigeon41 \geq 1 } \\\ALformulaLine{
\olnotpigeon31 + \olnotpigeon41 \geq 1 } \\\ALformulaLine{
\olnotpigeon12 + \olnotpigeon22 \geq 1 } \\\ALformulaLine{
\olnotpigeon12 + \olnotpigeon32 \geq 1 } \\\ALformulaLine{
\olnotpigeon12 + \olnotpigeon42 \geq 1 } \\\ALformulaLine{
\olnotpigeon22 + \olnotpigeon32 \geq 1 } \\\ALformulaLine{
\olnotpigeon22 + \olnotpigeon42 \geq 1 } \\\ALformulaLine{
\olnotpigeon32 + \olnotpigeon42 \geq 1 } \\\ALformulaLine{
\olnotpigeon13 + \olnotpigeon23 \geq 1 } \\\ALformulaLine{
\olnotpigeon13 + \olnotpigeon33 \geq 1 } \\\ALformulaLine{
\olnotpigeon13 + \olnotpigeon43 \geq 1 } \\\ALformulaLine{
\olnotpigeon23 + \olnotpigeon33 \geq 1 } \\\ALformulaLine{
\olnotpigeon23 + \olnotpigeon43 \geq 1 } \\\ALformulaLine{
  \label{eq:last-hole-ax}
\olnotpigeon33 + \olnotpigeon43 \geq 1 } 
\end{align}
where
constraints \eqref{eq:first-pigeon-ax}--\eqref{eq:last-pigeon-ax}
represent that each pigeon resides in
at least one hole,
and constraints
\eqref{eq:first-hole-ax}--\eqref{eq:last-hole-ax}
enforce  that each hole is occupied by at most one pigeon (by
specifying for every pigeonhole and every pair of distinct pigeons
that it cannot be the case that both of these pigeons reside in the
hole). 
When \veripb is used for SAT proof logging, the proof checker
parses
CNF formulas in the standard DIMACS format used by SAT solvers,
but the CNF formula will be reprented internally in the proof checker
as a set of pseudo-Boolean constraints as above.

\subsection{%
  Starting the Proof and
  Introducing the Order}
\label{sec:sym-appendix-order}
A \veripb proof starts with a proof header (stating which version of the proof system is used) and an instruction to load the input formula 
\begin{lstlisting}
pseudo-Boolean proof version 2.0
f 22
\end{lstlisting}
where the number~$22$ specifies the number of
pseudo-Boolean constraints in the input.
All constraints in the proof file will be numbered consecutively
starting with the input constraints, and will be referred to in the
derivations by these numbers. 

To prepare for the symmetry breaking, \breakid then introduces a
pre-order that compares two binary sequences in lexicographic order.
This pre-order is defined by inserting the lines
\begin{lstlisting}
pre_order exp22|\label{exp22name}|
	vars
		left  u1 u2 u3 u4 u5 u6 u7 u8 u9 u10 u11 u12  |\label{uvars}|
		right v1 v2 v3 v4 v5 v6 v7 v8 v9 v10 v11 v12  |\label{vvars}|
		aux
	end

	def
		-1 u12 1 v12 -2 u11 2 v11 -4 u10 4 v10 -8 u9 8 v9 -16 u8 16 v8 -32 u7 32 v7 -64 u6 64 v6 -128 u5 128 v5 -256 u4 256 v4 -512 u3 512 v3 -1024 u2 1024 v2 -2048 u1 2048 v1 >= 0; |\label{orderdef}|
	end

	transitivity
		vars
			fresh_right w1 w2 w3 w4 w5 w6 w7 w8 w9 w10 w11 w12   |\label{extravars}|
		end
		proof
			proofgoal #1
				pol 1 2 + 3 +|\label{addtrans}|
			qed -1 |\label{transqed}|
		qed
	end
end
\end{lstlisting}
in the proof file.
The pre-order is 
named \texttt{exp22} on Line \ref{exp22name},
so that we can refer to
it
later in the proof.
Lines \ref{uvars} and \ref{vvars} introduce
placeholder names for the
$2 \times 12$~variables used to define the order.
Line \ref{orderdef} then provides
the exponential encoding~\eqref{eq:encode-LL}  of the
claim 
that the sequence of variables
\code{u1}, \ldots, \code{u12}
is lexicographically
smaller than or equal to
\code{v1}, \ldots, \code{v12}.

To prove that the pseudo-Boolean formula consisting of the single
constraint on Line~\ref{orderdef} defines a transitive relation, 
a third set of placeholder variables
\code{w1}, \ldots, \code{w12}
is declared on  Line~\ref{extravars}, 
and these variables are used in a formal derivation as specified
in~\eqref{eq:order-trans}
on page~\pageref{eq:order-trans}
that 
$\porderform{\vecu}{\vecv}$
and
$\porderform{\vecv}{\vecw}$
together imply
$\porderform{\vecu}{\vecw}$.
The \veripb tool has a certain preference for proofs by contradiction,
however, so the way this is established is by showing that
$\porderform{\vecu}{\vecv}$
and
$\porderform{\vecv}{\vecw}$
together with the negation
$\lnot \porderform{\vecu}{\vecw}$
is contradictory (and if the order consistent of more than one PB
constraint, one negates the constraints one by one and derives
contradiction for every negated constraint.
%
For the order on Line~\ref{orderdef} we get the three constraints
\begin{align}
  \ALlocalFormulaLine{
  -\uvar12 + \vvar12 -2 \uvar11 +2 \vvar11 -4
  \uvar10 +4 \vvar10  -\dots  &\geq 0
  \label{transone}}
\\\ALlocalFormulaLine{- \vvar12 + \wvar12 -2 \vvar11 +2 \wvar11 -4
  \vvar10 +4 \wvar10  -\dots  &\geq 0
  \label{transtwo}}
  \\\ALlocalFormulaLine{
  \uvar12 - \wvar12 +2 \uvar11 -2 \wvar11 +4 \uvar10 -4 \wvar10  +\dots  &\geq -1, \label{transthree}}
\end{align}
where we use the labels \eqref{transone}--\eqref{transthree} to
emphasize that these are not 
constraints learned in the proof system, but
are 
temporary constraints, local to the proof of transitivity.  
Line~\ref{addtrans} is an instruction in reverse polish notation to add
the constraints \eqref{transone}, \eqref{transtwo},
and~\eqref{transthree} together,
resulting (after simplification)
in the constraint  
\localFormulaLine{0 \geq -1 \eqperiod \label{transfour}}
Line \ref{transqed} then concludes the subproof, which is possible
since the last derived constraint
(which, using relative indexing, is what the number~$-1$ refers to)
is indeed a conflicting constraint.
%
Note that in order to prove that the pseudo-Boolean constraint defines
a pre-order we also need to show that the relation it defines is
reflexive,
but for a simple order like the one in this example everything in the
formula trivializes when you substitute the same variables
for both the $u$- and the $v$-sequence, and so \veripb
can figure out that reflexivity holds by itself without the help of
any written proof.

%
%
%

The proof continues with the instruction 
\begin{lstlisting}
load_order exp22 p21 p22 p23 p11 p12 p13 p31 p32 p33 p41 p42 p43 
\end{lstlisting}
which specifies that the order \texttt{exp22} should be used and
should be applied to all variables in the formula in the specified order.
The reader might find slightly odd that in the order specified above
all variables related to pigeon~$2$ are ordered before those
referring to pigeon~$1$, followed by variables mentioning pigeons~$3$
and~$4$.  In other words, in the lex-leader order, assignments are
sorted with respect to pigeon~$2$ first.
The reason for this seemingly strange choice is
that the entire proof presented in this appendix is actually generated
fully automatically by \breakid.
This tool only takes a propositional logic formula as input, and has
no information about high-level interpretations of what the different
variables mean, or which ordering of these variables might seem more
or less natural from the point of view of a human observer.

\subsection{Logging the Breaking of a
  First Symmetry}

Now that the order has been defined and has been proven to be an
order, we can start adding symmetry breaking constraints to the proof
log.  The first symmetry that \breakid considers is  
\begin{equation}
  \label{eq:first-symmetry}
  \pi :=  ( \pigeon11 \pigeon43 ) ( \pigeon12 \pigeon42 ) ( \pigeon13
  \pigeon41 )  ( \pigeon21 \pigeon23 ) ( \pigeon31 \pigeon33 )
  \eqcomma  
\end{equation}
which is the symmetry that
simultaneously swaps pigeons~$1$ and~$4$ and pigeonholes~$1$ and~$3$.
We remark that the questions of why \breakid chooses to break this
particular symmetry, and how it finds the symmetry in the first place,
are interesting and nontrivial questions, but they are not relevant
for our work. This is so since we are not trying to construct new symmetry
breaking tools, but to design proof logging methods to certify the
correctness of the symmetry breaking constraints added by existing
tools. From the point of view of proof logging, it is mostly
irrelevant to dwell on the possible reasons why a specific
symmetry was chosen,  or how it was found. Instead,
we can just take the symmetry as a given and focus on proof
logging for the symmetry breaking constraints.

As explained in our discussion of symmetry breaking in
\cref{sec:symmetrybreaking}, in order to break
a given symmetry we first use the dominance rule to  derive a
pseudo-Boolean exponential encoding 
of a lex-leader by the proof lines
\begin{lstlisting}
dom -1 p43 1 p11 -2 p42 2 p12 -4 p41 4 p13 -8 p33 8 p31 -32 p31 32 p33 -64 p13 64 p41 -128 p12 128 p42 -256 p11 256 p43 -512 p23 512 p21 -2048 p21 2048 p23 >= 0 ;  p11 -> p43 p12 -> p42 p13 -> p41 p21 -> p23 p23 -> p21 p31 -> p33 p33 -> p31 p41 -> p13 p42 -> p12 p43 -> p11 ; |\label{line:dom1}| begin 
	proofgoal #2|\label{line:domproof1}|
		pol -1 -2 + |\label{line:dom:add}|
	qed -1|\label{line:domproof2}|
\end{lstlisting}
which we will now discuss in more detail.

Line~\ref{line:dom1} above
says that the dominance rule should be used to derive
(and add to the derived set~$\derivedset$)
the constraint~\eqref{eq:CLL},
which expresses that the assignment to the variables should not become
lexicographically smaller when the symmetry is applied.
The variables related to pigeon~$2$ occur with the highest
coefficients, since they were given the highest priority when the
order was instantiated.
Notice that the pseudo-Boolean constraint specified on
Line~\ref{line:dom1} contains two occurrences of every variable.
This is because the constraint has been generated automatically, and
\veripb will perform cancellations to simplify the constraint before
storing it internally.

An application of the dominance rule needs to provide more information
to \veripb than just the  constraint to be derived by dominance,
however. The proof should also specify
\begin{itemize}
\item
  the witness, which in this case is just the symmetry, written as
  a substitution at  the end of Line~\ref{line:dom1}, and  
\item
  explicit derivations (\emph{subproofs})  for all constraints
  (\emph{proof goals})
  where the proof checker
  cannot automatically figure out such a  derivation---in this case,  we
  have a single such proof goal that is taken care of on
  Lines~\ref{line:domproof1}--\ref{line:domproof2}. 
\end{itemize}
Let us discuss in more detail what subproofs are required. In order 
to apply the dominance rule
to derive the constraint~$\constrc$ using witness~$\witness$,
we need to establish that derivations
\begin{subequations}
  \begin{alignat}{1}
      & \coreset \union 
      \derivedset{} \union 
      \set{\negc{\constrc}} \cpderives \subst{\coreset}{\witness}
      \union
      \porderformdom
      \union
      \objwitnessconditionset\label{eq:dom:one:repeatedPH}
      \\
      & \coreset
      \union \derivedset
      \union  \set{\negc{\constrc}} \union  \porderformdomneg
     \cpderives
      \bot
      \label{eq:dom:two:repeatedPH}
\end{alignat}
\end{subequations}
exist.
In the concrete case of the first symmetry breaking constraint for the
formula consisting of the clauses
\eqref{eq:first-pigeon-ax}--\eqref{eq:last-hole-ax}, 
\veripb will generate the following numbered list of derivations
(or \emph{proof obligations}) that it expects to see:
\begin{enumerate}
\item
  $\coreset \union 
  \derivedset{} \union 
  \set{\negc{\constrc}} \cpderives \porderformdom$ 
\item
  $\coreset \union 
  \derivedset{} \union 
  \set{\negc{\constrc}
  }
  \union
  \porderformdomneg
  \cpderives
  \bot
  $
\item
  $
  \coreset \union 
  \derivedset{} \union 
  \set{\negc{\constrc}} \cpderives \objwitnesscondition
  $ 
\item [4--25.]
  $
  \coreset \union 
  \derivedset{} \union
  \set{\negc{\constrc}} \cpderives \constrd$
  for each 
  $\constrd \in \subst{\coreset}{\witness}$.
\end{enumerate}
Except for the second proof obligation, all of them can be proved
automatically.
Perhaps the simplest case is the third obligation,  which is trivial
since we are dealing with a decision problem for which the objective
function is the constant
$f=0$.
And since $\witness$ is a syntactic symmetry of the set of core
constraints~$\coreset$ (which at this point is the CNF formula in the
input),
the proof obligations $4$--$25$ are also trivial.

Let us describe how the subproof for the second proof obligation on
Lines~\ref{line:domproof1}--\ref{line:domproof2}
is checked by \veripb.
First, \veripb makes available in the subproof the constraint $\lnot \constrc$,
which (after simplification) equals
\MLformulaLine{%
  255 \cdot \pigeon11 +126 \cdot \pigeon12 +60 \cdot \pigeon13 +1536
  \cdot \pigeon21 +1536 \cdot \olnotpigeon23
   \\
 \qquad 
 + 24 \cdot \pigeon31 +24
  \cdot \olnotpigeon33 +60 \cdot \olnotpigeon41 +126 \cdot \olnotpigeon42 +255
  \cdot \olnotpigeon43 \geq 2002\label{constr23} \eqcomma
}
giving this constraint the next available number~$23$ as
constraint reference ID.
Next, \veripb makes available the constraint
$\porderformdomneg$, which, after simplification, equals  
\MLformulaLine{255 \cdot \olnotpigeon11 +126 \cdot \olnotpigeon12 +60
  \cdot \olnotpigeon13 +1536 \cdot \olnotpigeon21 +1536 \cdot  \pigeon23
  \\
  \qquad
  + 24
  \cdot \olnotpigeon31 +24 \cdot \pigeon33 +60 \cdot \pigeon41 +126 \cdot \pigeon42 +255
  \cdot \pigeon43 \geq 2001 \eqperiod
  \label{constr24}
}
As explained above, all constraints are referred to by numbers, but
as we have already seen 
\veripb also allows relative indexing.
Therefore, Line~\ref{line:dom:add} simply states
in reverse polish notation 
that the two most recently 
generated constraints (i.e., \eqref{constr24} and~\eqref{constr23})
should be added, which yields the inequality
\begin{equation}
  255 +126 +60  +1536 +1536  +24  +24  +60 +126  +255  
  \geq
  2002+2001
\end{equation}
or, in simplified form,
\formulaLine{%
  0 \geq 1
\eqcomma
  \label{constr25}}
which   
is contradictory.
Line~\ref{line:domproof2} ends the proof by giving the (relative) identifier of the contradicting constraint that was just derived. 
Finally, when this proof is finished and all other proof obligation
have been automatically checked,
the desired symmetry breaking constraint
 \MLformulaLine{%
- \pigeon43 +1 \cdot \pigeon11 -2 \cdot \pigeon42 +2 \cdot \pigeon12 -4 \cdot \pigeon41 +4
\cdot \pigeon13 -8 \cdot \pigeon33 +8 \cdot \pigeon31 
\\
\qquad
- 32 \cdot \pigeon31 +32
\cdot \pigeon33 -64 \cdot \pigeon13 +64 \cdot \pigeon41  
-128 \cdot \pigeon12 +128 \cdot \pigeon42 
\\
\qquad
- 256 \cdot \pigeon11 +256 \cdot \pigeon43
- 512 \cdot \pigeon23 +512 \cdot \pigeon21 -2048 \cdot \pigeon21 +2048
\cdot \pigeon23 
\geq 0
\label{constr26}
}  
is added 
to the derived set~$\derivedset$.
Once all proof obligations have been taken care of, the
constraints~\eqref{constr23}, \eqref{constr24},
and~\eqref{constr25} are
automatically erased by the proof checker 
and can no longer be referred to
without triggering an error, since they are only relevant for the
derivations in the  subproof.

The inequality~\eqref{constr26} is a lex-leader constraint for the
symmetry we are considering, but this constraint is nothing that the SAT solver
knows about or can understand, since the solver only operates with
disjunctive clauses. What happens next in the proof, therefore, is
that the constraint~\eqref{constr26} is converted to a set of clauses
on the form \eqref{eq:init}--\eqref{eq:TsLast} 
that the solver can use for symmetry breaking.

First, \eqref{eq:init} is added with the redundance rule with the instruction 
\begin{lstlisting}
red 1 y0 >= 1 ; y0 -> 1
\end{lstlisting}
which contains both the constraint 
\formulaLine{\yvar0 \geq 1} and the witness $\yvar0\mapsto 1$ to apply
the redundance rule. All proof obligations are checked automatically
by \veripb.

In our chosen lexicographic order, the first variable is
$\pigeon21$. Therefore, the first clause for symmetry breaking is 
\begin{equation}
  \label{eq:symmetry-breaking-clause-1}
  \olnotpigeon21 \lor \pi(\pigeon21) \synteq \olnotpigeon21 \lor
  \pigeon23 
  \eqcomma  
\end{equation}
which is a simplification of~\eqref{eq:breaking}, omitting the
trivially true variable~$\yvar0$.  
This clause is implied by the constraint \eqref{constr26},
which can be seen by weakening away all other variables in it (\ie
adding literal axioms to cancel them).
Instead of providing an explicit derivation, we can 
simply add
the clause~\eqref{eq:symmetry-breaking-clause-1}
it with the reverse unit propagation rule and let \veripb figure out
the details, which we do by inserting the line
\begin{lstlisting}
rup 1 ~p21 1 p23 >= 1 ;
\end{lstlisting}
that derives the desired constraint
\formulaLine{%
  \olnotpigeon21 + \pigeon23 \geq 1
  \eqperiod 
  \label{constr28}
}
Next, the fresh variable $\yvar1$ is introduced with four redundance
rule applications  
\begin{lstlisting}
red 1  p23 1 ~y0  1 y1  >= 1 ; y1 -> 1
red 1 ~p21 1 ~y0  1 y1  >= 1 ; y1 -> 1
red 1 ~y1  1  y0        >= 1 ; y1 -> 0
red 1 ~y1  1 ~p23 1 p21 >= 1 ; y1 -> 0
\end{lstlisting}
with witnesses mapping~$\yvar1$ to either
$0$ or to~$1$, resulting in the constraints 
\begin{align}
\ALformulaLine{ \pigeon23 + \olnotyvar0 +  \yvar1  \geq 1 }
\\\ALformulaLine{ \olnotpigeon21 + \olnotyvar0 + \yvar1   \geq 1  }
  \\\ALformulaLine{
\olnotyvar1 +  \yvar0  \geq 1
\label{constr31}}
\\\ALformulaLine{ \olnotyvar1 + \olnotpigeon23 + \pigeon21   \geq 1\label{constr32} }
\end{align}
corresponding to the 
clauses~\eqref{eq:Ts1}--\eqref{eq:TsLast}.

Before repeating this procedure for the next variable~$\yvar2$, we use the
recently derived constraints to cancel out the dominant terms in
constraint \eqref{constr26} with the instructions  
\begin{lstlisting}
pol 26 32 2048 * +
del id 26
\end{lstlisting}
The first line above adds $2048$ times \eqref{constr32}
to \eqref{constr26}, yielding
\MLformulaLine{%
  255 \cdot \olnotpigeon11 +126 \cdot \olnotpigeon12 +60 \cdot \olnotpigeon13+ 512
  \cdot \pigeon21 +512 \cdot \olnotpigeon23 
  \\
  \qquad
  + 24 \cdot \olnotpigeon31 +24
  \cdot \pigeon33 +60 \cdot \pigeon41 +126 \cdot \pigeon42 +255 \cdot \pigeon43 +2048
  \cdot \olnotyvar1 \geq 977
  \label{constr33}
\eqperiod
}
The second line deletes \eqref{constr26} from
$\derivedset$ since it will no longer be required.  

The next variable in lexicographic order after~$\pigeon21$
is~$\pigeon22$. However, since our symmetry~$\pi$  maps 
$\pigeon22$ to itself, no symmetry breaking clauses are added for it.
The variable after that in the ordering is $\pigeon23$, which is mapped to
$\pigeon21$, resulting in the (conditional on~$y_1$) symmetry breaking
constraint  
\formulaLine{\olnotyvar1 + \olnotpigeon23 + \pigeon21 \geq 1}
obtained by adding the line
\begin{lstlisting}
rup 1 ~y1 1 ~p23 1 p21 >= 1 ;
\end{lstlisting}
to the proof.
After this, the next fresh variable~$\yvar2$ is introduced
in the same way as~$\yvar1$ 
using the redundance rule in the instructions
\begin{lstlisting}
red 1  p21 1 ~y1  1 y2   >= 1 ; y2 -> 1
red 1 ~p23 1 ~y1  1 y2   >= 1 ; y2 -> 1
red 1 ~y2  1  y1         >= 1 ; y2 -> 0
red 1 ~y2  1 ~p21 1 p23  >= 1 ; y2 -> 0
\end{lstlisting}
yielding the clauses
\begin{align}
\ALformulaLine{\pigeon21 + \olnotyvar1 + \yvar2  \geq 1}
\\\ALformulaLine{\olnotpigeon23 + \olnotyvar1 + \yvar2 \geq 1}
\\\ALformulaLine{\yvar1 + \olnotyvar2 \geq 1 \label{constr37}}
\\\ALformulaLine{\olnotpigeon21 + \pigeon23 + \olnotyvar2 \geq 1\label{constr38}}
\end{align}
As before, our pseudo-Boolean symmetry breaking constraint is
simplified with  
\begin{lstlisting}
pol 33 38 512 * +
del id 33
\end{lstlisting}
where the first instruction again cancels out the dominant terms
in~\eqref{constr33},
replacing them by a $y$\nobreakdash-variable,
to express a conditional symmetry
breaking constraint, resulting in  
\MLformulaLine{%
  255 \cdot \olnotpigeon11 +126 \cdot \olnotpigeon12 +60 \cdot \olnotpigeon13
  +24 \cdot \olnotpigeon31 +24 \cdot \pigeon33
  \\ \qquad
 + 60 \cdot \pigeon41 +126 \cdot \pigeon42 +255
  \cdot \pigeon43 +2048 \cdot \olnotyvar1 +512 \cdot \olnotyvar2 
  \geq   465
\label{constr39}
} 
and the second instruction deletes constraint~\eqref{constr33}.

The next variable in our chosen order is~$\pigeon 11$.
Since~$\pigeon 11$ is mapped to $\pigeon43$, we want to have the
symmetry breaking clause
\formulaLine{%
  \olnotpigeon11 + \pigeon43 + \olnotyvar2 \geq 1
  \eqcomma
\label{constr40}
}
which can be derived by the proof line
\begin{lstlisting}
rup 1 ~y2 1 ~p11 1 p43 >= 1 ;
\end{lstlisting}
To see that the clause~\eqref{constr40} indeed follows by reverse unit
propagation, consider what happens if all literals in the clause are
set to false.  Whenever $\yvar2$ is true, so are $\yvar1$
and~$\yvar0$ (by \eqref{constr37} and~\eqref{constr31}). If furthermore
$\pigeon43$~is false  and $\pigeon11$~is true,
then \eqref{constr39} simplifies to  
\begin{equation}
 126 \cdot \olnotpigeon12 +60 \cdot \olnotpigeon13 +24 \cdot \olnotpigeon31 +24
  \cdot \pigeon33 +60 \cdot \pigeon41 +126 \cdot \pigeon42  \geq 465
\eqcomma
\end{equation}
which can never  
be satisifed since the coefficients on the left only add up to $420$.  

The process of introducing a new variable is the same as before,
appending the 
constraints
\begin{align}
\ALformulaLine{\pigeon43 + \olnotyvar2 + \yvar3 \geq 1}
\\\ALformulaLine{\olnotpigeon11 + \olnotyvar2 + \yvar3 \geq 1}
\\\ALformulaLine{\yvar2 + \olnotyvar3 \geq 1}
\\\ALformulaLine{\pigeon11 + \olnotpigeon43 + \olnotyvar3 \geq 1\label{constr44}}
\end{align}
to the derived set~$\derivedset$.
The last constraint
can then again be used to simplify \eqref{constr39} with the instruction
\begin{lstlisting}
pol 39 44 256 * +
\end{lstlisting}
yielding the constraint
\MLformulaLine{%
\pigeon11 +126 \cdot \olnotpigeon12+ 60 \cdot \olnotpigeon13 +24 \cdot \olnotpigeon31+
24 \cdot \pigeon33+ 60 \cdot \pigeon41 
\\
\qquad
+126 \cdot \pigeon42 + 
\olnotpigeon43
+ 2048
\cdot \olnotyvar1 +512 \cdot \olnotyvar2+ 256 \cdot \olnotyvar3 \geq 211
} 
This process continues in the same way for all variables
in the  lexicographic ordering that are not mapped to themselves, or
\emph{stabilized}, 
by~$\pi$.

\subsection{Logging the Breaking of More Symmetries}

As the \breakid execution continues, more symmetries are detected and
broken, and the corresponding symmetry breaking clauses are derived
in the proof. The process is completely analogous to what is described
above for the first symmetry.
For our concrete toy example formula with $4$~pigeons and $3$~holes,
\breakid next breaks the symmetries
\begin{subequations}
\begin{align}
 &( \pigeon11 \pigeon12 ) ( \pigeon21 \pigeon32 ) ( \pigeon22 \pigeon31 ) ( \pigeon23 \pigeon33 ) ( \pigeon41 \pigeon42 ) \\ 
 &( \pigeon21 \pigeon11 ) ( \pigeon22 \pigeon12 ) ( \pigeon23 \pigeon13 ) \\
 &( \pigeon11 \pigeon31 ) ( \pigeon12 \pigeon32 ) ( \pigeon13 \pigeon33 ) \\
 &( \pigeon31 \pigeon41 ) ( \pigeon32 \pigeon42 ) ( \pigeon33 \pigeon43 ) \\
 &( \pigeon21 \pigeon22 ) ( \pigeon11 \pigeon12 ) ( \pigeon31 \pigeon32 ) ( \pigeon41 \pigeon42 ) \\
 &( \pigeon22 \pigeon23 ) ( \pigeon12 \pigeon13 ) ( \pigeon32 \pigeon33 ) ( \pigeon42 \pigeon43 )
\end{align}
\end{subequations}
in the order listed
(where, just to help decode the notation, 
the first of these symmetries swaps holes $1$ and~$2$ and
simultaneously swaps pigeons $2$ and~$3$).
It is crucial to note here that when we break later symmetries in this
list, we do not have to worry about previously added symmetry breaking
clauses. There is no interaction between the different symmetry
breaking derivations, since all the symmetry breaking constraints are
added to the derived set~$\derivedset$, whereas the core
set~$\coreset$ containing the proof obligation for the dominance rule
applications only consists of the symmetric input formula.

\subsection{Proof for the Preprocessed Formula}

The symmetry breaking performed here only serves as a
preprocessing
step for the solving; it is not a complete proof leading to a
contradiction.
A revision of the \veripb proof format
used in the SAT Competition 2023
\cite{BMMNOT23DocumentationVeriPB},
and currently under further development, 
will allow proofs for preprocessing steps, establishing that the
formulas before and after preprocessing are equisatisfiable
(or, for optimization problems, that the two formulas have the same
optimal value for the objective function).

Very briefly, the way a proof for preprocessing will work is that at
the end of the proof all constraints to be output should be moved to
the core set~$\coreset$, after which the derived set~$\derivedset$ is
emptied.  The constraints in~$\coreset$ then constitute the output
formula, which is guaranteed to be equisatisfiable to the input
formula if all deletion steps are instances of checked deletion as
described in \cref{def:deletion-rule} in \cref{sec:del}.
We remark that this is quite similar to the concept of
\emph{finalization}. 

%
%
%
%
%
%

For our toy example, in order to prove equisatisfiability of the
formula after having added symmetry breaking clauses one should
end the proof with the lines  
\lstset{firstnumber=492}
\begin{lstlisting}
 core id 27 28 29 30 31 32 34 35 36 37 38 40 41 42 43 44 46 47 48 49 50 52 57 58 59 60 61 62 64 65 66 67 68 70 71 72 73 74 76 77 78 79 80 82 87 88 89 90 91 92 94 95 96 97 98 100 105 106 107 108 109 110 112 113 114 115 116 118 123 124 125 126 127 128 130 131 132 133 134 136 141 142 143 144 145 146 148 149 150 151 152 154 155 156 157 158 160 165 166 167 168 169 170 172 173 174 175 176 178 179 180 181 182 184 |\label{line:movecore}|
 output EQUISATISFIABLE PERMUTATION |\label{line:outputType}|
 * #variable= 39 #constraint=136 |\label{line:outputHeader}|
 1 2 3 4 5 6 7 8 9 10 11 12 13 14 15 16 17 18 19 20 21 22 27 28 29 30 31 32 34 35 36 37 38 40 41 42 43 44 46 47 48 49 50 52 57 58 59 60 61 62 64 65 66 67 68 70 71 72 73 74 76 77 78 79 80 82 87 88 89 90 91 92 94 95 96 97 98 100 105 106 107 108 109 110 112 113 114 115	 116 118 123 124 125 126 127 128 130 131 132 133 134 136 141 142 143 144 145 146 148 149 150 151 152 154 155 156 157 158 160 165 166 167 168 169 170 172 173 174 175 176 178 179 180 181 182 184 |\label{line:constraints}|
 conclusion NONE |\label{line:conclusion}|
 end pseudo-Boolean proof
\end{lstlisting}
%
The intended semantics here is that
Line~\ref{line:movecore}
will move
all the symmetry breaking clauses that have been derived to the core
set~$\coreset$.
Line~\ref{line:outputType} claims that the input and the set of
constraints that is now in the core are 
equisatisfiable.
At the time of
writing, the public version of \veripb does not yet support
non-trivial output statements, (i.e., does not provide support for
checking that
the list of constraint identifiers are
precisely the set of constraints in the
core set at the of the proof), and only 
\verb+output NONE+
is supported in the version of \veripb
used in the SAT Competition 2023
\cite{BMMNOT23DocumentationVeriPB}.
However, support for outputting formulas at the end of the proof is
currently being developed and is expected to be available in a not too
distant future.
Line~\ref{line:outputHeader} states that 
the output formula
has $39$ variables and $136$ constraints and
Line~\ref{line:constraints} lists the IDs of those $136$ constraints.  
Finally, the conclusion line~\ref{line:conclusion}
states that neither satisfiability nor unsatisfiability can be
concluded from this proof.  
More details about the intended format and semantics of the output
section in \veripb proofs can be found in the technical
documentation for the SAT Competition 2023
\cite{BMMNOT23DocumentationVeriPB},
but it should be emphasized again that these features of the proof
checker are currently under development and minor changes could and
should be expected on the way from the current tentative specification
to the final finished product.

\vskip 0.2in
\bibliography{refLocal,refArticles,refOther,refBooks,krrlib,paper}
\bibliographystyle{theapa}

\end{document}